\definecolor{citrine}{rgb}{0.89, 0.82, 0.04}
\definecolor{blued}{RGB}{70,197,221}
\newcommand{\todoa}[1]{\todo[color=green, inline]{\small #1}}
\newlength{\minipagewidth}
\newlength{\minipagewidthx}
\newcommand{\bookboxx}[1]{\small
\par\medskip\noindent
\framebox[0.49\textwidth]{
\begin{minipage}{0.47\dimexpr\textwidth-\parindent\relax} {#1} \end{minipage} } \par\medskip }
\newtheorem{assumption}{Assumption}
\newtheorem{proposition}{Proposition}
\newtheorem{definition}{Definition}
\newtheorem{theorem}{Theorem}
\newtheorem{lemma}{Lemma}
\newtheorem{corollary}{Corollary}
\newcommand{\M}{\mathcal M}
\newcommand{\A}{\mathcal A}
\renewcommand{\O}{\mathcal O}
\newcommand{\calS}{\mathcal S}
\DeclareMathOperator*{\Argmax}{Arg\,max}
\DeclareMathOperator*{\subExp}{subExp}
\newcommand{\Meq}{\text{eq}}
\newcommand{\ucrl}{{\small\textsc{UCRL}}\xspace}
\newcommand{\ucrlsmdp}{{\small\textsc{UCRL-SMDP}}\xspace}
\newcommand{\wt}[1]{\widetilde{#1}}
\newcommand{\wh}[1]{\widehat{#1}}
\newcommand{\wb}[1]{\overline{#1}}
\begin{document}

\twocolumn[

\aistatstitle{Exploration--Exploitation in MDPs with Options}

\aistatsauthor{ Ronan Fruit \And Alessandro Lazaric}

\aistatsaddress{ Inria Lille - SequeL Team \And Inria Lille - SequeL Team} ]

\begin{abstract}
While a large body of empirical results show that temporally-extended actions and options may significantly affect the learning performance of an agent, the theoretical understanding of how and when options can be beneficial in online reinforcement learning is relatively limited. In this paper, we derive an upper and lower bound on the regret of a variant of UCRL using options. While we first analyze the algorithm in the general case of semi-Markov decision processes (SMDPs), we show how these results can be translated to the specific case of MDPs with options and we illustrate simple scenarios in which the regret of learning with options can be \textit{provably} much smaller than the regret suffered when learning with primitive actions.
\end{abstract}

\vspace{-0.05in}
\section{Introduction}
\vspace{-0.05in}

The option framework~\citep{sutton1999between} is a simple yet powerful model to introduce temporally-extended actions and hierarchical structures in reinforcement learning (RL)~\citep{sutton1998introduction}. An important feature of this framework is that Markov decision process (MDP) planning and learning algorithms can be easily extended to accommodate options, thus obtaining algorithms such as option value iteration and $Q$-learning~\citep{sutton1999between}, LSTD~\citep{sorg2010linear}, and actor-critic~\citep{bacon2015the-option-critic}. Temporally extended actions are particularly useful in high dimensional problems that naturally decompose into a hierarchy of subtasks. For instance, \citet{DBLP:journals/corr/TesslerGZMM16} recently obtained promising results by combining options and deep learning for lifelong learning in the challenging domain of Minecraft. A large body of the literature has then focused on how to automatically construct options that are beneficial to the learning process within a single task or across similar tasks (see e.g.,~\citep{mcgovern2001automatic,menache2002q-cut,simsek2004using,Castro:2011:ACT:2341664.2341685,conf/ewrl/LevyS11}). An alternative approach is to design an initial set of options and optimize it during the learning process itself (see e.g., interrupting options \citealt{DBLP:conf/icml/MannMM14} and options with exceptions \citealt{Sairamesh:2011:OE:2341664.2341687}). Despite the empirical evidence of the effectiveness of most of these methods, it is well known that options may as well worsen the performance w.r.t.\ learning with primitive actions~\citep{jong2008the-utility}. Moreover, most of the proposed methods are heuristic in nature and the theoretical understanding of the actual impact of options on the learning performance is still fairly limited. Notable exceptions are the recent results of~\citet{mann2014scaling} and~\citet{brunskill2014pac-inspired}. Nonetheless, \citet{mann2014scaling} rather focus on a batch setting and they derive a sample complexity analysis of approximate value iteration with options. Furthermore, the PAC-SMDP analysis of~\citet{brunskill2014pac-inspired} describes the performance in SMDPs but it cannot be immediately translated into a sample complexity of learning with options in MDPs. 

In this paper, we consider the case where a fixed set of options is provided and we study their impact on the learning performance w.r.t. learning without options. In particular, we derive the first regret analysis of learning with options. Relying on the fact that using options in an MDP induces a semi-Markov decision process (SMDP), we first introduce a variant of the \ucrl algorithm~\citep{Jaksch10} for SMDPs and we upper- and lower-bound its regret (sections~\ref{sec:smdp.ucrl} and~\ref{sec:analysis}). While this result is of independent interest for learning in SMDPs, its most interesting aspect is that it can be translated into a regret bound for learning with options in MDPs and it provides a first understanding on the conditions sufficient for a set of options to reduce the regret w.r.t.\ learning with primitive actions (Sect.~\ref{sec:smdp2mdp}). Finally, we provide an illustrative example where the empirical results support the theoretical findings (Sect.~\ref{sec:exp}).
 
\vspace{-0.05in}
\section{Preliminaries}
\vspace{-0.05in}

\textbf{MDPs and options.} A finite MDP is a tuple $M = \big\{ \mathcal{S}, \mathcal{A}, p, r \big\}$ where $\mathcal{S}$ is a finite set of states, $\mathcal{A}$ is a finite set of actions, $p(s'|s,a)$ is the probability of transitioning from state $s$ to state $s'$ when action $a$ is taken, $r(s,a,s')$ is a distribution over rewards obtained when action $a$ is taken in state $s$ and the next state is $s'$. A stationary deterministic policy $\pi : \calS \rightarrow \A$ maps states to actions.
A (Markov) option is a tuple $o = \big\lbrace \mathcal{I}_o, \beta_o , \pi_o \big\rbrace$ where $\mathcal{I}_o \subset \mathcal{S} $ is the set of states where the option can be initiated, $\beta_o: \mathcal{S} \rightarrow [0,1] $ is the probability distribution that the option ends in a given state, and $\pi_o : \mathcal{S} \rightarrow \mathcal{A}$ is the policy followed until the option ends. Whenever the set of primitive actions $\A$ is replaced by a set of options $\O$, the resulting decision process is no longer an MDP but it belongs to the family of semi-Markov decision processes (SMDP).

\begin{proposition}\label{prop:smdp}[\citealt{sutton1999between}]
For any MDP $M$ and a set of options $\O$, the resulting decision process is an SMDP $M_{\O} = \big\{\mathcal{S}_{\O}, \mathcal{O}, p_{\O}, r_{\O}, \tau_{\O} \big\}$, where $\mathcal{S}_{\O} \subseteq \mathcal{S}$ is the set of states where options can start and end,
\begin{align*}
\calS_\O = \Big(\cup_{o\in\O} \mathcal{I}_o\Big) \bigcup \Big(\cup_{o\in\O} \{s: \beta_o(s) > 0\}\Big),
\end{align*}
$\O$ is the set of available actions, $p_{\O}(s,o,s')$ is the probability of transition from $s$ to $s'$ by taking the policy $\pi_o$ associated to option $o$, i.e., 
\begin{align*}
p_{\O}(s,o,s') = \sum_{k=1}^\infty p(s_k = s' | s,\pi_o) \beta_o(s'),
\end{align*}
where $p(s_k = s' | s,\pi_o)$ is the probability of reaching state $s'$ after exactly $k$ steps following policy $\pi_o$, $r_{\O}(s,o,s')$ is the distribution of the cumulative reward obtained by executing option $o$ from state $s$ until interruption at $s'$, and $\tau_{\O}(s,o,s')$ is the distribution of the holding time (i.e., number of primitive steps executed to go from $s$ to $s'$ by following $\pi_o$).
\end{proposition}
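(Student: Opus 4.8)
The plan is to verify the two defining features of an SMDP: (i) the correct identification of the decision-epoch state space $\calS_\O$, and (ii) the semi-Markov property, namely that at every decision epoch the joint law of the next decision state, the accumulated reward, and the holding time depends only on the current state and the selected option. The explicit kernels $p_\O$, $r_\O$, and $\tau_\O$ then follow by unrolling the execution of a single option as a stopped sub-trajectory of the underlying MDP $M$.

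First I would set up the primitive-level process. Fix a starting state $s$ and a chosen option $o$. Executing $o$ amounts to following the stationary Markov policy $\pi_o$ in $M$, generating $s = s_0, s_1, s_2, \ldots$ according to $p(\cdot \mid s_{i-1}, \pi_o(s_{i-1}))$, and at each visited state $s_i$ declaring termination independently with probability $\beta_o(s_i)$. Let $K = \inf\{k \geq 1 : \text{termination is declared at } s_k\}$ denote the holding time; this is a stopping time with respect to the primitive filtration, and the terminal state $s' = s_K$ is exactly the next decision state. By construction of $\calS_\O$ as the union of the initiation sets and the possible termination states, both $s$ and $s'$ lie in $\calS_\O$, so the induced process indeed lives on $\calS_\O$.

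Next I would establish the semi-Markov property. Because $p$, $\pi_o$, and $\beta_o$ all depend only on the current primitive state, the stopped trajectory $(s_1, \ldots, s_K)$ generated by $o$ is a homogeneous Markov chain killed at rate $\beta_o$, whose law is determined entirely by the starting state $s$. Hence, conditioned on $(s_n = s, o_n = o)$ at decision epoch $n$, the triple consisting of the next decision state $s_{n+1}$, the cumulative reward, and the holding time $K$ is independent of everything preceding epoch $n$; invoking the strong Markov property of $M$ at the stopping time $K$ makes this rigorous and shows that consecutive option executions chain together into a bona fide SMDP trajectory.

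Finally I would read off the kernels. Summing over all durations $k$ and all primitive paths of length $k$ from $s$ to $s'$ that survive the first $k-1$ termination checks and then terminate at step $k$ yields $p_\O(s,o,s') = \sum_{k=1}^{\infty} p(s_k = s' \mid s, \pi_o)\,\beta_o(s')$, where $p(s_k = s' \mid s, \pi_o)$ aggregates the path and survival probabilities; the induced laws of the accumulated reward and of $K$ then define $r_\O$ and $\tau_\O$. The main obstacle is not the bookkeeping but \emph{well-posedness}: the infinite sum over $k$, together with the reward and holding-time distributions, is only guaranteed to be well-defined (and to yield finite expected holding times) when every option terminates almost surely in finite time from every reachable state. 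I would therefore isolate this properness requirement on $\O$ and argue that under it the dominated-convergence arguments needed for the sums and expectations go through.
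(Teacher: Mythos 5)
Your proposal is correct: the paper itself gives no proof of this proposition, importing it directly from \citet{sutton1999between}, and your argument is precisely the standard construction from that reference (stopped sub-trajectories, the strong Markov property at the termination time, and the admissibility assumption that options terminate almost surely, which the paper states explicitly just after the proposition and which, in the finite-state setting, also yields finite expected holding times via the absorbing-chain argument the paper later uses in Lemma~\ref{lem:MDPwithOptions}). Nothing is missing.
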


Throughout the rest of the paper, we only consider an ``admissible'' set of options $\O$ such that all options terminate in finite time with probability 1 and in all possible terminal states there exists at least one option that can start, i.e., $\cup_{o\in\O} \{s: \beta_o(s) > 0\} \subseteq \cup_{o\in\O} \mathcal{I}_o$. We further assume that the resulting SMDP $M_\O$ is communicating i.e. has a finite diameter (see Def. \ref{def:Diameter}). Finally, we notice that a stationary deterministic policy constructed on a set of options $\O$ may result into a non-stationary policy on the set of actions $\A$.

\textbf{Learning in SMDPs.} Relying on this mapping, we first study the exploration-exploitation trade-off in a generic SMDP. A thorough discussion on the implications of the analysis of learning in SMDPs for the case of learning with options in MDPs is reported in Sect.~\ref{sec:smdp2mdp}.
For any SMDP $M = \big\{\mathcal{S}, \mathcal{A}, p, r, \tau \big\}$, we denote by $\overline{\tau}(s,a,s')$ (resp. $\overline{r}(s,a,s')$) the expectation of $\tau(s,a,s')$ (resp. $r(s,a,s')$) and by $\overline{\tau}(s,a) = \sum_{s' \in \mathcal{S}} \overline{\tau}(s,a,s')p(s'|s,a)$ (resp. $\overline{r}(s,a) = \sum_{s' \in \mathcal{S}} \overline{r}(s,a,s')p(s'|s,a)$) the expected holding time (resp. cumulative reward) of action $a$ from state $s$. In the next proposition we define the average-reward performance criterion and we recall the properties of the optimal policy in SMDPs.

\begin{proposition}\label{prop:optimal.policy}
Denote $N(t) = \sup \big\{n \in \mathbb{N}, \ \sum_{i=1}^{n}{\tau_i} \leq t \big\}$ the number of decision steps that occurred before time $t$. For any policy $\pi$ and $s\in \mathcal{S}$:
\begin{align}\label{eq:rho}
\begin{split}
\overline{\rho}^\pi(s) \overset{def}{=} \limsup_{t \rightarrow + \infty} \mathbb{E}^\pi \bigg[ \frac{\sum_{i=1}^{N(t)}{r_i}}{t} \bigg| s_0 =s \bigg] \\
\underline{\rho}^\pi(s) \overset{def}{=} \liminf_{t \rightarrow + \infty} \mathbb{E}^\pi \bigg[ \frac{\sum_{i=1}^{N(t)}{r_i}}{t} \bigg| s_0 =s \bigg].
\end{split}
\end{align}
If $M$ is communicating and the expected holding times and reward are finite, there exists a stationary deterministic optimal policy $\pi^*$ such that for all states $s$ and policies $\pi$, $\underline{\rho}^{\pi^*}(s) \geq \overline{\rho}^\pi(s)$ and $\overline{\rho}^{\pi^*}(s) = \underline{\rho}^{\pi^*}(s) = \rho^*$.
\end{proposition}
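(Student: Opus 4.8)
The plan is to reduce the average-reward SMDP problem to an equivalent average-reward MDP and then invoke the classical theory of communicating MDPs. I first treat stationary deterministic policies. Fixing such a $\pi$, the states visited at successive decision epochs form a Markov chain with kernel $p(\cdot|s,\pi(s))$, and the rewards and holding times accumulated between consecutive epochs define a Markov renewal process. Since the expected holding times are finite and bounded below (each option runs for at least one primitive step, so $\overline\tau(s,\pi(s))\ge 1$), the renewal--reward theorem applied on each recurrent class shows that the $\limsup$ and $\liminf$ in~\eqref{eq:rho} actually coincide: $\overline\rho^\pi(s)=\underline\rho^\pi(s)$ is a genuine limit, equal to the ratio of the expected per-epoch reward to the expected per-epoch holding time under the stationary distribution of the embedded chain.

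Next I construct the equivalent MDP via the data transformation of Schweitzer. Choosing any $\eta$ with $0<\eta\le \min_{s,a}\overline\tau(s,a)$ (positive by the lower bound above), define an MDP $\widetilde M$ on the same state--action space by
\[
\widetilde p(s'|s,a)=\frac{\eta}{\overline\tau(s,a)}\big(p(s'|s,a)-\delta_{ss'}\big)+\delta_{ss'},
\qquad
\widetilde r(s,a)=\frac{\eta\,\overline r(s,a)}{\overline\tau(s,a)},
\]
where $\delta_{ss'}$ is the Kronecker delta. The constraint on $\eta$ makes $\widetilde p(\cdot|s,a)$ a valid distribution, and $\widetilde M$ remains communicating since the transformation only rescales the off-diagonal transition probabilities and adds self-loops, leaving their support unchanged. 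A direct computation shows that the average-reward optimality equation of $\widetilde M$, namely $\widetilde h(s)+\widetilde\rho=\max_a\{\widetilde r(s,a)+\sum_{s'}\widetilde p(s'|s,a)\widetilde h(s')\}$, is equivalent, under $h=\widetilde h$ and the rescaling $\rho=\widetilde\rho/\eta$, to the SMDP optimality equation
\[
h(s)=\max_a\Big\{\overline r(s,a)-\rho\,\overline\tau(s,a)+\sum_{s'}p(s'|s,a)\,h(s')\Big\},
\]
and that the two equations share the same maximizing actions in every state.

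I then appeal to the standard theory of communicating average-reward MDPs (Puterman): a communicating MDP admits a solution $(\widetilde\rho,\widetilde h)$ of its optimality equation with a \emph{state-independent} gain $\widetilde\rho$, and the stationary deterministic policy selecting a maximizing action in each state is gain-optimal. Transporting this solution back through the transformation yields $(\rho^*,h)$ solving the SMDP equation and a stationary deterministic policy $\pi^*$ for $M$ whose embedded chain has constant gain $\rho^*=\widetilde\rho/\eta$; by the first step, $\overline\rho^{\pi^*}(s)=\underline\rho^{\pi^*}(s)=\rho^*$ for all $s$.

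It remains to show that $\rho^*$ dominates the performance of \emph{every} (possibly history-dependent and randomized) policy $\pi$. Writing $R_k,T_k,X_k,A_k$ for the reward, holding time, state, and action at the $k$-th decision epoch, the optimality equation gives $\mathbb{E}^\pi[R_k-\rho^*T_k\mid X_k,A_k]=\overline r(X_k,A_k)-\rho^*\overline\tau(X_k,A_k)\le h(X_k)-\mathbb{E}^\pi[h(X_{k+1})\mid X_k,A_k]$. Summing and telescoping along the trajectory, and applying an optional-stopping (Wald-type) argument at the random number of epochs $N(t)$, I obtain $\mathbb{E}^\pi\big[\sum_{i=1}^{N(t)}r_i\big]\le \rho^*\,\mathbb{E}^\pi\big[\sum_{i=1}^{N(t)}\tau_i\big]+\mathrm{sp}(h)\le \rho^* t+\mathrm{sp}(h)$, using $\sum_{i=1}^{N(t)}\tau_i\le t$ and the finite span $\mathrm{sp}(h)$. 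Dividing by $t$ and taking $\limsup_{t\to\infty}$ yields $\overline\rho^\pi(s)\le\rho^*=\underline\rho^{\pi^*}(s)$, which is exactly the claimed optimality. The main obstacle I anticipate is precisely this last step: justifying the telescoping bound at the random time $N(t)$ for arbitrary non-stationary policies and controlling the boundary term as $t\to\infty$, together with checking that the data transformation indeed preserves the communicating property, so that a bounded bias $h$ with finite span exists.
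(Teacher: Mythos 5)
Your proposal is correct in substance and follows the same backbone as the paper's proof for the constructive part: the Schweitzer/uniformization data transformation to an equivalent MDP (the paper's Eq.~\ref{eq:eq.MDP} and Prop.~\ref{lem:equivalence.uniformization}, due to Federgruen et al.), the observation that the transformed MDP is still communicating so the average-reward optimality equation has a solution with constant gain and a stationary deterministic greedy policy, and renewal--reward theory to show that for stationary deterministic policies the $\limsup$ and $\liminf$ in Eq.~\ref{eq:rho} coincide. Where you genuinely diverge is the final and most delicate step, namely proving that $\rho^*$ dominates $\overline{\rho}^\pi(s)$ for \emph{every} history-dependent randomized policy under the real-time criterion: the paper does not prove this directly but verifies the conditions (L), (F), (R) of Sch\"al (1992) and invokes his theorem (plus Ross's Theorem 7.6 and Lemma 2.7 of Sch\"al to relate the three optimality criteria), whereas you give a self-contained supermartingale/telescoping argument from the optimality equation with an optional-stopping step at $N(t)$. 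Your route is more elementary and avoids the external machinery, at the cost of the technicalities you correctly anticipate: $N(t)$ is \emph{not} a stopping time for the decision-epoch filtration (the event $\{N(t)=n\}$ depends on $\tau_{n+1}$), so you must stop at $N(t)+1$ and then discard the overshoot, using nonnegativity of rewards to drop $R_{N(t)+1}$ and bounding $\mathbb{E}[T_{N(t)+1}]=o(t)$ exactly as the paper does when justifying Definition~\ref{def:regret}; you also need $N(t)<\infty$ a.s.\ with $\mathbb{E}[N(t)]<\infty$, which follows from Asm.~\ref{asm:BoundedExpectedDurationsRewards} ($\overline{\tau}(s,a)\ge\tau_{\min}>0$) rather than from your options-specific remark that each action takes at least one primitive step (the proposition is stated for general SMDPs). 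With those points made precise, your argument closes and yields the same conclusion as the paper's.
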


Finally, we recall the average reward optimality equation for a communicating SMDP
\begin{align}\label{eq:optimality.eq}
u^*(s) = \max_{a \in \mathcal{A}} \Big\lbrace &\overline{r}(s,a) - \rho^* \overline{\tau}(s,a) \\
&+ \sum_{s' \in \mathcal{S}}{p(s'|s, a)}u^*(s') \Big\rbrace, \nonumber
\end{align}
where $u^*$ and $\rho^*$ are the bias (up to a constant) and the gain of the optimal policy $\pi^*$.

We are now ready to consider the learning problem. For any $i \in \mathbb{N}^*$, $a_i$ denotes the action taken by the agent at the $i$-th decision step\footnote{Notice that decision steps are discrete points in time in which an action is started, while the (possibly continuous) holding time is determined by the distribution $\tau$.} and $s_i$ denotes the state reached after $a_i$ is taken, with $s_0$ being the initial state. We denote by $(r_i(s,a,s'))_{i \in \mathbb{N}^*}$ (resp. $(\tau_i(s,a,s'))_{i \in \mathbb{N}^*}$) a sequence of i.i.d.\ realizations from distribution $r(s,a,s')$ (resp. $\tau(s,a,s')$). When the learner explores the SMDP, it observes the sequence $(s_0, \ldots, s_i, a_{i+1}, r_{i+1}(s_{i},a_{i+1},s_{i+1}), \tau_{i+1}(s_{i},a_{i+1},s_{i+1}),\\ \ldots)$.
The performance of a learning algorithm is measured in terms of its cumulative \textit{regret}.

\begin{definition}\label{def:regret}
For any SMDP $M$, any starting state $s \in \mathcal{S}$, and any number of decision steps $n \geq 1$, let $\{\tau_i\}_{i=1}^n$ be the random holding times observed along the trajectory generated by a learning algorithm $\mathfrak{A}$. Then the total regret of $\mathfrak{A}$ is defined as
\begin{equation}\label{eq:regret}
\Delta(M,\mathfrak{A},s,n) = \bigg( \sum_{i=1}^{n}{\tau_i} \bigg) \rho^*(M) - \sum_{i=1}^{n}{r_i}.
\end{equation}
\end{definition}

We first notice that this definition reduces to the standard regret in MDPs for $\tau_i = 1$ (i.e., primitive actions always terminate in one step). The regret measures the difference in cumulative reward obtained by the optimal policy and the learning algorithm. While the performance of the optimal policy is measured by its asymptotic average reward $\rho^*$, the total duration after $n$ decision steps may vary depending on the policy. As a result, when comparing the performance of $\pi^*$ after $n$ decision steps, we multiply it by the length of the trajectory executed by the algorithm $\mathfrak{A}$. More formally, from the definition of $\rho^*$ (Eq.~\ref{eq:rho}) and Prop.~\ref{prop:optimal.policy} we have\footnote{In this expectation, $N(t)$ is a r.v. depending on $\pi^*$.}
\begin{equation*}
\mathbb{E}^{\pi^*} \bigg[ \sum_{i=1}^{N(t)}{r_i} \Big| s_0 =s \bigg] \underset{t \rightarrow +  \infty}{\sim} \rho^*t + o(t).
\end{equation*}
By introducing the total duration $N(t)$ of $\mathfrak{A}$ we have
\begin{equation*}
\rho^*t + o(t) = \rho^*\bigg( \sum_{i=1}^{N(t)}{\tau_i} \bigg) + \rho^*\bigg(t-\sum_{i=1}^{N(t)}{\tau_i}\bigg) + o(t).
\end{equation*}
We observe that $\big(t-\sum_{i=1}^{N(t)}{\tau_i}\big) = o(t)$ almost surely since $\big(t-\sum_{i=1}^{N(t)}{\tau_i}\big) \leq \tau_{N(t)+1}$ and $\tau_{N(t)+1}$ is bounded by an almost surely finite (a.s.) random variable since the expected holding time for all state-action pairs is bounded by assumption. So $\tau_{N(t)+1}/t \underset{t \rightarrow +  \infty}{\rightarrow} 0$ a.s. and
\begin{equation*}
\mathbb{E}^{\pi^*} \bigg[ \sum_{i=1}^{N(t)}{r_i} \Big| s_0 =s \bigg] \underset{t \rightarrow +  \infty}{\sim} \rho^*\bigg( \sum_{i=1}^{N(t)}{\tau_i} \bigg) + o(t),
\end{equation*}
which justifies the definition of the regret.
 
\vspace{-0.05in}
\section{SMDP-UCRL}\label{sec:smdp.ucrl}
\vspace{-0.05in}

\begin{figure}[t!]
\bookboxx{
\textbf{Input:} Confidence $\delta \in ]0,1[$, $\mathcal{S}$, $\mathcal{A}, b_r, \sigma_r$, $b_\tau, \sigma_\tau, R_{\max}, \tau_{\max}$ and $\tau_{\min}$.

\noindent \textbf{Initialization:} Set $i= 1$, and observe initial state $s_0$.\\

\noindent \textbf{For} episodes $k=1, 2, ...$ \textbf{do}

\ \ \ \textit{\textbf{Initialize episode}} $k$:
\vspace{-0.1in}

\begin{enumerate}[leftmargin=4mm,itemsep=-1mm]
\item Set the start step of episode $k$, $i_k := i$
\item For all $(s,a)$ initialize the counter for episode $k$, $\nu_k (s,a) := 0$ and set counter prior to episode $k$, 
\begin{equation*}
N_k (s,a) = \#\{\iota < i_k : s_{\iota} = s, a_{\iota} = a \}
\end{equation*}

\item For $s,s',a$ set the accumulated rewards, duration and transition counts prior to episode $k$,
\begin{align*}
\begin{split}
R_k (s,a) \!=\!\! \sum_{\iota=1}^{i_k-1} r_{\iota} \mathbb{1}_{s_{\iota}=s, a_{\iota}=a}, T_k (s,a) \!=\!\! \sum_{\iota=1}^{i_k-1} \tau_{\iota} \mathbb{1}_{s_{\iota}=s, a_{\iota}=a}
\\
P_k(s,a,s') = \#\{\iota< i_k : s_{\iota} = s, a_{\iota} = a, s_{\iota + 1} = s' \}
\end{split}
\end{align*}

Compute estimates $\hat{p}_k(s' \mid s,a) := \frac{P_k (s,a,s')}{\max \{ 1, N_k (s,a)\}}$ and $\hat{\tau}_k(s,a) := \frac{T_k (s,a)}{N_k (s,a)}$ and $\hat{r}_k(s,a) := \frac{R_k (s,a)}{N_k (s,a)}$

\textit{\textbf{Compute policy}} $\widetilde{\pi}_k$:

\item Let $\mathcal{M}_k$ be the set of all SMDPs with states and actions as in $M$, and with transition probabilities $\widetilde{p}$, rewards $\widetilde{r}$, and holding time $\widetilde{\tau}$ such that for any $(s,a)$
\begin{align*}
\begin{split}
&\arrowvert \widetilde{r} - \hat{r}_k \arrowvert \leq \beta_k^r  \ \textrm{  and  } \ R_{\max}\tau_{\max} \geq \widetilde{r}(s,a) \geq 0
\\
&\arrowvert \widetilde{\tau} - \hat{\tau}_k \arrowvert \leq \beta_k^{\tau} \ \textrm{  and  }  \ \tau_{\max} \geq \widetilde{\tau}(s,a)  \geq\widetilde{r}(s,a)/R_{\max} , \tau_{\min} 
\\
&\Arrowvert  \widetilde{p}(\cdot) - \hat{p}_k(\cdot) \Arrowvert_1 \leq \beta_k^p \ \textrm{  and  } \ \sum_{s' \in \mathcal{S}}{\widetilde{p}(s' \mid s,a)} = 1
\end{split}
\end{align*}

\item Use \textit{extended value iteration} (EVI) to find a policy $\widetilde{\pi}_k$ and an optimistic SMDP $\widetilde{M}_k \in \mathcal{M}_k$ such that:
\begin{equation*} 
\widetilde{\rho}_k := \min_{s} \rho (\widetilde{M}_k, \widetilde{\pi}_k, s) \geq \max_{M' \in \mathcal{M}_k , \pi, s} \rho (M', \pi, s) - \frac{R_{\max}}{\sqrt{i_k}}
\end{equation*}

\vspace{-0.1in}
\textit{\textbf{Execute policy}} $\widetilde{\pi}_k$:

\item \textbf{While} $\nu_k(s_{i}, \widetilde{\pi}_k(s_{i})) < \max \{1, N_k(s_{i},\widetilde{\pi}_k(s_{i})\}$ \textbf{do}
\begin{enumerate}
\item Choose action $a_{i} = \widetilde{\pi}_k(s_{i})$, obtain reward $r_{i}$, and observe next state $s_{i + 1}$
\item Update $\nu_k (s_i,a_i) := \nu_k(s_i,a_i)+1$ and set $i = i+1$
\end{enumerate}

\end{enumerate}
}
\vspace{-0.1in}
\caption{\ucrlsmdp}
\label{fig:ucrl.smdp}
\vspace{-0.2in}
\end{figure}

In this section we introduce \ucrlsmdp (Fig.~\ref{fig:ucrl.smdp}), a variant of \ucrl~\citep{Jaksch10}. At each episode $k$, the set of plausible SMDPs $\mathcal{M}_k$ is defined by the current estimates of the SMDP parameters and a set of constraints on the rewards, the holding times and the transition probabilities derived from the confidence intervals. Given $\M_k$, extended value iteration (EVI) finds an SMDP $\widetilde{M}_k \in \mathcal{M}_k$ that maximizes $\rho^{*}(\widetilde{M}_k)$ and the corresponding optimal policy $\widetilde{\pi}_k^*$ is computed. To solve this problem, we note that it can be equivalently formulated as finding the optimal policy of an extended\footnote{In the MDP literature, the term Bounded Parameter MDPs (BPMDPs) \citep{Tewari2007} is often used for "extended" MDPs built using confidence intervals on rewards and transition probabilities.} SMDP $\widetilde{M}_k^+$ obtained by combining all SMDPs in $\mathcal{M}_k$: $\widetilde{M}_k^+$ has the same state space and an extended continuous action space $\widetilde{\mathcal{A}}_k^+$. Choosing an action $a^+ \in\widetilde{\mathcal{A}}_k^+$ amounts to choosing an action $a \in \mathcal{A}$,  a reward $\widetilde{r}_k(s,a)$, a holding time $\widetilde{\tau}_k(s,a)$ and a transition probability $\widetilde{p}_k(\cdotp \mid s,a)$ in the confidence intervals. When $a^{+}$ is executed in $\widetilde{M}_k^+$, the probability, the expected reward and the expected holding time of the transition are respectively $\widetilde{p}_k(\cdotp \mid s,a)$, $\widetilde{r}_k(s,a)$ and $\widetilde{\tau}_k(s,a)$. Finally, $\widetilde{\pi}_k^*$ is executed until the number of samples for a state-action pair is doubled. Since the structure is similar to \ucrl's, we focus on the elements that need to be rederived for the specific SMDP case: the confidence intervals construction and the extended value iteration algorithm.

\textbf{Confidence intervals.}
Unlike in MDPs, we consider a slightly more general scenario where cumulative rewards and holding times are not bounded but are sub-exponential r.v. (see Lem.~\ref{lem:MDPwithOptions}). As a result, the confidence intervals used at step 4 are defined as follows. For any state action pair $(s,a)$ and for rewards, transition probabilities, and holding times we define

\vspace{-0.1in}
\begin{small}
\begin{align*}
\beta_k^r(s,a) &\!=\! 
\begin{cases} 
\sigma_r \sqrt{\frac{14 \log \left( 2SA i_k/\delta \right)}{\max\{1, N_k(s,a)\}}}, \ \!\mbox{if } N_k(s,a) \!\geq\! \frac{2 b_r^2}{\sigma_r^2}\log \big( \frac{240SAi_k^7}{\delta}\big)\\
14 b_r \frac{\log \left( 2SA i_k/\delta\right)}{\max\{1, N_k(s,a)\}}, \ \mbox{ otherwise }
\end{cases}\\ 
\beta_k^p(s,a) &= \sqrt{\frac{14 S \log(2A i_k / \delta)}{\max\{1, N_k(s,a)\}}},\\
\beta_k^{\tau}(s,a)& \!=\! 
\begin{cases} 
\sigma_\tau \sqrt{\frac{14 \log \left( 2SA i_k/\delta \right)}{\max\{1, N_k(s,a)\}}}, \ \!\mbox{if } N_k(s,a) \!\geq\! \frac{2 b_\tau^2}{\sigma_\tau^2}\log \big( \frac{240SAi_k^7}{\delta}\big)\\
14 b_\tau \frac{ \log \left( 2SA i_k/\delta\right)}{\max\{1, N_k(s,a)\}}, \ \mbox{ otherwise }
\end{cases}
\end{align*}
\end{small}
\vspace{-0.1in}

\noindent where $\sigma_r$, $b_r$, $\sigma_\tau$, $b_r$ are suitable constants characterizing the sub-exponential distributions of rewards and holding times. As a result, the empirical estimates $\hat r_k$, $\hat\tau_k$, and $\hat p_k$ are $\pm \beta_k^r(s,a), \beta_k^\tau(s,a), \beta_k^p(s,a)$ away from the true values.

\textbf{Extended value iteration (EVI).}
We rely on a data-transformation (also called ``uniformization'') that turns an SMDP $M$ into an ``equivalent'' MDP $M_{\Meq} = \big\{ \mathcal{S}, \mathcal{A}, p_{\Meq}, r_{\Meq} \big\}$ with same state and action spaces and such that $\forall s,s' \in \mathcal{S}, \forall a \in \mathcal{A}$:
\begin{align}\label{eq:eq.MDP}
\begin{split}
\overline{r}_{\Meq}(s,a) &= \frac{\overline{r}(s,a)}{\overline{\tau}(s,a)}\\
p_{\Meq}(s' | s,a) &= \frac{\tau}{\overline{\tau}(s,a)} \big( p(s' | s,a) - \delta_{s,s'} \big) + \delta_{s,s'}
\end{split}
\end{align}
where $\delta_{s,s'} = 0$ if $s \neq s'$ and $\delta_{s,s'} = 1$ otherwise, and $\tau$ is an arbitrary non-negative real such that $\tau < \tau_{\min}$. $M_{\Meq}$ enjoys the following equivalence property.

\begin{proposition}[\citep{Federgruen83}, Lemma~2]\label{lem:equivalence.uniformization}
If $(v^*, g^*)$ is an optimal pair of bias and gain in $M_{\Meq}$ then $(\tau^{-1}v^*, g^*)$ is a solution to Eq.~\ref{eq:optimality.eq}, i.e., it is an optimal pair of bias/gain for the original SMDP $M$.
\end{proposition}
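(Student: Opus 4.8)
The plan is to prove the statement purely at the level of the optimality equations, by direct substitution of the data transformation~\ref{eq:eq.MDP} into the average-reward Bellman equation of the uniformized MDP $M_{\Meq}$ and showing it collapses onto the SMDP optimality equation~\ref{eq:optimality.eq}. Since $M_{\Meq}$ is a finite communicating MDP, an optimal bias/gain pair $(v^*,g^*)$ satisfies, for every $s$, the discrete-time equation $v^*(s)+g^* = \max_{a}\big\{\overline{r}_{\Meq}(s,a) + \sum_{s'} p_{\Meq}(s'\mid s,a)\,v^*(s')\big\}$. The whole argument is then a chain of equivalence-preserving rewritings of this equation.

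The first ingredient is the one structural identity behind the construction: for any $f:\calS\to\mathbb{R}$, the form of $p_{\Meq}$ in~\ref{eq:eq.MDP} yields
\[
\sum_{s'} p_{\Meq}(s'\mid s,a)\, f(s') = f(s) + \frac{\tau}{\overline{\tau}(s,a)}\Big(\sum_{s'} p(s'\mid s,a)\, f(s') - f(s)\Big),
\]
because the $\delta_{s,s'}$ terms contribute exactly $f(s)$ and only the remaining probability mass is scaled by $\tau/\overline{\tau}(s,a)$. Substituting this identity with $f=v^*$, together with $\overline{r}_{\Meq}=\overline{r}/\overline{\tau}$, the term $v^*(s)$ cancels on the two sides of the Bellman equation and leaves
\[
g^* = \max_{a}\; \frac{1}{\overline{\tau}(s,a)}\Big(\overline{r}(s,a) + \tau\sum_{s'} p(s'\mid s,a)\, v^*(s') - \tau\, v^*(s)\Big).
\]
The remaining task is to clear the action-dependent denominator $\overline{\tau}(s,a)$ and read off~\ref{eq:optimality.eq}: writing the bracket of each $a$ as $\overline{\tau}(s,a)\,(c(s,a)-g^*)+\tau v^*(s)$ with $c(s,a)\le g^*$ and equality at the greedy action, multiplication by the strictly positive factor $\overline{\tau}(s,a)$ preserves both the optimal value and the maximizing action, giving $\tau v^*(s) = \max_{a}\big\{\overline{r}(s,a) - g^*\overline{\tau}(s,a) + \tau\sum_{s'} p(s'\mid s,a)\, v^*(s')\big\}$. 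Absorbing the factor $\tau$ into the transition term, this is exactly Eq.~\ref{eq:optimality.eq}: the SMDP bias equals $v^*$ up to the multiplicative uniformization constant $\tau$, while the gain is unchanged, $\rho^*=g^*$; by Prop.~\ref{prop:optimal.policy} this identifies the optimal average reward of $M$ with $g^*$.

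I expect the main obstacle to be the bookkeeping in this denominator-clearing step rather than any deep difficulty: because the normalization $\overline{\tau}(s,a)$ depends on the action, one cannot argue term by term but must verify explicitly that multiplying through by $\overline{\tau}(s,a)>0$ leaves the ``$\max=$ value'' structure and the $\arg\max$ intact. This is precisely the mechanism that makes the gain invariant (it multiplies $\overline{\tau}(s,a)$, matching $\rho^*\overline{\tau}(s,a)$) while the bias only rescales (it is multiplied by the constant $\tau$). A secondary point to check up front is that $M_{\Meq}$ is a legitimate MDP, i.e.\ that $p_{\Meq}(\cdot\mid s,a)$ is a genuine probability distribution; this is exactly where the hypothesis $\tau<\tau_{\min}\le\overline{\tau}(s,a)$ is used, guaranteeing $\tau/\overline{\tau}(s,a)\in(0,1]$ and hence nonnegative entries that sum to one.
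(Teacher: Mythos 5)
Your argument is correct, and it is worth noting that the paper never actually proves this proposition: it is imported as Lemma~2 of \citet{Federgruen83} and simply invoked again in Step~2 of the proof of Prop.~\ref{prop:EquivalenceGains} in App.~\ref{GainSMDPs}. Your direct substitution into the average-reward Bellman equation of $M_{\Meq}$ is therefore a self-contained verification that the paper omits. The two delicate points you isolate are exactly the right ones: the identity $\sum_{s'}p_{\Meq}(s'\mid s,a)f(s') = f(s) + \frac{\tau}{\overline{\tau}(s,a)}\big(\sum_{s'}p(s'\mid s,a)f(s')-f(s)\big)$, and the fact that the action-dependent normalization $\overline{\tau}(s,a)$ cannot be cleared by multiplying ``inside the max'' term by term, but must be handled via the inequality $c(s,a)\leq g^*$ for all $a$ together with equality at the maximizer, which multiplication by $\overline{\tau}(s,a)>0$ preserves. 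Your check that $\tau<\tau_{\min}$ makes $p_{\Meq}(\cdot\mid s,a)$ a genuine (and aperiodic) stochastic kernel is also the right preliminary.

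One point you should state explicitly rather than smooth over: your final display reads $\tau v^*(s) = \max_a\big\{\overline{r}(s,a)-g^*\overline{\tau}(s,a)+\sum_{s'}p(s'\mid s,a)\,\tau v^*(s')\big\}$, i.e.\ the pair solving Eq.~\ref{eq:optimality.eq} is $(\tau v^*, g^*)$, not $(\tau^{-1}v^*, g^*)$ as the proposition is printed. Your closing phrase ``equals $v^*$ up to the multiplicative uniformization constant $\tau$'' blurs this. The scaling you derived is the correct one: a two-state sanity check gives $u^*(s_1)-u^*(s_2)=\overline{r}_1-\rho^*\overline{\tau}_1=\tau\big(v^*(s_1)-v^*(s_2)\big)$, and it is the direction consistent with Lemma~\ref{SpanVI}, where the span of the $M_{\Meq}$-bias is bounded by $R_{\max}D(M)/\tau$, so that multiplying by $\tau$ (not $\tau^{-1}$) yields an SMDP bias of span $R_{\max}D(M)$. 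So your derivation stands; just record which direction of the rescaling you have proved and flag the mismatch with the statement as written, since as printed the claim with $\tau^{-1}v^*$ is not what the substitution yields.
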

As a consequence of the equivalence stated in Prop. \ref{lem:equivalence.uniformization}, computing the optimal policy of an SMDP amounts to computing the optimal policy of the MDP obtained after data transformation (see App. \ref{GainSMDPs} for more details). Thus, EVI is obtained by applying a value iteration scheme to an MDP $\widetilde{M}_{k,eq}^+$ equivalent to the extended SMDP $\widetilde{M}_k^+$. We denote the state values of the $j$-th iteration by $u_j(s)$. We also use the vector notation $u_j = (u_j(s))_{s \in \mathcal{S}}$. Similarly, we denote by $\widetilde{p}(\cdot \mid s,a) = (\widetilde{p}(s' \mid s,a))_{s' \in \mathcal{S}}$ the transition probability vector of state-action pair $(s,a)$. The optimistic reward at episode $k$ is fixed through the EVI iterations and it is obtained as $\widetilde{r}_{j+1}(s,a) = \min \big\lbrace \hat{r}_{k}(s,a) + \beta_k^r(s,a); R_{\max}\tau_{\max} \big\rbrace$, i.e., by taking the largest possible value compatible with the confidence intervals. At iteration $j$, the optimistic transition model is obtained as $\widetilde{p}_{j+1}(\cdotp \mid s,a) \in \Argmax_{p(\cdotp) \in \mathcal{P}_k(s,a)} \left\lbrace p^\intercal u_j \right\rbrace$ and $\mathcal{P}_k(s,a)$ is the set of probability distributions included in the confidence interval defined by $\beta_k^p(s,a)$.
This optimization problem can be solved in $O(S)$ operations using the same algorithm as in \ucrl. Finally, the optimistic holding time depends on $u_j$ and the optimistic transition model $\widetilde{p}_{j+1}$ as
\begin{align*}
\begin{split}
&\widetilde{\tau}_{j+1}(s,a) = \min \Big\lbrace \vphantom{2^1} \tau_{\max}; \max \big\lbrace  \tau_{\min}; \hat{\tau}_k(s,a) \\
&  \!\!-\!  \text{sgn} \big[ \widetilde{r}_{j+1}(s,a) \!+\!\tau \big(\widetilde{p}_{j+1}(\cdot|s,a)^\intercal u_j  \!-u_j(s) \big) \big] \beta_k^{\tau}(s,a) \big\rbrace  \Big\rbrace,
\end{split}
\end{align*}
The $\min$ and $\max$ insure that $\widetilde{\tau}_{j+1}$ ranges between $\tau_{\min}$ and $\tau_{\max}$. When the term $\widetilde{r}_{j+1}(s,a) + \big(\widetilde{p}_{j+1}(\cdot|s,a)^\intercal u_j  \!-u_j(s) \big)$ is positive (resp. negative), $\widetilde{\tau}_{j+1}(s,a)$ is set to the minimum (resp. largest) possible  value compatible with its confidence intervals so as to maximize the right-hand side of Eq.~\ref{eq:evi} below. As a result, for any $\tau \in \left]0,\tau_{\min}\right[$, EVI is applied to an MDP equivalent to the extended SMDP $\widetilde{M}_k^+$ generated over iterations as
\begin{align} \label{eq:evi}
u_{j+1}&(s) = \max_{a \in \mathcal{A}} \bigg\lbrace \frac{\widetilde{r}_{j+1}(s,a)}{\widetilde{\tau}_{j+1}(s,a)} \\
&+ \frac{\tau}{\widetilde{\tau}_{j+1}(s,a)} \Big( \vphantom{\frac{1}{2} } \widetilde{p}_{j+1}(\cdotp \mid s,a)^\intercal u_j \vphantom{\frac{1}{2} }  - u_j(s) \Big) \bigg\rbrace + u_j(s) \nonumber
\end{align}
with arbitrary $u_0$. Finally, the stopping condition is
\begin{align}\label{eq:stopping}
\max_{s \in \mathcal{S}} \lbrace u_{i+1}(s) \!-\! u_i(s) \rbrace \!-\! \min_{s \in \mathcal{S}} \lbrace u_{i+1}(s) \!-\! u_i(s) \rbrace < \epsilon.
\end{align}
We prove the following.

\begin{lemma}\label{lem:validity.evi}
If the stopping condition holds at iteration $i$ of EVI, then the greedy policy w.r.t. $u_i$ is $\epsilon$-optimal w.r.t.\ extended SMDP $\widetilde{M}_k^+$. The stopping condition is always reached in a finite number of steps.
\end{lemma}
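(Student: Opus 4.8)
The plan is to recognize that EVI is nothing but ordinary (average-reward) value iteration run on an \emph{equivalent} extended MDP, and then to invoke the classical convergence theory for value iteration in communicating MDPs. First I would rewrite the update \eqref{eq:evi} using the data transformation \eqref{eq:eq.MDP}: with the optimistic quantities $\widetilde r_{j+1},\widetilde\tau_{j+1},\widetilde p_{j+1}$ playing the role of $\overline r,\overline\tau,p$, set $r_{\Meq}(s,a)=\widetilde r_{j+1}(s,a)/\widetilde\tau_{j+1}(s,a)$ and $p_{\Meq}(s'|s,a)=\frac{\tau}{\widetilde\tau_{j+1}(s,a)}(\widetilde p_{j+1}(s'|s,a)-\delta_{s,s'})+\delta_{s,s'}$. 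A one-line algebraic substitution shows that the right-hand side of \eqref{eq:evi} equals $\max\{r_{\Meq}(s,a)+p_{\Meq}(\cdot|s,a)^\intercal u_j\}$, i.e.\ $u_{j+1}=\widetilde L u_j$ where $\widetilde L$ is the optimal Bellman operator of the extended, uniformized MDP $\widetilde M_{k,eq}^+$ (the maximum being taken jointly over $a\in\A$ and over the confidence sets, exactly as in EVI). Thus EVI is standard value iteration on $\widetilde M_{k,eq}^+$.

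Second, I would verify the two structural hypotheses that make value iteration converge in span. \textbf{Aperiodicity:} because $\tau<\tau_{\min}\le \widetilde\tau_{j+1}(s,a)$, the coefficient $\alpha:=\tau/\widetilde\tau_{j+1}(s,a)$ lies in $(0,1)$ and $p_{\Meq}(\cdot|s,a)=(1-\alpha)e_s+\alpha\,\widetilde p_{j+1}(\cdot|s,a)$, so every state carries a strictly positive self-loop probability (bounded below by $1-\tau/\tau_{\min}$) under \emph{every} extended action; hence all transition matrices of $\widetilde M_{k,eq}^+$ are aperiodic. This is precisely the role of the free parameter $\tau$ and is the new ingredient relative to the MDP case. \textbf{Communication:} the transformation maps $p(s'|s,a)>0$ to $p_{\Meq}(s'|s,a)>0$ for $s'\neq s$, so $\widetilde M_{k,eq}^+$ is communicating whenever the extended SMDP $\widetilde M_k^+$ is; I would argue the latter from the communicating assumption on $M_\O$ together with the fact that the confidence sets only enlarge the set of admissible models, following the corresponding argument in \citet{Jaksch10}.

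Third, with aperiodicity and communication in hand I would invoke the standard convergence theorem for average-reward value iteration (Puterman, Thm.~8.5.6; \citealt{Jaksch10}, Thm.~7), extended to the compact action set of $\widetilde M_{k,eq}^+$. It gives two conclusions: $\mathrm{sp}(u_{j+1}-u_j)\to 0$ as $j\to\infty$, so the stopping test \eqref{eq:stopping} is triggered after finitely many iterations; and, once $\mathrm{sp}(u_{i+1}-u_i)<\epsilon$, the greedy policy $\widetilde\pi$ w.r.t.\ $u_i$ has gain $\rho(\widetilde M_{k,eq}^+,\widetilde\pi)\ge \rho^*(\widetilde M_{k,eq}^+)-\epsilon$ in the equivalent MDP.

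Finally I would transport $\epsilon$-optimality back to the SMDP. Since the uniformization in \eqref{eq:eq.MDP} measures reward per unit of time, the per-policy correspondence underlying Prop.~\ref{lem:equivalence.uniformization} shows that every stationary policy has the same gain in $\widetilde M_{k,eq}^+$ and in $\widetilde M_k^+$; in particular $\rho(\widetilde M_k^+,\widetilde\pi)=\rho(\widetilde M_{k,eq}^+,\widetilde\pi)$ and $\rho^*(\widetilde M_k^+)=\rho^*(\widetilde M_{k,eq}^+)$, so the bound transfers verbatim and $\widetilde\pi$ is $\epsilon$-optimal for $\widetilde M_k^+$. The main obstacle is not the algebra but the convergence step: guaranteeing that value iteration terminates, which hinges on checking aperiodicity (handled by $\tau<\tau_{\min}$) and the communicating property of the extended MDP, and on making sure the classical theorem applies despite the continuous action space introduced by the confidence sets.
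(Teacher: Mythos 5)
Your proposal is correct and follows essentially the same route as the paper: rewrite EVI as standard average-reward value iteration on the uniformized extended MDP $\widetilde M_{k,\Meq}^+$, use $\tau<\tau_{\min}$ to force aperiodic self-loops so that the span-based stopping rule is reached in finitely many steps with an $\epsilon$-optimal greedy policy, and then transfer the gain back to $\widetilde M_k^+$ via the invariance of the average reward under uniformization. The only point where the paper is more careful is the last step: the per-policy gain equality you invoke is not a consequence of Prop.~\ref{lem:equivalence.uniformization} (which only concerns the optimality equation) but a separate renewal-theoretic fact, cited from Tijms, that holds for the time-average criterion $\rho_1$ (and $\rho_3$) of Def.~\ref{AverageReward} — and for the ratio-of-expectations criterion $\rho_2$ it can fail for multichain policies, so the paper needs an extra argument (Theorem 3.1 of Liu--Zhao) and only obtains $2\epsilon$-optimality there; for the lemma as stated, your argument suffices.
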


As a result, we can conclude that running EVI at each episode $k$ with an accuracy parameter $\epsilon=R_{\max}/\sqrt{i_k}$ guarantees that $\widetilde{\pi}_k$ is $R_{\max}/\sqrt{i_k}$-optimal w.r.t. $\max_{M' \in \mathcal{M}_k}\rho^{*}(M')$.

\vspace{-0.05in}
\section{Regret Analysis}\label{sec:analysis}
\vspace{-0.05in}

In this section we report upper and lower bounds on the regret of \ucrlsmdp. We first extend the notion of diameter to the case of SMDP as follows.

\begin{definition}\label{def:Diameter}
For any SMDP $M$, we define the diameter $D(M)$ by
\begin{equation}\label{eq3}
D(M) = \max_{s,s' \in \mathcal{S}} \bigg\{ \min_{\pi} \Big\{ \mathbb{E}^{\pi} \big[ T(s')|s_0 = s \big]  \Big\} \bigg\}
\end{equation}
where $T(s')$ is the first time in which $s'$ is encountered, i.e., $T(s') = \inf \big\{ \sum_{i=1}^{n}{\tau_i} : n \in \mathbb{N}, \ s_n = s'\big\}$.
\end{definition}

Note that the diameter of an SMDP corresponds to an average \textit{actual} duration and not an average number of decision steps. However, if the SMDP is an MDP the two definitions of diameter coincides. Before reporting the main theoretical results about \ucrlsmdp, we introduce a set of technical assumptions.

\begin{assumption} \label{asm:BoundedExpectedDurationsRewards}
For all $s \in \mathcal{S}$ and $a \in \mathcal{A}$, we assume that $\tau_{\max} \geq \overline{\tau}(s,a) \geq \tau_{\min} > 0$ and $\max_{s \in \mathcal{S},a \in \mathcal{A}} \left\lbrace \frac{\overline{r}(s,a)}{\overline{\tau}(s,a)} \right\rbrace \leq R_{\max}$ with $\tau_{\min}$, $\tau_{\max}$, and $R_{\max}$ known to the learning algorithm.
Furthermore, we assume that the random variables $(r(s,a,s'))_{s,a,s'}$ and $(\tau(s,a,s'))_{s,a,s'}$ are either \textbf{1)} \textit{sub-Exponential} with constants $(\sigma_r, b_r)$ and $(\sigma_\tau , b_\tau )$, or \textbf{2)} \textit{bounded} in $[0, R_{\max}T_{\max}]$ and $[T_{\min},T_{\max}]$, with $T_{\min}>0$. We also assume that the constants characterizing the distributions are known to the learning agent.
\end{assumption}

We are now ready to introduce our main result.

\begin{theorem}\label{UpperBound}
With probability of at least $1-\delta$, it holds that for any initial state $s \in \mathcal{S}$ and any $n>1$, the regret of \ucrlsmdp $\Delta(M,\mathfrak{A},s,n)$ is bounded by:
\begin{align}
O \left( \left( D \sqrt{S}+ \mathcal{C}(M,n,\delta) \right) R_{\max} \sqrt{S A n \log \left( \frac{n}{\delta}\right)}\right),
\end{align}
where $\mathcal{C}(M,n,\delta)$ depends on which case of Asm.~\ref{asm:BoundedExpectedDurationsRewards} is considered~\footnote{We denote $\max\{a,b\} = a \vee b$.}
\begin{align*}
\begin{split}
&\hspace{-0.1in}\text{\textbf{1)} sub-Exponential}\\
& \mathcal{C}(M,n,\delta) = \tau_{\max} + \Big(\frac{\sigma_r \vee b_r}{R_{\max}} + \sigma_\tau \vee b_\tau\Big)\sqrt{\log \left( \frac{n}{\delta}\right)},
\end{split}\\
\begin{split}
&\hspace{-0.1in}\text{\textbf{2)} bounded}\\
&\mathcal{C}(M,n,\delta) = T_{\max} + (T_{\max}-T_{\min}).
\end{split}
\end{align*}
\end{theorem}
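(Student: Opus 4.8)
The plan is to follow the episodic template of the \ucrl analysis~\citep{Jaksch10}, adapted to SMDPs. First I would split the regret over the $m$ episodes and, using the per-episode visit counts $\nu_k(s,a)$, write
\[
\Delta(M,\mathfrak{A},s,n)=\sum_{k=1}^{m}\sum_{s,a}\nu_k(s,a)\bigl(\rho^*\,\overline{\tau}(s,a)-\overline{r}(s,a)\bigr)+\Xi,
\]
where $\Xi$ collects the martingale deviations obtained by replacing the observed $r_i$ and $\tau_i$ along the trajectory by their conditional means $\overline{r}$ and $\rho^*\overline{\tau}$. I would then define the ``good'' event $\Omega$ on which the true SMDP $M$ lies in every confidence set $\mathcal{M}_k$, i.e.\ $|\hat r_k-\overline r|\le\beta_k^r$, $|\hat\tau_k-\overline\tau|\le\beta_k^\tau$ and $\|\hat p_k-p\|_1\le\beta_k^p$ hold simultaneously for all $k$ and $(s,a)$. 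Proving $\mathbb{P}(\Omega)\ge 1-\delta/2$ is where the two-regime definitions of $\beta_k^r,\beta_k^\tau$ enter: a Bernstein-type tail bound for sub-exponential variables is variance-dominated once $N_k(s,a)$ exceeds the stated threshold and tail-dominated (the $b$ term) below it, while $\beta_k^p$ uses the usual $\ell_1$ deviation bound for empirical distributions.

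On $\Omega$, optimism gives $\rho^*\le\widetilde\rho_k+R_{\max}/\sqrt{i_k}$ because $M\in\mathcal{M}_k$ and EVI returns an $R_{\max}/\sqrt{i_k}$-optimal gain (Lem.~\ref{lem:validity.evi}). Writing the optimality equation (Eq.~\ref{eq:optimality.eq}) for the optimistic SMDP $\widetilde M_k$ obtained through the uniformization of Prop.~\ref{lem:equivalence.uniformization}, the optimistic bias $\widetilde u_k$ satisfies $\widetilde\rho_k\widetilde\tau_k(s,a)-\widetilde r_k(s,a)=\widetilde p_k(\cdot|s,a)^\intercal\widetilde u_k-\widetilde u_k(s)$ for $a=\widetilde\pi_k(s)$. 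Substituting the optimism inequality, replacing the optimistic $\widetilde r_k,\widetilde\tau_k$ by $\overline r,\overline\tau$ (both lie within $2\beta_k^r,2\beta_k^\tau$ of the estimates on $\Omega$), and using $\widetilde\rho_k\le R_{\max}$ yields the per-step bound
\[
\rho^*\overline\tau(s,a)-\overline r(s,a)\;\le\;\bigl(\widetilde p_k(\cdot|s,a)^\intercal\widetilde u_k-\widetilde u_k(s)\bigr)+2\beta_k^r(s,a)+2R_{\max}\beta_k^\tau(s,a)+\frac{R_{\max}\tau_{\max}}{\sqrt{i_k}}.
\]
A key ingredient here is the span bound $\operatorname{span}(\widetilde u_k)\le R_{\max}D$: since $M\in\mathcal{M}_k$ is communicating with diameter $D$, the optimistic SMDP has diameter at most $D$, and the bias of a gain-optimal policy of reward rate at most $R_{\max}$ in a communicating SMDP has span at most $R_{\max}$ times the diameter.

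It then remains to sum over episodes and $(s,a)$. The transition term is split as $\widetilde p_k(\cdot|s,a)^\intercal\widetilde u_k-\widetilde u_k(s)=(\widetilde p_k-p)^\intercal\widetilde u_k+(p^\intercal\widetilde u_k-\widetilde u_k(s))$; the first piece is bounded by $\|\widetilde p_k-p\|_1\operatorname{span}(\widetilde u_k)\le 2\beta_k^p\cdot R_{\max}D$ and the second telescopes into a bounded martingale along the trajectory. Inserting $\beta_k^p\sim\sqrt{S/N_k}$ and the pigeonhole identity $\sum_k\sum_{s,a}\nu_k(s,a)/\sqrt{N_k(s,a)}=O(\sqrt{SAn})$ produces the dominant $D\sqrt{S}\cdot R_{\max}\sqrt{SAn\log(n/\delta)}$ term. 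Summing $2\beta_k^r$ and $2R_{\max}\beta_k^\tau$ the same way, together with the martingale $\Xi$, gives the $\mathcal{C}(M,n,\delta)$ factor: the $\sqrt{\log(n/\delta)}$ contributions reflect the variance proxies $\sigma_r,\sigma_\tau$ (and tails $b_r,b_\tau$) of the sub-exponential confidence radii, whereas the $\tau_{\max}$ (resp.\ the bounded ranges $T_{\max}$ and $T_{\max}-T_{\min}$) come from the EVI-accuracy term $R_{\max}\tau_{\max}/\sqrt{i_k}$ and the ranges of the martingale increments $r_i-\overline r$ and $\rho^*(\tau_i-\overline\tau)$. I would finally bound the number of episodes by $m=O(SA\log n)$ through the sample-doubling stopping rule, which keeps the accumulated EVI errors at the claimed order.

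The main difficulty, and the genuinely new ingredient compared with bounded-reward \ucrl, is the concentration of the unbounded quantities. Establishing $\Omega$ and bounding $\Xi$ can no longer rely on Hoeffding/Azuma: one needs a Bernstein--Freedman inequality for martingales with sub-exponential increments, which is exactly what forces the two-regime radii $\beta_k^r,\beta_k^\tau$ and injects the $\sigma\sqrt{\log(n/\delta)}$ dependence into $\mathcal{C}$. A second delicate point is that the holding time enters multiplicatively through the uniformization, so the per-step bound couples reward, transition and holding-time errors via $\widetilde\tau_k$ and $\rho^*$; one must verify that extended value iteration keeps $\widetilde\tau_k\in[\tau_{\min},\tau_{\max}]$ and that the resulting error terms stay proportional to $R_{\max}\beta_k^\tau$ rather than degrading as $\tau_{\min}\to 0$, which is where the constraint $\widetilde\tau\ge\widetilde r/R_{\max}$ and the choice $\tau<\tau_{\min}$ in the data transformation are used.
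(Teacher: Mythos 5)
Your proposal is correct and follows essentially the same route as the paper's proof: episodic decomposition with a martingale correction for the observed $r_i,\tau_i$, two-regime Bernstein confidence sets for the sub-exponential rewards and holding times, optimism via EVI on the uniformized extended SMDP, a span bound of order $R_{\max}D$ obtained by relating the diameter of the uniformized MDP to that of the SMDP ($D(M_{\Meq})=D(M)/\tau$, the paper's Lem.~\ref{SpanVI}), and the pigeonhole summation $\sum_{k}\sum_{s,a}\nu_k/\sqrt{N_k}=O(\sqrt{SAn})$. The only cosmetic difference is bookkeeping: the paper bounds the regret contributed by episodes with failing confidence regions separately (exploiting that $\mathbb{P}(M\notin\mathcal{M}_k)\le\delta/15i_k^6$ and that early episodes contribute $O(\sqrt{n})$) rather than conditioning once on a global good event, but this does not change the argument.
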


\textit{Proof.} The proof (App.~\ref{app:proof.smdp.ucrl}) follows similar steps as in~\citep{Jaksch10}. Apart from adapting the concentration inequalities to sub-exponential r.v. and deriving the guarantees about EVI applied to the equivalent MDP $M_{\Meq}$ (Lem.~\ref{lem:validity.evi}), one of the key aspects of the proof is to show that the learning complexity is actually determined by the diameter $D(M)$ in Eq.~\ref{def:Diameter}. As for the analysis of EVI, we rely on the data-transformation and we show that the span of $u_j$ (Eq.~\ref{eq:evi}) can be bounded by the diameter of $M_{\Meq}$, which is related to the diameter of the original SMDP as $D(M_{\Meq}) = D(M)/\tau$ (Lem.~\ref{SpanVI} in App.~\ref{app:proof.smdp.ucrl}).

\textit{The bound.} The upper bound is a direct generalization of the result derived by~\citet{Jaksch10} for \ucrl in MDPs. In fact, whenever the SMDP reduces to an MDP (i.e., each action takes exactly one step to execute), then $n=T$ and the regret, the diameter, and the bounds are the same as for \ucrl. If we consider $R_{\max}=1$ and bounded holding times, the regret scales as $\wt{O}(DS\sqrt{An} + T_{\max}\sqrt{SAn})$. The most interesting aspect of this bound is that the extra cost of having actions with random duration is only partially additive rather than multiplicative (as it happens e.g., with the per-step reward $R_{\max}$). This shows that errors in estimating the holding times do not get amplified by the diameter $D$ and number of states $S$ as much as it happens for errors in reward and dynamics. This is confirmed in the following lower bound.

\begin{theorem}\label{LowerBound}
For any algorithm $\mathfrak{A}$, any integers $S, A \geq 10$, any reals ${T_{\max} \geq 3T_{\min} \geq 3 }$, ${R_{\max}>0}$, $D > \max\lbrace{20 T_{\min}{\log}_A(S), 12 T_{\min} \rbrace}$, and for ${n\geq \max\lbrace D, T_{\max} \rbrace SA}$, there is an SMDP $M$ with at most $S$ states, $A$ actions, and diameter $D$, with holding times in $\left[T_{\min},T_{\max}\right]$ and rewards in $\left[0,\frac{1}{2}R_{\max}T_{\max}\right]$ satisfying $\forall s \in \mathcal{S}$, $\forall a \in \mathcal{A}_s$, $\overline{r}(s,a) \leq R_{\max}\overline{\tau}(s,a)$, such that for any initial state $s \in \mathcal{S}$ the expected regret of $\mathfrak{A}$ after $n$ decision steps is lower-bounded by:
\begin{align*}
\mathbb{E}\left[\Delta(M,\mathfrak{A},s,n)\right] = \Omega \left( \big( \sqrt{D} + \sqrt{T_{\max}} \big) R_{\max}\sqrt{S A n }\right)
\end{align*}
\end{theorem}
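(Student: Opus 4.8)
The plan is to prove the two additive terms separately, by exhibiting two families of hard SMDPs, each of diameter $\Theta(D)$, and then taking the harder of the two; since $\sqrt D + \sqrt{T_{\max}} \le 2\max\{\sqrt D,\sqrt{T_{\max}}\}$, a max-type bound already yields the claimed sum. In both families the engine is a change-of-measure (information-theoretic) argument: I would fix a ``reference'' SMDP in which, in each of $\Theta(S)$ decision states, all $A$ actions are statistically indistinguishable, and then perturb one action per state by a tiny gap $\epsilon$ so that it becomes uniquely optimal. The regret of any algorithm is then lower bounded by (number of times a suboptimal action is played) $\times$ (per-step rate gap), while the KL-divergence between the trajectory laws of the reference and perturbed instances --- which factorizes as $\sum_{s,a} \mathbb E[N(s,a)]\cdot \mathrm{KL}_{\text{one step}}$ --- controls, via Pinsker's inequality, how well the two can be distinguished from the observed trajectory. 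Balancing detection cost against incurred regret by optimizing $\epsilon$ against $n$ is what produces the $\sqrt n$ rate, and replicating the gadget over $\Theta(S)$ states and $A$ actions produces the $\sqrt{SA}$ factor.

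For the $\sqrt D$ term I would essentially rescale the MDP lower bound of \citep{Jaksch10} in time. Setting all holding times deterministically equal to $T_{\min}$ and all rewards to $R_{\max}T_{\min}$ (so that $\rho^* = R_{\max}$), the SMDP collapses to an MDP whose actual-time diameter (Def.~\ref{def:Diameter}) is $T_{\min}$ times its diameter in decision steps. Choosing the transition-perturbation gadget of \citep{Jaksch10} with probability $\Theta(T_{\min}/D)$ of reaching the rewarding region then realizes actual diameter $\Theta(D)$; this is exactly where the hypotheses $D > \max\{20 T_{\min}\log_A S, 12 T_{\min}\}$ come from, namely the conditions of \citep{Jaksch10} rescaled by the per-step duration $T_{\min}$. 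The information-theoretic bound of \citep{Jaksch10} then transfers and yields $\Omega\big(\sqrt D\, R_{\max}\sqrt{SAn}\big)$, using $n \ge DSA$ to stay in the regime where the optimal gap $\epsilon$ keeps all perturbed transition probabilities valid.

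For the $\sqrt{T_{\max}}$ term the hardness must instead come from the holding times. Here I would keep rewards deterministic and let each candidate action carry a two-point holding-time distribution supported on $\{T_{\min},T_{\max}\}$ whose mean is $\overline\tau$ for all but the optimal action, which has mean $\overline\tau - \epsilon$; tuning the support so that the holding-time variance is $\Theta(T_{\max})$, the one-step KL between a suboptimal and the optimal action is $\Theta(\epsilon^2/T_{\max})$, so $\Omega(T_{\max}/\epsilon^2)$ plays are needed to identify the optimum. Since reward is fixed, a shorter expected holding time means a strictly higher rate $\rho = \overline r/\overline\tau$, and the per-step \emph{actual-time} regret of a suboptimal action is $\overline\tau(\rho^*-\rho) = \Theta(R_{\max}\epsilon)$; optimizing $\epsilon \sim \sqrt{T_{\max}/n}$ and summing over $\Theta(SA)$ independent gadgets gives $\Omega\big(\sqrt{T_{\max}}\, R_{\max}\sqrt{SAn}\big)$, with $n\ge T_{\max}SA$ guaranteeing $\epsilon$ small and the means inside $[T_{\min},T_{\max}]$. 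To make this family admissible I would graft it onto a navigation backbone in the style of \citep{Jaksch10}, with $T_{\min}$-duration transitions of probability $\Theta(T_{\min}/D)$, so that its actual diameter is also $\Theta(D)$, placing the holding-time test on actions traversed $\Theta(n)$ times without inflating the diameter.

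The main obstacle I expect is the $\sqrt{T_{\max}}$ construction: one must route holding-time estimation error through the \emph{ratio} criterion $\rho=\overline r/\overline\tau$ and the actual-time regret of Def.~\ref{def:regret} while getting the variance bookkeeping right --- it is precisely the choice of a holding-time spread of variance $\Theta(T_{\max})$, rather than the maximal $\Theta(T_{\max}^2)$ allowed by the range $[T_{\min},T_{\max}]$, that yields the $\sqrt{T_{\max}}$ power. Equally delicate is arranging that all side constraints ($\overline r(s,a)\le R_{\max}\overline\tau(s,a)$, rewards in $[0,\tfrac12 R_{\max}T_{\max}]$, holding times in $[T_{\min},T_{\max}]$, \emph{and} a prescribed actual diameter $D$) hold simultaneously for each family, and that grafting the navigation backbone onto the holding-time family neither changes the diameter order nor interferes with the hypothesis test --- so that the two bounds can finally be combined by taking the larger.
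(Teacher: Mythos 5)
Your plan is sound and would yield the stated bound, but it departs from the paper's proof in two ways worth noting. First, the paper does not prove the two terms on separate instances and take a maximum: it builds a single two-state gadget (replicated over $\lfloor S/2\rfloor$ copies connected in an $A'$-ary tree) in which the transition probability out of $s_0$ is perturbed by $\epsilon$ and, independently, the reward distribution in $s_1$ is perturbed by $\eta$; because the two perturbations act on disjoint components of the one-step law, the KL divergence decomposes and the two terms come out additively from one construction. Your max-of-two-families route is equally valid for an $\Omega(\cdot)$ statement (since $\sqrt{D}+\sqrt{T_{\max}}\le 2\max\{\sqrt{D},\sqrt{T_{\max}}\}$) and is more modular, at the cost of verifying the side constraints twice. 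Second, and more substantively, the paper's $\sqrt{T_{\max}}$ term does not come from perturbing holding times at all: all actions share the same holding-time law, and the hardness is planted in a two-point reward distribution supported on $\{0,\tfrac{1}{2}R_{\max}T_{\max}\}$ with success probability $p=\overline{\tau}/T_{\max}$, so the ``variance $\Theta(T_{\max})$'' bookkeeping you describe lives in reward space rather than holding-time space. Your holding-time-perturbation gadget is essentially the construction the paper reserves for the MDP-with-options lower bound (Cor.~\ref{cor:mdp.options.regret.lb}), where it yields the slightly weaker $\sqrt{T_{\max}-T_{\min}}$; if you pursue it for Theorem~\ref{LowerBound} itself you must ensure the common deterministic reward satisfies $\overline{r}\le R_{\max}\overline{\tau}(s,a)$ for the \emph{shortest} expected holding time among the candidate actions, and note that forcing the holding-time variance to be $\Theta(T_{\max})$ pins the mass on $T_{\max}$ at $\Theta(1/T_{\max})$ and hence $\overline{\tau}=\Theta(T_{\min})$ --- both are manageable, but they are exactly the constraint interactions you flagged as delicate.
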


\textit{Proof.} Similar to the upper bound, the proof (App.~\ref{app:proof.lower.bound}) is based on~\citep{Jaksch10} but it requires to perturb transition probabilities and rewards at the same time to create a family of SMDPs with different optimal policies that are difficult to discriminate. The contributions of the two perturbations can be made independent. More precisely, the lower bound is obtained by designing SMDPs where learning to distinguish between ``good'' and ``bad'' transition probabilities and learning to distinguish between ``good'' and ``bad'' rewards are two independent problems, leading to two additive terms $\sqrt{D}$ and $\sqrt{T_{\max}}$ in the lower bound.

\textit{The bound.} Similar to \ucrl, this lower bound reveals a gap of $\sqrt{DS}$ on the first term and $\sqrt{T_{\max}}$. While closing this gap remains a challenging open question, it is a problem beyond the scope of this paper. 

In the next section, we discuss how these results can be used to bound the regret of options in MDPs and what are the conditions that make the regret smaller than using \ucrl on primitive actions.

\vspace{-0.05in}
\section{Regret in MDPs with Options}\label{sec:smdp2mdp}
\vspace{-0.05in}

Let $M$ be an MDP and $\mathcal{O}$ a set of options and let $M_{\O}$ be the corresponding SMDP obtained from Prop.~\ref{prop:smdp}. We index time steps (i.e., time at primitive action level) by $t$ and decision steps (i.e., time at option level) by $i$. We denote by $N(t)$ the total number of decision steps that occurred before time $t$. Given $n$ decision steps, we denote by $T_n = \sum_{i=1}^{n} \tau_i$ the number of time steps elapsed after the execution of the $n$ first options so that $N(T_n) = n$.
Any SMDP-learning algorithm $\mathfrak{A}_{\O}$ applied to $M_{\O}$ can be interpreted as a learning algorithm $\mathfrak{A}$ on $M$ so that at each time step $t$, $\mathfrak{A}$ selects an action of $M$ based on the policy associated to the option started at decision step $N(t)$. We can thus compare the performance of \ucrl and \ucrlsmdp when learning in $M$. We first need to relate the notion of average reward and regret used in the analysis of \ucrlsmdp to the original counterparts in MDPs.

\begin{lemma}\label{thm:smdp.mdp.equivalence}
Let $M$ be an MDP, $\mathcal{O}$ a set of options and $M_{\O}$ the corresponding SMDP. Let $\pi_{\O}$ be any stationary policy on $M_{\O}$ and $\pi$ the equivalent policy on $M$ (not necessarily stationary). For any state $s \in \mathcal{S_{\O}}$, any learning algorithm $\mathfrak{A}$, and any number of decision steps $n$ we have $\rho^{\pi_{\O}}(M_{\O},s) = \rho^{\pi}(M,s)$ and 
\begin{align*}\Delta(M,\mathfrak{A},T_n) = \Delta(M_{\O},&\mathfrak{A},n) + T_n \left(\rho^*(M) -\rho^*(M_{\O})\right).
\end{align*}
\end{lemma}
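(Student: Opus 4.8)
The plan is to build both claims on a single coupling observation. Since $\pi$ is by construction the policy on $M$ induced by $\pi_{\O}$ (at each primitive step $t$ it plays the action dictated by the internal policy $\pi_o$ of the option selected by $\pi_{\O}$ at the start of the option in progress), running $\pi_{\O}$ on $M_{\O}$ from $s$ and running $\pi$ on $M$ from $s$ generate the \emph{same} trajectory of primitive states, actions and per-step rewards. In particular, the SMDP reward $r_i$ of the $i$-th option is, by definition (Prop.~\ref{prop:smdp}), exactly the sum of the primitive rewards collected while that option was running, so that for every $n$
\begin{equation*}
\sum_{i=1}^{n} r_i = \sum_{t=1}^{T_n} r_t^{M},
\end{equation*}
where $r_t^{M}$ is the primitive reward at time $t$ and $T_n = \sum_{i=1}^n \tau_i$ is the number of primitive steps spanned by the first $n$ options. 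This identity is the only substantive ingredient; everything else is unwinding the definitions.

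For the gain identity, I would start from the SMDP criterion in Eq.~\ref{eq:rho}. Fix a horizon $t$ and let $N(t)$ be the number of completed options; by the coupling, $\sum_{i=1}^{N(t)} r_i$ equals the cumulative primitive reward up to time $T_{N(t)} \le t$. It differs from the full primitive reward $\sum_{t'=1}^{t} r_{t'}^{M}$ only by the reward accumulated inside the option currently in progress, which is bounded by the reward collected in a single option and is therefore an almost surely finite random variable under Asm.~\ref{asm:BoundedExpectedDurationsRewards} (this is the same argument used to justify Def.~\ref{def:regret}, where $t - T_{N(t)} \le \tau_{N(t)+1}$). Dividing by $t$ and letting $t \to \infty$, this boundary term is $o(t)$ almost surely, so the SMDP time-average and the MDP time-average admit the same $\limsup$ and $\liminf$, giving $\rho^{\pi_{\O}}(M_{\O},s) = \rho^{\pi}(M,s)$. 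The delicate point here is precisely this step: one must fix the average-reward criterion for the \emph{non-stationary} MDP policy $\pi$ in the same $\limsup/\liminf$ form as Eq.~\ref{eq:rho} and verify that the leftover reward of the in-progress option vanishes almost surely, not merely in expectation. This is where the boundedness/sub-exponential assumption is genuinely used.

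The regret identity is then purely algebraic. Expanding Def.~\ref{def:regret} on the MDP side, where primitive actions have unit duration, gives $\Delta(M,\mathfrak{A},T_n) = T_n\,\rho^*(M) - \sum_{t=1}^{T_n} r_t^{M}$, while on the SMDP side
\begin{equation*}
\Delta(M_{\O},\mathfrak{A},n) = \Big(\sum_{i=1}^n \tau_i\Big)\rho^*(M_{\O}) - \sum_{i=1}^n r_i = T_n\,\rho^*(M_{\O}) - \sum_{i=1}^n r_i.
\end{equation*}
Subtracting these two expressions and using the reward identity $\sum_{i=1}^n r_i = \sum_{t=1}^{T_n} r_t^{M}$ to cancel the reward terms leaves $\Delta(M,\mathfrak{A},T_n) - \Delta(M_{\O},\mathfrak{A},n) = T_n\big(\rho^*(M) - \rho^*(M_{\O})\big)$, which rearranges to the claimed formula. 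In summary, the coupling and reward bookkeeping are immediate; the only genuine obstacle is the careful limiting argument in the gain identity, and it is resolved by Asm.~\ref{asm:BoundedExpectedDurationsRewards}.
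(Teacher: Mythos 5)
Your proof is correct, and the regret identity is handled exactly as in the paper: the single bookkeeping fact $\sum_{i=1}^n r_i=\sum_{t=1}^{T_n}r_t^M$ plus adding and subtracting $T_n\rho^*(M_{\O})$. Where you genuinely diverge is the gain identity. The paper compares the two time-averages only along the (random) completion times $T_n$, where they agree exactly, and then needs \emph{both} limits to exist in order to pass to the full sequence: existence on the SMDP side comes from stationarity of $\pi_{\O}$ (renewal theory, App.~\ref{GainSMDPs}), while existence on the MDP side is obtained by embedding the non-stationary $\pi$ as a stationary policy in an augmented MDP with duplicated states (the Levy--Shimkin construction). You instead compare at an arbitrary primitive horizon $t$ and control the mismatch by the reward of the in-progress option; this sandwiches the MDP time-average between quantities converging to the SMDP gain, so the existence of $\lim_T\rho^{\pi}(M,s,T)$ falls out for free and the augmented-MDP construction is not needed --- a more elementary and self-contained route, at the price of an extra boundary-term estimate. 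Two small points to tighten: since the gains in Eq.~\ref{eq:rho} are defined via \emph{expectations}, you need $\mathbb{E}[\text{leftover}]/t\to 0$ rather than almost-sure vanishing; this is immediate because the expected reward of a single option is uniformly bounded over the finitely many $(s,o)$ pairs, but it is the expectation bound, not the a.s.\ one, that does the work. And to conclude that $\rho^{\pi}(M,s)$ is a genuine limit (not just matching $\limsup$ and $\liminf$ separately) you should explicitly invoke that $\overline{\rho}^{\pi_{\O}}=\underline{\rho}^{\pi_{\O}}$ for the stationary deterministic $\pi_{\O}$, which the paper establishes in Prop.~\ref{prop:optimal.policy} and App.~\ref{GainSMDPs}.
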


The linear regret term is due to the fact that the introduction of options amounts to constraining the space of policies that can be expressed in $M$. As a result, in general we have $\rho^*(M) \geq \rho^*(M_{\O}) = \max_{\pi_{\O}} \rho^{\pi_{\O}}(M_{\O})$, where $\pi_{\O}$ is a stationary deterministic policy on $M_{\O}$. Thm.~\ref{thm:smdp.mdp.equivalence} also guarantees that the optimal policy computed in the SMDP $M_{\O}$ (i.e., the policy maximizing $\rho^{\pi_{\O}}(M_{\O},s)$) is indeed the best in the subset of policies that can be expressed in $M$ by using the set of options $\O$. 
In order to use the regret analysis of Thm.~\ref{UpperBound}, we still need to show that Asm.~\ref{asm:BoundedExpectedDurationsRewards} is verified. 

\begin{lemma}\label{lem:MDPwithOptions}
An MDP provided with a set of options is an SMDP where the holding times and rewards $\tau(s,o,s')$ and $r(s,o,s')$ are distributed as sub-exponential random variables. Moreover, the holding time of an option is sub-Gaussian if and only if it is almost surely bounded.
\end{lemma}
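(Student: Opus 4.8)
The plan is to exploit the finiteness of the underlying MDP together with the a.s.\ termination of every option to obtain geometric tail bounds on the holding time, from which both claims follow. Fix an option $o=\{\mathcal{I}_o,\beta_o,\pi_o\}$ and a starting state $s$, and let $H$ denote the random holding time, i.e.\ the number of primitive steps executed before $o$ terminates. Executing $o$ induces a Markov chain on $\mathcal{S}$ under the inner policy $\pi_o$, in which termination acts as an absorbing event reached from a state $s''$ with probability $\beta_o(s'')$. Since by admissibility the option terminates in finite time with probability $1$ from every reachable state and $\mathcal{S}$ is finite, a standard absorbing-chain argument yields a horizon $m\in\mathbb{N}^*$ and a constant $\epsilon\in(0,1]$ such that, from any state, $o$ terminates within $m$ steps with probability at least $\epsilon$. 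Decomposing the trajectory into blocks of $m$ steps then gives the geometric tail bound $\mathbb{P}(H>km)\le(1-\epsilon)^k$ for all $k$.

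First I would use this tail bound to establish sub-exponentiality. A geometric tail makes the moment generating function $\mathbb{E}[e^{\lambda H}]$ finite in a neighborhood of $\lambda=0$, which is precisely the characterization of a sub-exponential random variable; explicit constants $(\sigma_\tau,b_\tau)$ are then routine functions of $m$ and $\epsilon$. For the reward I would use that the primitive rewards of the underlying MDP are bounded, so the cumulative reward collected during the option satisfies $0\le r(s,o,s')\le c\,H$ pointwise for a constant $c$ (a per-step reward bound consistent with Asm.~\ref{asm:BoundedExpectedDurationsRewards}). Its upper tail is therefore dominated by that of the sub-exponential variable $c\,H$, while its lower tail is trivially controlled since $r\ge 0$; hence $r(s,o,s')$ is sub-exponential as well, which proves the first claim.

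For the equivalence, the easy direction is that any almost surely bounded random variable is sub-Gaussian (e.g.\ by Hoeffding's lemma), so boundedness implies sub-Gaussianity. For the converse I would argue by contraposition: if $H$ is not almost surely bounded, then under $\pi_o$ there is a state $s^\dagger$, reachable before termination with some probability $c>0$, that lies on a cycle of length $\ell$ returning to $s^\dagger$ without terminating with probability $p>0$ (such a cycle must exist in a finite chain whenever arbitrarily long non-terminating trajectories occur with positive probability). Traversing this cycle $k$ times gives $\mathbb{P}(H>k\ell)\ge c\,p^{k}$, so $H$ admits an exponential lower bound on its tail with rate $\gamma=p^{1/\ell}\in(0,1)$, i.e.\ $\mathbb{P}(H>t)\ge c\,\gamma^{t}$ along $t=k\ell$. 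Since $\gamma^t=e^{-t\log(1/\gamma)}$ eventually dominates any bound of the form $2e^{-t^2/(2\sigma^2)}$, this exponentially heavy tail is incompatible with sub-Gaussian decay, so $H$ cannot be sub-Gaussian.

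The main obstacle I anticipate is this converse direction: rigorously extracting the non-terminating cycle and the accompanying geometric lower bound on $\mathbb{P}(H>t)$. Concretely, one must show that ``$H$ unbounded'' forces the existence of a recurrent set of the $\pi_o$-chain disjoint from the termination set and entered with positive probability, and then lower-bound the probability of remaining in it through $k$ returns by decomposing trajectories into an entry segment followed by $k$ cycle repetitions; the finiteness of $\mathcal{S}$ is what makes both the uniform upper bound $\mathbb{P}(H>km)\le(1-\epsilon)^k$ and the existence of such a cycle available. The sub-exponential upper bounds and the ``bounded $\Rightarrow$ sub-Gaussian'' direction are, by contrast, standard once the geometric bound is in place.
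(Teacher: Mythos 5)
Your proof is correct, but it follows a genuinely different route from the paper's. The paper identifies the holding time as having a \emph{discrete phase-type distribution}: it augments the $\pi_o$-chain with explicit absorbing states, writes the transition matrix in canonical form with sub-stochastic block $Q$, computes the Laplace transform exactly as $\mathbb{E}[e^{\lambda(\tau-\overline{\tau})}] = e^{\lambda(1-\overline{\tau})}e_s^\intercal (I-e^{\lambda}Q)^{-1}Re_{s'}$, and reads everything off the spectral radius $\rho(Q)<1$: sub-exponentiality because the MGF is finite for $|\lambda|<-\log\rho(Q)$, and the sub-Gaussian/bounded dichotomy because $\rho(Q)=0$ iff $Q$ is nilpotent iff the option has no cycle iff the holding time is a.s.\ bounded, while $\rho(Q)>0$ makes the MGF blow up at a finite $\lambda$, which is incompatible with the sub-Gaussian MGF bound. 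You instead work on the tail side: a uniform $m$-step termination probability $\epsilon$ gives $\mathbb{P}(H>km)\le(1-\epsilon)^k$ (hence sub-exponentiality via finiteness of the MGF near $0$), and for the converse you extract a non-terminating cycle by pigeonhole to get an exponential \emph{lower} bound on the tail, which rules out Gaussian decay. Both arguments hinge on the same combinatorial fact (unboundedness $\Leftrightarrow$ existence of a cycle avoiding termination); the paper's version is more algebraic and yields the explicit transform and the nilpotency characterization for free, whereas yours is more elementary and self-contained but gives less explicit constants. Your treatment of the reward is essentially identical to the paper's: both use the pointwise domination $0\le r\le R_{\max}\tau$ to transfer sub-exponentiality from the holding time, handling $\lambda\le 0$ trivially via $r\ge 0$. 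The one step you flagged as delicate --- rigorously producing the cycle and the bound $\mathbb{P}(H>k\ell)\ge c\,p^k$ --- does go through exactly as you sketch, using finiteness of $\mathcal{S}$ and the Markov property to iterate the cycle, so there is no gap.
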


This result is based on the fact that once an option is executed, we obtain a Markov chain with absorbing states corresponding to the states with non-zero termination probability $\beta_o(s)$ and the holding time is the number of visited states before reaching a terminal state. While in general this corresponds to a sub-exponential distribution, whenever the option has a zero probability to reach the same state twice before terminating (i.e., there is no cycle), then the holding times become bounded. Finally, we notice that no intermediate case between sub-exponential and bounded distributions is admissible (e.g., sub-Gaussian). Since these are the two cases considered in Thm.~\ref{UpperBound}, we can directly apply it and obtain the following corollary.

\begin{corollary}\label{cor:mdp.options.regret}
For any MDP $M=\{\calS,\A,p,r\}$ with $r(s,a,s') \in [0,R_{\max}]$ and a set of options $\O$, consider the resulting SMDP $M_{\O}=\{\calS_{\O},\A_{\O},p_{\O},r_{\O},\tau_{\O}\}$. Then with probability of at least $1-\delta$, it holds that for any initial state $s \in \mathcal{S}$ and any $n>1$, the regret of \ucrlsmdp in the original MDP is bounded as
\begin{align*}
O \Big( \big( D_{\O} \sqrt{S_{\O}}+ \mathcal{C}(M_{\O},n,\delta) \big) &R_{\max}^{_{\O}} \sqrt{S_{\O} O n \log \left( \frac{n}{\delta}\right)}\Big) \\
&+ T_n \left(\rho^*(M)-\rho^*(M_{\O})\right),
\end{align*}
where $O$ is the number of options.
\end{corollary}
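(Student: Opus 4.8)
The plan is to obtain the statement as an essentially immediate consequence of the three preceding results: the regret decomposition of Lemma~\ref{thm:smdp.mdp.equivalence}, the distributional characterization of Lemma~\ref{lem:MDPwithOptions}, and the SMDP regret bound of Theorem~\ref{UpperBound}. The only genuine work is to verify that the SMDP $M_\O$ induced by the options (Prop.~\ref{prop:smdp}) satisfies all the hypotheses of Assumption~\ref{asm:BoundedExpectedDurationsRewards}, so that Theorem~\ref{UpperBound} may be invoked with the option-level parameters $S_\O$, $O$, $D_\O$, and $R_{\max}^{\O}$.

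First I would check Assumption~\ref{asm:BoundedExpectedDurationsRewards} for $M_\O$. The distributional part is exactly Lemma~\ref{lem:MDPwithOptions}: the per-option rewards $r_\O(s,o,s')$ and holding times $\tau_\O(s,o,s')$ are sub-exponential (case 1), and they fall into the bounded case (case 2) precisely when every option is cycle-free. It then remains to bound the expected quantities. Since $\O$ is admissible, every option terminates a.s.\ in finite time and, being sub-exponential, has finite expected holding time, so $\overline{\tau}_\O(s,o) \le \tau_{\max} < \infty$; moreover each option executes at least one primitive step (the sum defining $p_\O$ in Prop.~\ref{prop:smdp} starts at $k=1$), hence $\overline{\tau}_\O(s,o) \ge 1$ and one may take $\tau_{\min}=1>0$. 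For the reward rate I would use that each primitive reward lies in $[0,R_{\max}]$: the cumulative reward collected over $k$ primitive steps is at most $k\,R_{\max}$, so $\overline{r}_\O(s,o) \le R_{\max}\,\overline{\tau}_\O(s,o)$, i.e.\ the per-unit-time reward of every option is bounded by $R_{\max}^{\O} = R_{\max}$, which is exactly the condition $\max_{s,o}\{\overline{r}_\O(s,o)/\overline{\tau}_\O(s,o)\} \le R_{\max}^{\O}$. Finiteness of the diameter $D_\O = D(M_\O)$ follows from the standing assumption that $M_\O$ is communicating.

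With Assumption~\ref{asm:BoundedExpectedDurationsRewards} verified, Theorem~\ref{UpperBound} applies to the learning problem on $M_\O$ and yields, with probability at least $1-\delta$,
\[
\Delta(M_\O,\mathfrak{A},s,n) = O\Big( \big( D_\O \sqrt{S_\O} + \mathcal{C}(M_\O,n,\delta)\big)\, R_{\max}^{\O} \sqrt{S_\O\, O\, n \log(n/\delta)} \Big),
\]
where the action count $A$ is replaced by the number of options $O$ and $\mathcal{C}(M_\O,n,\delta)$ takes the sub-exponential or bounded form according to which case of Lemma~\ref{lem:MDPwithOptions} holds. Finally I would translate this SMDP regret into a regret in the original MDP via Lemma~\ref{thm:smdp.mdp.equivalence}, which gives
\[
\Delta(M,\mathfrak{A},T_n) = \Delta(M_\O,\mathfrak{A},n) + T_n\big(\rho^*(M)-\rho^*(M_\O)\big);
\]
adding the linear term $T_n(\rho^*(M)-\rho^*(M_\O))$ to the high-probability bound above produces exactly the claimed inequality.

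The step I expect to require the most care is the verification of Assumption~\ref{asm:BoundedExpectedDurationsRewards}, and within it the finiteness of $\tau_{\max}$: admissibility only guarantees a.s.\ finite termination, which by itself does not imply a finite expected holding time. The resolution is that Lemma~\ref{lem:MDPwithOptions} upgrades a.s.\ finiteness to sub-exponentiality, which does entail a finite (indeed uniformly bounded) expectation, closing that gap. Everything else is bookkeeping: the substitutions $A \mapsto O$, $S \mapsto S_\O$, $D \mapsto D_\O$, $R_{\max} \mapsto R_{\max}^{\O}$, and carrying the additive linear-regret term through the single high-probability event underlying Theorem~\ref{UpperBound}.
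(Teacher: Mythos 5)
Your proposal is correct and follows essentially the same route as the paper: the authors likewise obtain the corollary by noting that Lemma~\ref{lem:MDPwithOptions} places $M_{\O}$ in one of the two distributional cases of Assumption~\ref{asm:BoundedExpectedDurationsRewards}, invoking Theorem~\ref{UpperBound} on $M_{\O}$ with $S_{\O}$, $O$, $D_{\O}$, $R_{\max}^{\O}$ in place of the SMDP parameters, and adding the linear term from the regret decomposition of Lemma~\ref{thm:smdp.mdp.equivalence}. Your extra care in verifying the expected-holding-time and reward-rate conditions (in particular $\tau_{\min}\ge 1$ and $\overline{r}_{\O}\le R_{\max}\overline{\tau}_{\O}$) is a sound filling-in of details the paper leaves implicit.
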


We can also show that the lower bound holds for MDPs with options as well. More precisely, it is possible to create an MDP and a set of options such that the lower bound is slightly smaller than that of Thm.~\ref{LowerBound}.

\begin{corollary}\label{cor:mdp.options.regret.lb}
Under the same assumptions as in Theorem \ref{LowerBound}, there exists an MDP with options such that the regret of any algorithm is lower-bounded as
\begin{align*} 
\Omega \left( \left( \sqrt{D_{\O}} + \sqrt{T_{\max} - T_{\min}} \right.\right.&\left.\left. \vphantom{\sqrt{D_{\O}}} \right) R_{\max}^{\O}\sqrt{S_{\O} O n }\right)\\ &+ T_n \left(\rho^*(M)-\rho^*(M_{\O})\right).
\end{align*}
\end{corollary}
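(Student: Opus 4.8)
The plan is to \emph{reduce} this corollary to the SMDP lower bound of Theorem~\ref{LowerBound} by exhibiting an MDP $M$ together with an admissible set of options $\O$ whose induced SMDP $M_\O$ coincides (up to the holding-time realization) with a hard instance of the type built in Theorem~\ref{LowerBound}, and then transferring the bound through Lemma~\ref{thm:smdp.mdp.equivalence}. Concretely, once such a pair $(M,\O)$ is constructed, the equivalence lemma gives
\begin{equation*}
\Delta(M,\mathfrak{A},T_n) = \Delta(M_\O,\mathfrak{A},n) + T_n\big(\rho^*(M)-\rho^*(M_\O)\big),
\end{equation*}
so lower-bounding $\mathbb{E}[\Delta(M_\O,\mathfrak{A},n)]$ by the SMDP construction and carrying the nonnegative linear term unchanged yields exactly the stated bound, with $D$, $S$, $A$, $R_{\max}$ replaced by their option counterparts $D_\O$, $S_\O$, $O$, $R_{\max}^{\O}$.

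The construction itself splits along the two independent perturbations used in Theorem~\ref{LowerBound}. First I would realize the \emph{transition} perturbation. Each option plays the role of one SMDP action: its internal policy $\pi_o$ drives a Markov chain over primitive states that, upon termination, lands in one of the SMDP states $\calS_\O$ according to the perturbed kernel $p_\O(\cdot\mid s,o)$. Because this part concerns only where options terminate and not their duration, the family of ``good'' versus ``bad'' kernels of Theorem~\ref{LowerBound} transfers verbatim, and the same information-theoretic argument produces the $\sqrt{D_\O}$ contribution, with the SMDP diameter now being $D_\O=D(M_\O)$ in the sense of Definition~\ref{def:Diameter} and the action count replaced by the number of options $O$.

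The delicate part is realizing the \emph{reward} perturbation through options, and this is exactly where $\sqrt{T_{\max}}$ degrades to $\sqrt{T_{\max}-T_{\min}}$. In the pure SMDP the cumulative reward of a decision step may be any distribution supported on $[0,\tfrac12 R_{\max}T_{\max}]$, so one can inject variance of order $(R_{\max}T_{\max})^2$ directly. With options the cumulative reward is $\sum_{t} r_t$ over a \emph{random} holding time $\tau\in[T_{\min},T_{\max}]$ with per-step rewards bounded in $[0,R_{\max}]$; the first $T_{\min}$ primitive steps are collected deterministically and therefore contribute no usable variance. I would thus generate reward confusion entirely through the random termination of the options, so that ``good'' and ``bad'' options differ only in their termination probabilities $\beta_o$ over the window of length $T_{\max}-T_{\min}$. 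The variance available to the learner, and hence the number of option executions needed to discriminate, is then governed by $\mathrm{Var}(\tau)\le\big((T_{\max}-T_{\min})/2\big)^2$, producing the $\sqrt{T_{\max}-T_{\min}}$ term in place of $\sqrt{T_{\max}}$.

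The main obstacle I expect lies in this last step: one must verify that these duration-based reward gadgets are \emph{admissible} options (terminating a.s.\ in finite time, with termination states that are again initiation states), that the two perturbations can be placed on disjoint parts of $M_\O$ so that their regret contributions remain additive exactly as in Theorem~\ref{LowerBound}, and that the holding-time variance bound $T_{\max}-T_{\min}$ is indeed the binding constraint rather than the per-step reward variance. Once these points are settled, the lower-bound machinery of Theorem~\ref{LowerBound} applies unchanged to $M_\O$, and Lemma~\ref{thm:smdp.mdp.equivalence} delivers the corollary together with its additional $T_n(\rho^*(M)-\rho^*(M_\O))$ term.
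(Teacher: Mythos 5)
Your proposal matches the paper's proof in its essential structure: the paper likewise replaces the independent reward/holding-time construction of Theorem~\ref{LowerBound} by one with strongly correlated rewards ($r = R_{\max}\mathbb{1}_{\{s=s_1\}}\tau$), so that the ``good'' and ``bad'' actions in $s_1$ differ only in the probability of the holding time being $T_{\max}$ rather than $T_{\min}$ — exactly your idea of injecting the reward confusion through the termination probabilities over the window of length $T_{\max}-T_{\min}$ — and then converts these SMDPs into admissible MDPs with options by inserting hidden intermediate states, carrying the linear term through the regret decomposition of Lemma~\ref{thm:smdp.mdp.equivalence}. The approach and the reason the $\sqrt{T_{\max}}$ term degrades to $\sqrt{T_{\max}-T_{\min}}$ are the same as in the paper.
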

\vspace{-0.1in}

This shows that MDPs with options are slightly easier to learn than SMDPs. This is due to the fact that in SMDPs resulting from MDPs with options rewards and holding times are strictly correlated (i.e., $r(s,o,s') \leq R_{\max}\tau(s,o,s')$ a.s.\ and not just in expectation for all $(s,o,s')$).

We are now ready to proceed with the comparison of the bounds on the regret of learning with options and primitive actions. We recall that for \ucrl $\Delta(M,\ucrl,s,T_n) = \wt{O}(DSR_{\max}\sqrt{AT_n})$. We first notice that\footnote{The largest per-step reward in the SMDP is defined as $R_{\max}^{\O} \geq \max_{s \in \mathcal{S},a \in \mathcal{A}} \big\lbrace \frac{\overline{r}(s,a)}{\overline{\tau}(s,a)} \big\rbrace$.} $R_{\max}^{\O} \leq R_{\max}$ and since $\mathcal{S}_{\O} \subseteq \mathcal{S}$ we have that $S_{\O} \leq S$. Furthermore, we introduce the following simplifying conditions: \textbf{1)} $\rho^*(M)=\rho^*(M_{\O})$ (i.e., the options do not prevent from learning the optimal policy), \textbf{2)} $O \leq A$ (i.e., the number of options is not larger than the number of primitive actions), \textbf{3)} options have bounded holding time (case 2 in Asm.~\ref{asm:BoundedExpectedDurationsRewards}). While in general comparing upper bounds is potentially loose, we notice that both upper-bounds are derived using similar techniques and thus they would be ``similarly'' loose and they both have almost matching worst-case lower bounds. 
Let $\mathcal{R}(M,n,\delta)$ be the ratio between the regret upper bounds of \ucrlsmdp using options $\O$ and \ucrl, then we have (up to numerical constants)
\begin{align*}
\mathcal{R}(M,n) &\leq  \frac{\big(D_{\O}\sqrt{S_{\O}} + T_{\max}\big) \sqrt{S_{\O} On \log(n/\delta)}}{ DS\sqrt{AT_n \log(T_n/\delta)}} \\
&\leq \frac{D_{\O}\sqrt{S} + T_{\max}}{D\sqrt{S}} \sqrt{\frac{n}{T_n}},
\end{align*}
where we used $n\leq T_n$ to simplify the logarithmic terms. Since $\liminf\limits_{n \rightarrow +\infty} \frac{T_n}{n} \geq \tau_{\min}$, then the previous expression gives an (asymptotic) sufficient condition for reducing the regret when using options, that is
\begin{align}\label{eq:ratio.condition}
\frac{D_{\O}\sqrt{S} + T_{\max}}{D\sqrt{S\tau_{\min}}} \leq 1.
\end{align}
In order to have a better grasp on the cases covered by this condition, let $D_{\O} = \alpha D$, with $\alpha \geq 1$. This corresponds to the case when navigating through some states becomes more difficult with options than with primitive actions, thus causing an increase in the diameter. If options are such that $T_{\max} \leq D\sqrt{S}$ and $\tau_{\min} > (1+\alpha)^2$, then it is easy to see that the condition in Eq.~\ref{eq:ratio.condition} is satisfied. This shows that even when the introduction of options partially disrupt the structure of the original MDP (i.e., $D_{\O} \geq D$), it is enough to choose options which are long enough (but not too much) to guarantee an improvement in the regret. Notice that while conditions \textbf{1)} and \textbf{2)} are indeed in favor of \ucrlsmdp, $S_{\O}$, $O$, and $T_{\max}$ are in general much smaller than $S$, $A$, $D\sqrt{S}$ ($S$ and $D$ are large in most of interesting applications). Furthermore, $\tau_{\min}$ is a very loose upper-bound on $\liminf_{n \rightarrow +\infty} \frac{T_n}{n}$ and in practice the ratio $\frac{T_n}{n}$ can take much larger values if $\tau_{\max}$ is large and many options have a high expected holding time. As a result, the set of MDPs and options on which the regret comparison is in favor of \ucrlsmdp is much wider than the one defined in Eq.~\ref{eq:ratio.condition}. Nonetheless, as illustrated in Lem.~\ref{lem:MDPwithOptions}, the case of options with bounded holding times is quite restrictive since it requires the absence of self-loops during the execution of an option. If we reproduce the same comparison in the general case of sub-exponential holding times, then the ratio between the regret upper bounds becomes
\begin{align*}
\mathcal{R}(M,n) &\leq \frac{D_{\O}\sqrt{S} + \mathcal{C}(M,n,\delta)}{D\sqrt{S}} \sqrt{\frac{n}{T_n}},\end{align*}
where $\mathcal{C}(M,n,\delta) = O(\sqrt{\log(n/\delta)})$. As a result, as $n$ increases, the ratio is always greater than 1, thus showing that in this case the regret of \ucrlsmdp is asymptotically worse than \ucrl. Whether this is an artefact of the proof or it is an intrinsic weakness of options, it remains an open question. 
 
\vspace{-0.05in}
\section{Illustrative Experiment}\label{sec:exp}
\vspace{-0.05in}

\begin{figure}[t]
\centerline{
\includegraphics[scale=0.5]{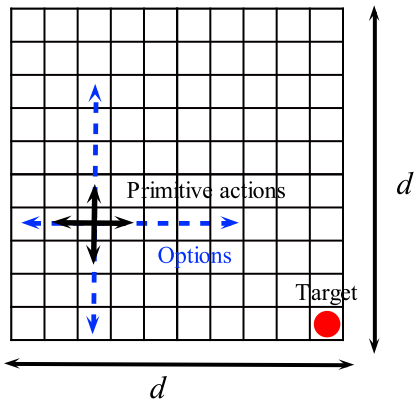}
}
\vspace{-0.1in}
\caption{\small Navigation problem.}\label{fig:navigation}
\vspace{-0.2in}
\end{figure}

\begin{figure*}
\begin{subfigure}{.33\textwidth}
  \centering
  \includegraphics[width=1.1\textwidth]{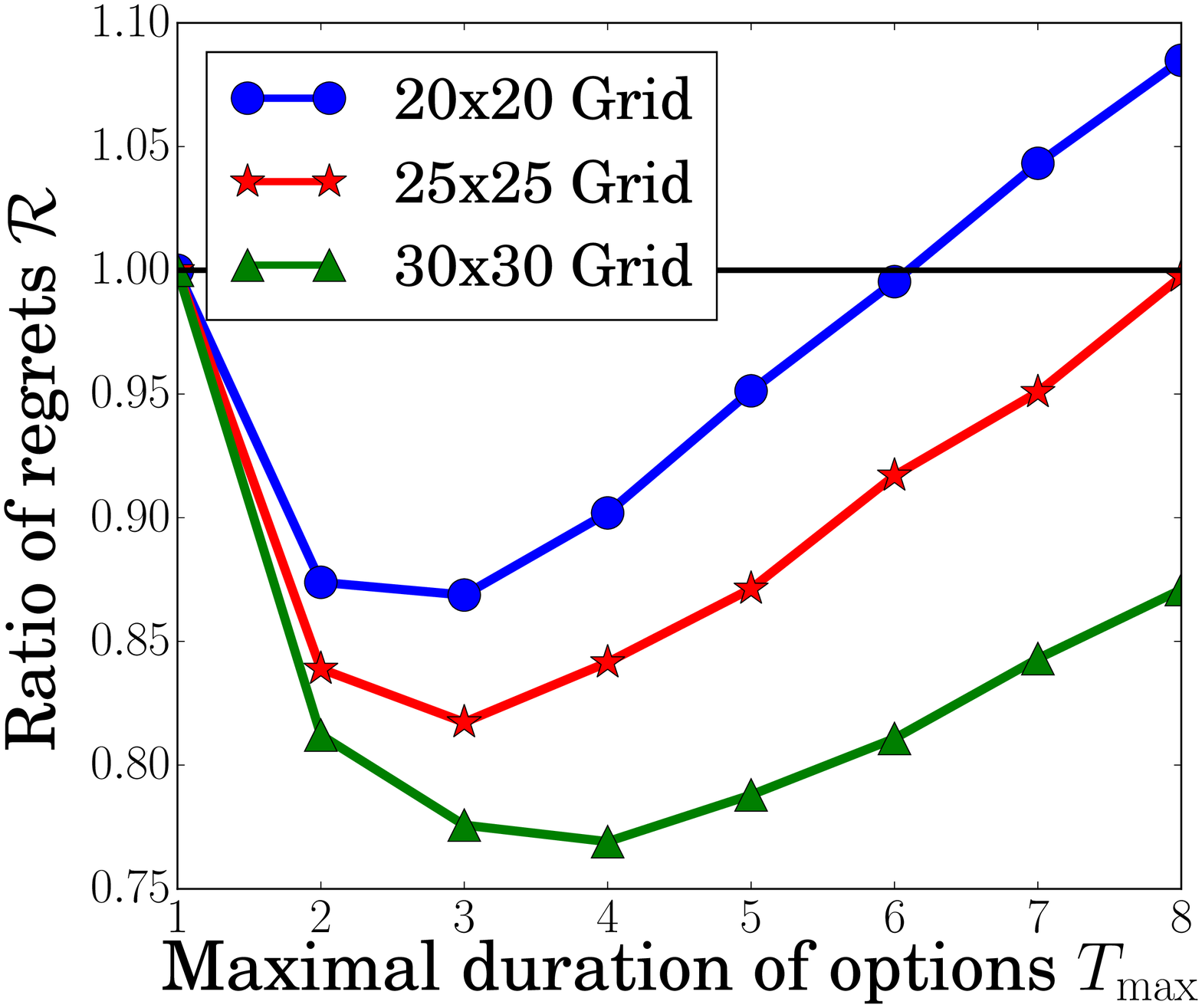}
  \caption{}
  \label{fig:results.ratio}
\end{subfigure}
\begin{subfigure}{.33\textwidth}
  \centering
  \includegraphics[width=1.1\textwidth]{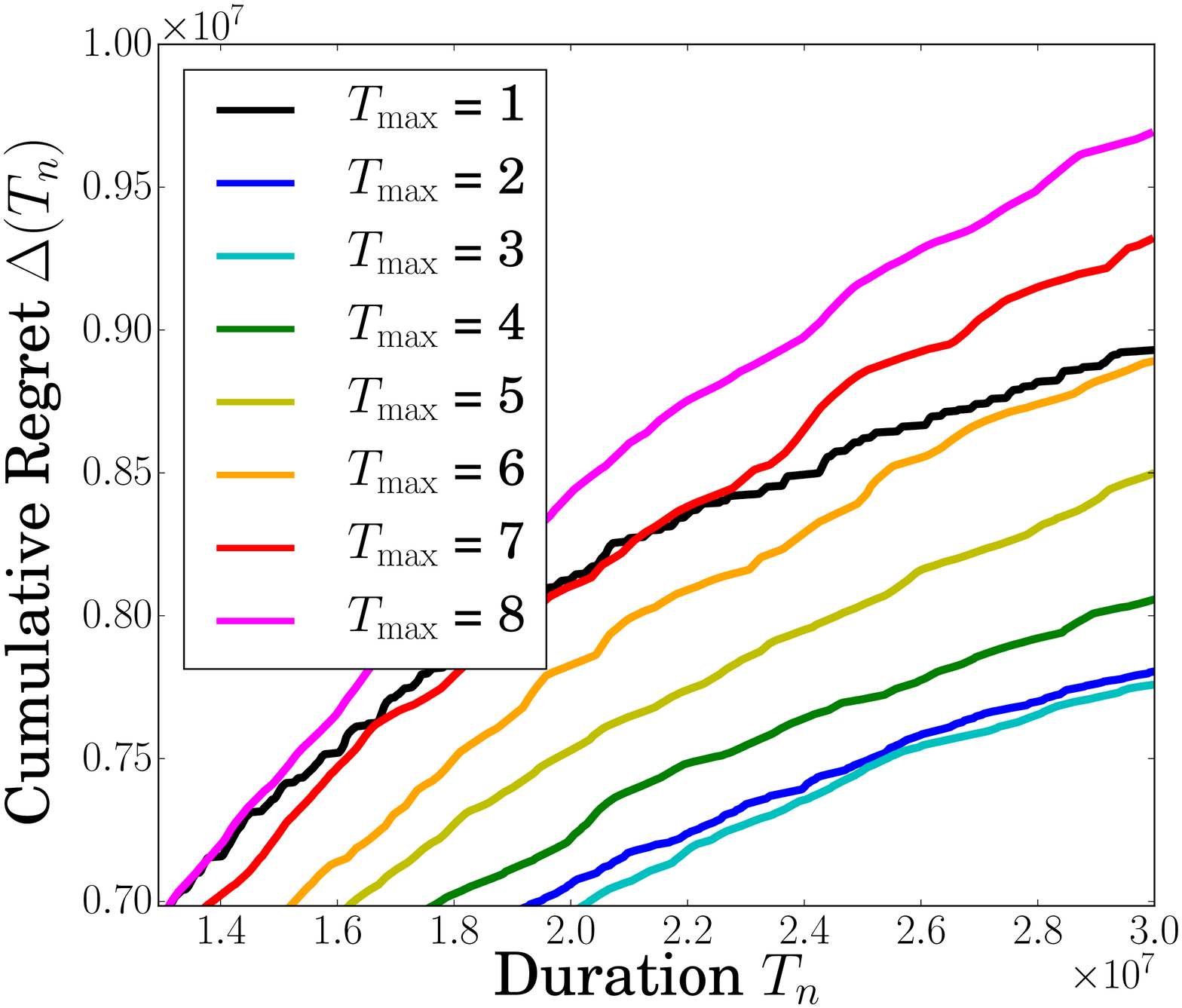}
  \caption{}
  \label{fig:results.regret}
\end{subfigure}
\begin{subfigure}{.33\textwidth}
  \centering
  \includegraphics[width=1.1\textwidth]{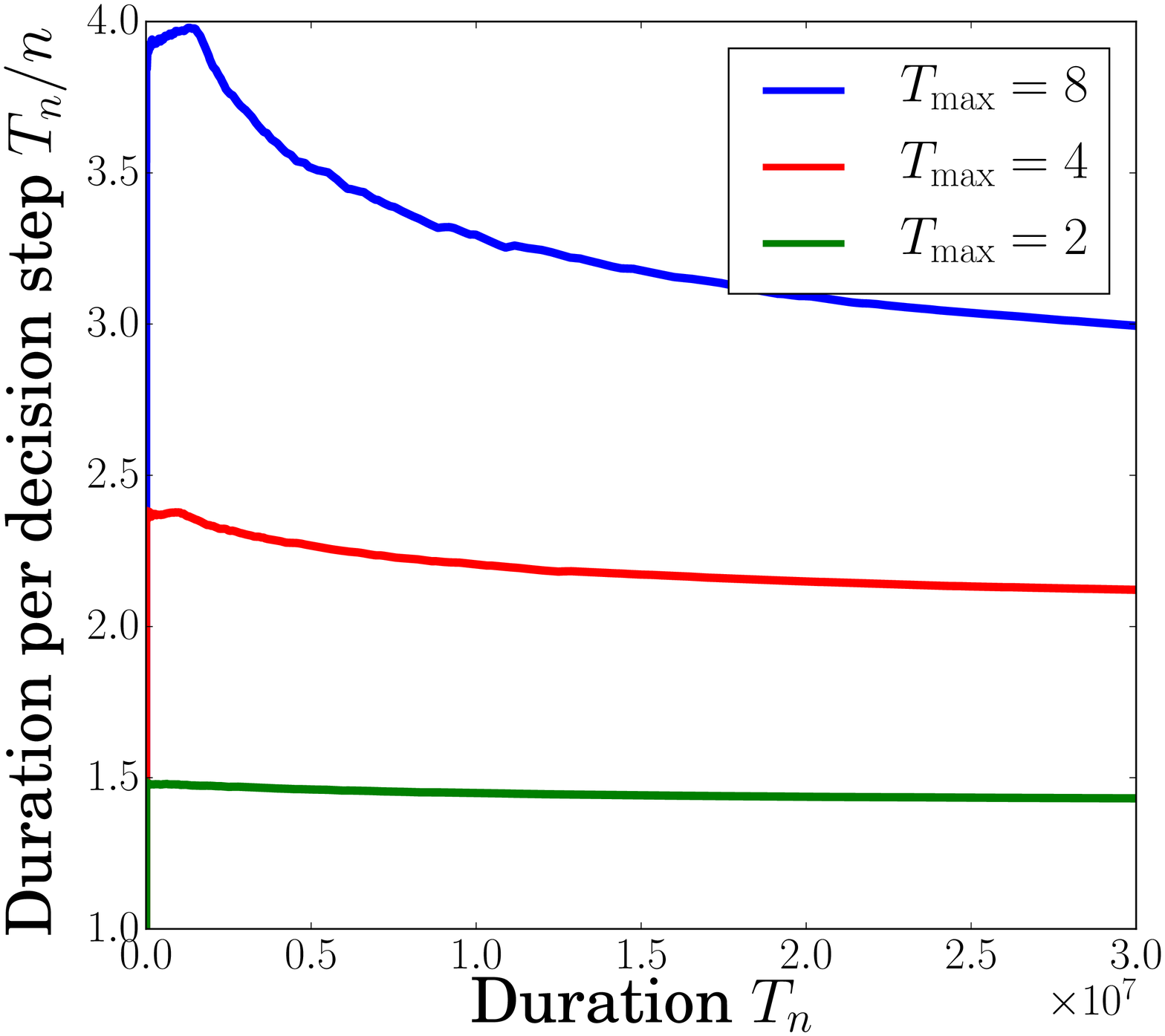}
  \caption{}
  \label{fig:results.duration}
\end{subfigure}
\vspace{-0.1in}
\caption{\textit{(a)} Ratio of the regrets with and without options for different values of $T_{\max}$; \textit{(b)} Regret as a function of $T_{n}$ for a 20x20 grid; \textit{(c)} Evolution of $T_{n}/n$ for a 20x20 grid.}
\label{fig:results}
\vspace{-0.15in}
\end{figure*}

We consider the navigation problem in Fig.~\ref{fig:navigation}.
In any of the $d^2$ states of the grid except the target, the four cardinal actions are available, each of them being successful with probability $1$. If the agent hits a wall then it stays in its current position with probability $1$. When the target state is reached, the state is reset to any other state with uniform probability. The reward of any transition is 0 except when the agent leaves the target in which case it equals $R_{\max}$. The optimal policy simply takes the shortest path from any state to the target state. The diameter of the MDP is the longest shortest path in the grid, that is $D = 2d-2$. Let $m$ be any non-negative integer smaller than $d$ and in every state but the target we define four macro-actions: \textit{LEFT}, \textit{RIGHT}, \textit{UP} and \textit{DOWN} (blue arrows in the figure). When \textit{LEFT} is taken, primitive action \textit{left} is applied up to $m$ times (similar for the other three options). For any state $s'$ which is $k\leq m$ steps on the left of the starting state $s$, we set $\beta_o(s')=1/(m-k+1)$ so that the probability of the option to be interrupted after any $k \leq m$ steps is $1/m$. If the starting state $s$ is $l$ steps close to the left border with $l<m$ then we set $\beta_o(s')=1/(l-k+1)$ for any state $s'$ which is $k\leq l$ steps on the left. As a result, for all options started $m$ steps far from any wall, $T_{\max} = m$ and the expected duration is $\tau := \tau(s,o) = (m+1)/2$, which reduces to $T_{\max} = l$ and $\tau =(l+1)/2$ for an option started \ $l < m$ step from the wall and moving towards it. More precisely, all options have an expected duration of $\tau(s,o) = \tau$ in all but in $m d$ states, which is small compared to the total number of $d^2$ states. The SMDP formed with this set of options preserves the number of state-action pairs $S_{\O} = S = d^2$ and $A'=A=4$ and the optimal average reward $\rho^*(M) = \rho^*(M')$, while it slightly perturbs the diameter $D_{\O} \leq D + m(m+1)$ (see App.~\ref{app:experiments} for further details). Thus, the two problems seem to be as hard to learn. However the (asymptotic) ratio between the regret upper bounds becomes
\begin{align*}
\lim_{n \rightarrow \infty} &\mathcal{R}(M,n) \leq \frac{(2d-2+ m^2 + m)d + m}{(2d-2)d} \left(\lim_{n \rightarrow \infty}{\sqrt{\frac{n}{T_n}}}\right)\\
&\leq \left(1+ 2\frac{m^2}{d}\right)\left(\lim_{n \rightarrow \infty}{\sqrt{\frac{n}{T_n}}}\right),
\end{align*}
where we assume $m, d \geq 2$. While a rigorous analysis of the ratio between the number of option decision steps $n$ and number of primitive actions $T_n$ is difficult, we notice that as $d$ increases w.r.t.\ $m$, the chance of executing options close to a wall decreases, since for any option only $m d$ out of $d^2$ states will lead to a duration smaller than $\tau$ and thus we can conclude that $n/T_n$ tends to $1/\tau = 2/(m+1)$ as $n$ and $d$ grow. As a result, the ratio would reduce to $(1+2m^2/d)\sqrt{2/(m+1)}$ that is smaller than 1 for a wide range of values for $m$ and $d$. Finally, the ratio is (asymptotically in $d$) minimized by $m \approx \sqrt{d}$, which gives $\mathcal{R}(M,n) = O(d^{-1/4})$, thus showing that as $d$ increases there is always an appropriate choice of $m$ for which learning with options becomes significantly better than learning with primitive actions. In Fig.~\ref{fig:results.ratio} we empirically validate this finding by studying the ratio between the actual regrets (and not their upper-bounds) as $d$ and $m$ (i.e., $T_{\max}$) vary, and with a fixed value of $T_n$ that is chosen big enough for every $d$. As expected, for a fixed value of $d$, the ratio $\mathcal{R}$ first decreases as $m$ increases, reaches a minimum and starts increasing to eventually exceed 1. As $d$ increases, the value of the minimum decreases, while the optimal choice of $m$ increases. This behaviour matches the theory, which suggests that the optimal choice for $m$ increases as $O(\sqrt{d})$. In Fig.~\ref{fig:results.regret} we report the cumulative regret and we observe that high values of $T_{\max}$ worsen the learning performances w.r.t. learning without options ($T_{\max} = 1$, plotted in black). Finally, Fig.~\ref{fig:results.duration} shows that, as $n$ tends to infinity, $T_n/n$ tends to converge to $(m+1)/2$ when $m \ll d$, whereas it converges to slightly smaller values when $m$ is close to $d$ because of the truncations operated by walls.

\textbf{Discussion.} Despite its simplicity, the most interesting aspect of this example is that the improvement on the regret is not obtained by trivially reducing the number of state-action pairs, but it is intrinsic in the way options change the dynamics of the exploration process. The two key elements in designing a successful set of options $\O$ is to preserve the average reward of the optimal policy and the diameter. The former is often a weaker condition than the latter. In this example, we achieved both conditions by designing a set $\O$ where the termination conditions allow any option to end after only one step. This preserves the diameter of the original MDP (up to an additive constant), since the agent can still navigate at the level of granularity of primitive actions. Consider a slightly different set of options $\O'$, where each option moves exactly by $m$ steps (no intermediate interruption). The number of steps to the target remains unchanged from any state and thus we can achieve the optimal performance. Nonetheless, having $\pi^*$ in the set of policies that can be represented with $\O'$ does not guarantee that the \ucrlsmdp would be as efficient in learning the optimal policy as \ucrl. In fact, the expected number of steps needed to go from a state $s$ to an adjacent state $s'$ may significantly increase. Despite being only one primitive action apart, there may be no sequence of options that allows to reach $s'$ from $s$ without relying on the random restart triggered by the target state. A careful analysis of this case shows that the diameter is as large as $D_{\O'} = D(1+m^2)$ and there exists no value of $m$ that satisfies Eq.~\ref{eq:ratio.condition} (see App.~\ref{app:experiments}).
 
\vspace{-0.1in}
\section{Conclusions}\label{sec:conclusions}
\vspace{-0.1in}

We derived upper and lower bounds on the regret of learning in SMDPs and we showed how these results apply to learning with options in MDPs. Comparing the regret bounds of \ucrlsmdp with \ucrl, we provided sufficient conditions on the set of options and the MDP (i.e., similar diameter and average reward) to reduce the regret w.r.t.\ learning with primitive actions. To the best of our knowledge, this is the first attempt of explaining when and how options affect the learning performance. Nonetheless, we believe that this result leaves space for improvements. In fact, Prop.~\ref{prop:smdp} implies that the class of SMDPs is a strict superset of MDPs with options. This suggests that a more effective analysis could be done by leveraging the specific structure of MDPs with options rather than moving to the more general model of SMDPs. This may actually remove the additional $\sqrt{\log(n/\delta)}$ factor appearing because of sub-exponential distributions in the \ucrlsmdp regret. An interesting direction of research is to use this theoretical result to provide a more explicit and quantitative objective function for option discovery, in the line of what is done in~\citep{brunskill2014pac-inspired}. Finally, it would be interesting to extend the current analysis to more sophisticated hierarchical approaches to RL such as MAXQ~\citep{dietterich2000hierarchical}.

{\small
\textbf{Acknowledgement} This research is supported in part by a grant from CPER Nord-Pas de Calais/FEDER DATA Advanced data science and technologies 2015-2020, CRIStAL (Centre de Recherche en Informatique et Automatique de Lille), and the French National Research Agency (ANR) under project ExTra-Learn n.ANR-14-CE24-0010-01.
}
 
\newpage
\begin{small}

\bibliographystyle{plainnat}
\end{small}

\newpage
\appendix
\onecolumn

\section{Optimal average reward in discrete and continuous SMDPs: existence and computation}\label{GainSMDPs}

In this section we prove Proposition~\ref{prop:optimal.policy} and Lemma~\ref{lem:validity.evi}. Since extended value iteration is run on SMDP $ \widetilde{M}_k^+$ with continuous actions (the choice of the transition probability), in the following we consider both the continuous and discrete case at the same time. In order to have a more rigorous treatment of SMDPs, we introduce further notations from~\citep{Puterman94}.

A decision rule is a function $d: \mathcal{H} \rightarrow \Delta(\mathcal{A})$ where $\mathcal{H}$ is the set of possible histories and $\Delta(\mathcal{A})$ is the set of probability distributions over $\mathcal{A}$. For an SMDP $M$, we will denote by $D_M^{HR}$ the set of history-dependent randomized decision rules and $D_M^{MD}$ the subset of Markovian deterministic decision rules ($D_M^{MD} \subset D_M^{HR}$). A history-dependent randomized policy is a sequence of elements of $D_M^{HR}$ indexed by the decision steps, i.e., $\pi = (d_1, d_2, ...) \in (D_M^{HR})^{\mathbb{N}}$, while a stationary deterministic policy is a constant sequence of elements of $D_M^{MD}$: $\pi = (d, d, ...) = d^\infty$. The set of history-dependent randomized policies will be denoted $\Pi_M^{HR}$ and the subset of stationary deterministic policies will be denoted $\Pi_M^{SD}$: $\Pi_M^{SD} \subset \Pi_M^{HR}$. We also consider the more general case where the set of available actions may depend on the state, i.e., there exists a set $\A_s$ for each $s\in\calS$.

\subsection{Optimality criterion}

We start by defining the optimality criterion in SMDPs. Unlike for MDPs, where the average reward of a fixed policy is uniquely defined, in SMDPs there are three different definitions that are usually encountered in the literature (see~\citealt{Schal92}, \citealt{Federgruen83}, and \citealt{Ross70}).\footnote{Notice that the definition we provide in Eq.~\ref{eq:rho} of Prop.~\ref{prop:optimal.policy} is $\rho_1$.}

\begin{definition}\label{AverageReward}
Denote $N(t) = \sup \bigg\{n : n \in \mathbb{N}, \ \sum_{i=1}^{n}{\tau_i} \leq t \bigg\}$ the number of decision steps that occured before time $t$. For any $\pi \in \Pi_M^{HR}$ and $s\in \mathcal{S}$, we define:
\begin{align}\label{eq5}
\overline{\rho_1}^\pi(s) = \limsup_{t \rightarrow + \infty} \mathbb{E}^\pi \bigg[ \frac{\sum_{i=1}^{N(t)}{r_i}}{t} \bigg| s_0 =s \bigg], \enspace
\underline{\rho_1}^\pi(s) = \liminf_{t \rightarrow + \infty} \mathbb{E}^\pi \bigg[ \frac{\sum_{i=1}^{N(t)}{r_i}}{t} \bigg| s_0 =s \bigg]
\end{align}
\begin{align}\label{eq6}
\begin{split}
\overline{\rho_2}^\pi(s) = \limsup_{n \rightarrow + \infty}  \frac{\mathbb{E}^\pi \big[ \sum_{i=1}^{n}{r_i} \big| s_0 =s \big]}{\mathbb{E}^\pi \big[ \sum_{i=1}^{n}{\tau_i} \big| s_0 =s \big]}, \enspace
\underline{\rho_2}^\pi(s) = \liminf_{n \rightarrow + \infty}  \frac{\mathbb{E}^\pi \big[\sum_{i=1}^{n}{r_i} \big| s_0 =s \big]}{\mathbb{E}^\pi \big[ \sum_{i=1}^{n}{\tau_i} \big| s_0 =s \big]}
\end{split}
\end{align}
and for any $d^\infty \in \Pi_M^{SD}$ and $s\in \mathcal{S}$ we define:
\begin{align}\label{eq7}
\rho_3^{d^\infty}(s) = \sum_{\alpha = 1}^{\nu(d)}{p^d(\alpha|s) g_{d}(\alpha)}, \quad\text{with}\quad g_{d}(\alpha) = \frac{\sum_{s \in R_{\alpha}^d}{\mu_{\alpha}^d(s)\overline{r}(s,d(s))}}{\sum_{s \in R_{\alpha}^d}{\mu_{\alpha}^d(s)\overline{\tau}(s,d(s))}}
\end{align}
where $\nu(d)$ is the number of positive recurrent classes under $d^\infty$, $p^d(\alpha|s)$ is the probability of entering positive recurrent class $\alpha$ starting from $s$ and following policy $d^\infty$, $R_{\alpha}^d$ is the set of states of class $\alpha$ and $\mu_{\alpha}^d$ is the stationary probability distribution of class $\alpha$.
\end{definition}

\subsection{Proof of Proposition~\ref{prop:optimal.policy}}

We say that $(d^*)^\infty$ is (3)-average-optimal if for all states $s \in \mathcal{S}$ and decision rules $d \in D_M^{MD}$, $\rho_3^{(d^*)^\infty}(s) \geq \rho_3^{d^\infty}(s)$. We say that $\pi^*$ is (1)-average-optimal (respectively (2)-average-optimal) if for all states $s \in \mathcal{S}$ and all $\pi \in \Pi_M^{HR}$, $\underline{\rho_1}^{\pi^*}(s) \geq \overline{\rho_1}^\pi(s)$ (respectively $ \underline{\rho_2}^{\pi^*}(s) \geq \overline{\rho_2}^\pi(s)$). We prove a slightly more general formulation than Proposition~\ref{prop:optimal.policy}.

\begin{proposition}\label{prop:EquivalenceGains}
If $M$ is communicating and the expected holding times and rewards are finite, then
\begin{itemize}
\item There exists a stationary policy $\pi^* = (d^*)^\infty$ which is (1,2,3)-average-optimal.
\item All optimal values are equal and constant and we will denote this value by $\rho^*$:
$$\forall s \in \mathcal{S}, \ \overline{\rho_1}^{(d^*)^\infty}(s) = \underline{\rho_1}^{(d^*)^\infty}(s) = \overline{\rho_2}^{(d^*)^\infty}(s) = \underline{\rho_2}^{(d^*)^\infty}(s) = \rho_3^{(d^*)^\infty}(s) = \rho^*.$$ 
\end{itemize}
\end{proposition}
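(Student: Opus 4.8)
The plan is to route everything through the equivalent MDP $M_{\Meq}$ of Eq.~\ref{eq:eq.MDP} and to exploit the classical average-reward theory for finite communicating MDPs, then transfer the conclusions back to the three criteria of Def.~\ref{AverageReward}. First I would record that the data-transformation preserves the communicating property: for $s\neq s'$ one has $p_{\Meq}(s'\mid s,a)=\frac{\tau}{\overline\tau(s,a)}p(s'\mid s,a)$, which is strictly positive exactly when $p(s'\mid s,a)>0$ (using $0<\tau<\tau_{\min}\le\overline\tau(s,a)$), so the reachability structure is unchanged and $M_{\Meq}$ is communicating whenever $M$ is. Standard results for finite communicating MDPs~\citep{Puterman94} then give a constant optimal gain $g^*$, a stationary deterministic policy $(d^*)^\infty$ attaining it from every state, and a solution $(v^*,g^*)$ of the MDP optimality equation. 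By Prop.~\ref{lem:equivalence.uniformization} this yields a solution $(u^*,\rho^*)$ of the SMDP optimality equation (Eq.~\ref{eq:optimality.eq}) with $\rho^*=g^*$.

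Next I would identify the three criteria on stationary policies. For a fixed $d^\infty\in\Pi_M^{SD}$ the embedded chain splits into recurrent classes $R_\alpha^d$ with stationary laws $\mu_\alpha^d$; applying the renewal--reward theorem inside each class shows that the long-run reward rate is $g_d(\alpha)$, while the transient portion of the trajectory contributes only $o(t)$. Averaging over the absorption probabilities $p^d(\alpha\mid s)$ gives $\overline{\rho_1}^{d^\infty}(s)=\underline{\rho_1}^{d^\infty}(s)=\overline{\rho_2}^{d^\infty}(s)=\underline{\rho_2}^{d^\infty}(s)=\rho_3^{d^\infty}(s)$ for every $s$; this is the SMDP counterpart of the MDP gain and is exactly the content of the Federgruen--Ross theory~\citep{Federgruen83,Ross70}. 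In particular $\rho_3^{(d^*)^\infty}=g^*=\rho^*$ is constant, so $(d^*)^\infty$ attains $\rho^*$ under all three definitions, and the collapse of $\limsup$ and $\liminf$ already yields $\underline{\rho_1}^{(d^*)^\infty}=\rho^*$.

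It then remains to show the upper bound $\overline{\rho_1}^{\pi}(s),\overline{\rho_2}^{\pi}(s)\le\rho^*$ for every history-dependent $\pi\in\Pi_M^{HR}$. The optimality equation gives, for every action, $\overline r(s,a)-\rho^*\overline\tau(s,a)\le u^*(s)-\sum_{s'}p(s'\mid s,a)u^*(s')$; taking expectations along the trajectory of $\pi$, using the tower rule so that $\mathbb{E}[\sum_{s'}p(s'\mid s_{i-1},a_i)u^*(s')]=\mathbb{E}[u^*(s_i)]$, and telescoping over $n$ decision steps yields $\mathbb{E}^\pi[\sum_{i=1}^n r_i]-\rho^*\,\mathbb{E}^\pi[\sum_{i=1}^n\tau_i]\le\max_s u^*(s)-\min_s u^*(s)$. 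Dividing by $\mathbb{E}^\pi[\sum_i\tau_i]\ge n\tau_{\min}\to\infty$ bounds $\overline{\rho_2}^\pi$ by $\rho^*$, and the same inequality becomes an equality under $(d^*)^\infty$, reconfirming $\underline{\rho_2}^{(d^*)^\infty}=\rho^*$. For $\overline{\rho_1}$ I would run the identical telescoping stopped at the random horizon $N(t)$, invoking an optional-stopping (Wald-type) argument to obtain $\mathbb{E}^\pi[\sum_{i=1}^{N(t)}r_i]-\rho^*\mathbb{E}^\pi[\sum_{i=1}^{N(t)}\tau_i]\le\max_s u^*(s)-\min_s u^*(s)$, then divide by $t$ and use $\sum_{i=1}^{N(t)}\tau_i\le t$ together with $t-\sum_{i=1}^{N(t)}\tau_i=o(t)$ a.s.\ (as established in the discussion following Def.~\ref{def:regret}) to conclude $\overline{\rho_1}^\pi\le\rho^*$. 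Combining these upper bounds with the attainment by $(d^*)^\infty$ shows that $(d^*)^\infty$ is simultaneously (1,2,3)-average-optimal and that all its values equal the constant $\rho^*$.

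I expect the main obstacle to be the rigorous handling of the random decision horizon $N(t)$ in the $\rho_1$ argument: unlike the deterministic horizon used for $\rho_2$, here the number of summed terms is itself random and correlated with the rewards and holding times, so the telescoping must be justified via optional stopping and a boundedness/uniform-integrability control on $u^*(s_{N(t)})$. The second delicate point is the multichain identification $\rho_1=\rho_2=\rho_3$ for an arbitrary stationary policy, which requires the renewal--reward argument to be applied class-by-class and the transient contribution shown negligible; both are standard but must be stated carefully, because $M$ being communicating does not force an individual stationary policy to be unichain.
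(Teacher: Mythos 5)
Your overall architecture matches the paper's: uniformize to $M_{\Meq}$, get a solution $(v^*,g^*)$ of the MDP optimality equation for a communicating MDP, pull it back to $(u^*,\rho^*)$ via Prop.~\ref{lem:equivalence.uniformization}, and then show that a greedy/stationary policy attains $\rho^*$ under all three criteria while no history-dependent policy can exceed it. Where you genuinely diverge is in how the optimality over $\Pi_M^{HR}$ is established: the paper outsources this to the literature (the main theorem and Props.~5.4--5.5 of \citealt{Schal92} for criterion~(1), Theorem~7.6 of \citealt{Ross70} for criterion~(2), Theorem~11.4.1 of \citealt{Puterman94} and renewal theory for the existence of the limits), whereas you re-derive it from scratch via the verification/telescoping argument $\overline r(s,a)-\rho^*\overline\tau(s,a)\le u^*(s)-\sum_{s'}p(s'\mid s,a)u^*(s')$, summed along the trajectory, with optional stopping at $N(t)$ for criterion~(1). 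This is a legitimate and more self-contained route, and you correctly identify its delicate points (integrability at the random horizon; note also that the paper additionally needs the compact-action extended SMDP $\widetilde M_k^+$, for which your argument goes through once the Weierstrass step guaranteeing that the max in Eq.~\ref{eq21} is attained is in place). For the greedy policy the inequality is an equality state-by-state, so every recurrent class has gain $\rho^*$ and all five quantities collapse to $\rho^*$, which is all the proposition asserts.

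There is, however, one incorrect intermediate claim: you assert that for an \emph{arbitrary} stationary deterministic policy $d^\infty$ one has $\overline{\rho_1}^{d^\infty}(s)=\overline{\rho_2}^{d^\infty}(s)=\rho_3^{d^\infty}(s)$ for every $s$. The identities $\overline{\rho_1}^{d^\infty}=\underline{\rho_1}^{d^\infty}=\rho_3^{d^\infty}$ do hold (Lemma~2.7 of \citealt{Schal92} plus renewal theory), but $\rho_1^{d^\infty}=\rho_2^{d^\infty}$ fails in general for multichain stationary policies: $\rho_1$ averages the per-class gains $g_d(\alpha)$ over the absorption probabilities, while $\rho_2$ is a ratio of expectations, and these differ as soon as two reachable recurrent classes have different gains and different mean holding times (the paper itself points to Example~2.1 of \citealt{JianyongX04} for a counterexample, and only claims $\rho_1=\rho_2$ for unichain policies). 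Communicating $M$ does not force individual stationary policies to be unichain, so this step is genuinely false as stated. Fortunately your proof does not actually need it: the conclusion only concerns $(d^*)^\infty$, for which the optimality equation holds with equality in every state, hence $g_{d^*}(\alpha)=\rho^*$ for every recurrent class and the collapse $\rho_1=\rho_2=\rho_3=\rho^*$ holds for that particular policy; the upper bounds over $\Pi_M^{HR}$ come from your telescoping argument, not from the class-by-class identification. You should therefore either restrict the $\rho_1=\rho_2$ claim to policies whose classes all share the same gain (as $(d^*)^\infty$ does), or drop it entirely and rely on the verification argument alone.
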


\begin{proof}

\textbf{Step 1: Optimality equation of a communicating SMDP.}
We first recall the average reward optimality equations for a communicating SMDP (Eq.~\ref{eq:optimality.eq})
\begin{align}\label{eq21}
\forall s \in \mathcal{S}, \ u^*(s) = \max_{a \in \mathcal{A}_s} \left\lbrace \overline{r}(s,a) - \rho^* \overline{\tau}(s,a) + \sum_{s' \in \mathcal{S}}{p(s'|s, a)}u^*(s') \right\rbrace
\end{align}
where $u^*$ and $\rho^*$ are the bias (up to additive constant) and average reward respectively. Since we need to analyse both the case where $\mathcal{A}_s$ is finite and the case where $\mathcal{A}_s$ is continuous, we verify that it is appropriate to consider the $\max$ instead of $\sup$ in the previous expression. For the original SMDP $M$, $\mathcal{A}_s$ is finite and the maximum is well-defined. For the extended SMDPs $\widetilde{M}_k^+$ considered while computing the optimistic SMDP, $\widetilde{\mathcal{A}}_{k,s}^+$ is compact and $\overline{r}(s,a)$, $\overline{\tau}(s,a)$ and $p(\cdotp \mid s, a)$ are continuous in $\widetilde{\mathcal{A}}_{k,s}^+$ by the very definition of $\widetilde{M}_k^+$. The function $\overline{r}(s,a) - \rho^* \overline{\tau}(s,a) + \sum_{s' \in \mathcal{S}}{p(s'|s, a)}u^*(s')$ is thus continuous on $\widetilde{\mathcal{A}}_{k,s}^+$ compact and by Weierstrass theorem, we know that the maximum is reached (i.e., there exists a maximizer). As a result, Eq.~\ref{eq21} is well-defined and we can study the existence and properties of its solutions.

\textbf{Step 2: Data-transformation (uniformization) of an SMDP.}
The structure of EVI is based on a data-transformation (also called "uniformization") which turns the initial SMDP $M$ into an ``equivalent'' MDP $M_{\Meq} = \big\{ \mathcal{S}, \mathcal{A}, p_{\Meq}, r_{\Meq} \big\}$ defined as in Eq.~\ref{eq:eq.MDP}. As a result, we can just apply standard MDP theory to the equivalent MDP. The average optimality equation of $M_{\Meq}$ is~\citep{Puterman94}
\begin{align}\label{eq23}
\forall s \in \mathcal{S}, \ v^*(s) = \max_{a \in \mathcal{A}_s} \left\lbrace \frac{\overline{r}(s,a)}{\overline{\tau}(s,a)} - g^* + \frac{\tau}{\overline{\tau}(s,a)}\sum_{s' \in \mathcal{S}}{p(s'|s, a)}v^*(s') + \left( 1 - \frac{\tau}{\overline{\tau}(s,a)} \right) v^*(s) \right\rbrace 
\end{align}
Since $\tau < \tau_{\min}$, every Markov Chain induced by a stationary deterministic policy on $M_{\Meq}$ is necessarily aperiodic (for any action, the probability of any state to loop on itself is strictly positive). Moreover, since $M$ is assumed to be communicating, $M_{\Meq}$ is also communicating. The same holds for $\widetilde{M}_{k,\Meq}^+$ (i.e., the MDP obtained from the extended SMDP $\widetilde{M}_k^+$ after data transformation). Under these conditions, Eq.~\ref{eq23} has a solution $(v^*, g^*)$ where $g^*$ is the optimal average reward of $M_{\Meq}$ (respectively $\widetilde{M}_{k,\Meq}^+$) and the (stationary deterministic) greedy policy w.r.t.\ $v^*$ is average-optimal. Moreover, standard value iteration is guaranteed to converge and it can be applied with the stopping condition in Eq.~\ref{eq:stopping} to obtain an $\epsilon$-optimal policy in finitely many steps. This holds for both finite and compact $\mathcal{A}_s$ with continuous $\overline{r}_{\Meq}(s,a)$ and $p_{\Meq}(s' | s,a)$ (see for example \citealt{Puterman94} and \citealt{Leizarowitz13}). It is easy to show that EVI in Eq.~\ref{eq:evi} is exactly value iteration applied to $\wt{M}_{k,\Meq}^+$. Finally, Lemma 2 of~\citep{Federgruen83} (Prop. \ref{lem:equivalence.uniformization}) shows the ``equivalence'' between $M$ and $M_{\Meq}$ (respectively $\widetilde{M}_k^+$ and $\widetilde{M}_{k,\Meq}^+$): if $(v^*, g^*)$ is a solution to Eq.~\ref{eq23}, then $(\tau^ {-1}v^*, g^*)$ is a solution to Eq.~\ref{eq21} and conversely. As a result, there exists a solution $(u^*, \rho^*)$ to Eq.~\ref{eq21} for both $M$ and $\widetilde{M}_{k}^+$.

\textbf{Step 3: Existence of deterministic stationary optimal policy.}
We are now ready to prove the existence of a deterministic stationary policy that is (1,2,3)-optimal and that the corresponding optimal value is constant and equal in all three cases. We consider the case of finite and continuous $\A_s$ separately.

\textbf{Step 3a: For $M$ (finite $\mathcal{A}_s$).}
Since conditions (L), (F) and (R) of~\citep{Schal92} hold, we can apply their main theorem and obtain that
\begin{enumerate}
\item Any greedy policy $\left(d^* \right)^\infty$ w.r.t. $u^*$ is such that $\ \overline{\rho_1}^{(d^*)^\infty}(s) \geq \overline{\rho_1}^{\pi}(s)$ for any $\pi \in \Pi_M^{HR}$ and any $s \in \mathcal{S}$,
\item $\forall s \in \mathcal{S}, \ \overline{\rho_1}^{(d^*)^\infty}(s) = \rho^*$,
\end{enumerate}
where $(u^*, \rho^*)$ is a solution of Eq.~\ref{eq21}. Furthermore, from renewal theory (see e.g.,~\citealt{Tijms03} and~\citep{Ross70}) we have that $\forall d^\infty \in \Pi_M^{SD}, \ \ \overline{\rho_1}^{d^\infty} = \underline{\rho_1}^{d^\infty} =\rho_1^{d^\infty}$ (i.e., the limit exists for deterministic stationary policies), thus we can conclude that $\left(d^* \right)^\infty$ is (1)-optimal. Furthermore, by Lemma 2.7 of \citep{Schal92}: $\forall d^\infty \in \Pi_M^{SD}, \ \forall s \in \mathcal{S}, \ \overline{\rho_1}^{d^\infty}(s) = \rho_3^{d^\infty}(s)$ so $d^*$ is also necessarily (3)-optimal. Finally, by Theorem 7.6 of \citep{Ross70}, since $(u^*, \rho^*)$ is a solution of Eq.~\ref{eq21}:
\begin{enumerate}
\item Any greedy policy $\left(d^* \right)^\infty$ w.r.t. $u^*$ is such that $\ \overline{\rho_2}^{(d^*)^\infty}(s) \geq \overline{\rho_2}^{\pi}(s)$ for any $\pi \in \Pi_M^{HR}$ and any $s \in \mathcal{S}$,
\item $\forall s \in \mathcal{S}, \overline{\rho_2}^{(d^*)^\infty}(s) = \rho^*$.
\end{enumerate}
By Theorem 11.4.1 of \citep{Puterman94} we have $\forall d^\infty \in \Pi_M^{SD}, \ \overline{\rho_2} = \underline{\rho_2} = \rho_2^{d^\infty}$ and thus $d^*$ is also (2)-optimal. This concludes the proof for the finite case, which proves the statement of Prop.~\ref{prop:optimal.policy}.

\textbf{Step 3b: For $\widetilde{M}_k^+$ (compact $\widetilde{\mathcal{A}}_{k,s}^+$ with continuous rewards, holding times and transition probabilities).}
The proof is almost the same as with discrete action spaces. The only difference is that we can't apply the Theorem of \citep{Schal92} because conditions (R) and (C*) do not hold in general. However, we can use Propositions 5.4 and 5.5 of \citep{Schal92}  and we have the same result as in the discrete case (assumptions (L), (C), (P) and (I) hold in our case and we know that the optimality equation \ref{eq21} admits a solution $(u^*, \rho^*)$, see above). Since the state space is finite, the rest of the proof is rigorously the same (the Theorems and Lemmas still applies). This guarantees the same statement as Prop.~\ref{prop:optimal.policy} but for the optimistic SMDP $\wt{M}_k^+$.
\end{proof}

\subsection{Proof of Lemma~\ref{lem:validity.evi}}

\begin{proof}
From the proof of Prop.~\ref{prop:optimal.policy}, we already have that EVI converges towards the optimal average reward of $\widetilde{M}_{k,\Meq}^+$, which is also the optimal average reward of $\widetilde{M}_k^+$. We also know that the stopping criterion is met in a finite number of steps and that the greedy policy when the stopping criterion holds is $\epsilon$-optimal in the equivalent $\widetilde{M}_{k,\Meq}^+$. Then, in order to prove Lemma~\ref{lem:validity.evi}, we only need to prove that this policy is also $\epsilon$-optimal in the optimistic SMDP $\widetilde{M}_k^+$. \citet{Tijms03} shows that for any stationary deterministic policy $d^\infty \in \Pi_{\widetilde{M}_k^+}^{SD}$, the (1)-average reward is the same in the SMDP and the MDP obtained by uniformization, that is
\begin{align*}
\forall s \in \mathcal{S}, \rho_1^{d^\infty}(\widetilde{M}_k^+) = \rho^{d^\infty}(\widetilde{M}_{k,\Meq}^+).
\end{align*}
Then it immediately follows that the policy returned by EVI is (1)-$\epsilon$-optimal in $\widetilde{M}_k^+$ and since $\forall d^\infty \in \Pi_M^{SD}, \ \rho_1^{d^\infty} = \rho_3^{d^\infty}$, it is also (3)-$\epsilon$-optimal. Note that for any deterministic stationary policy $d \in \Pi_{\widetilde{M}_k^+}^{SD}$ defining a unichain Markov chain in $\widetilde{M}_{k,\Meq}^+$ (or equivalently in $\widetilde{M}_k^+$), we have: $\forall s \in \mathcal{S}, \rho_1^{d^\infty}(s) = \rho_2^{d^\infty}(s)$ and this value is constant across states (see for example chapter 11 of \citep{Puterman94}, Theorem 7.5 of \citep{Ross70} or \citep{JianyongX04}). However, in the general case, this equality does not hold (see Example 2.1 of \citep{JianyongX04}). Nevertheless, by Theorem 3.1 of \citep{JianyongX04} we have
\begin{align}\label{eq24}
\begin{split}
\forall d^\infty \in \Pi_{\widetilde{M}_k^+}^{SD},\ \forall s \in \mathcal{S},\ \big|\rho_2^{d^\infty}(\widetilde{M}_k^+, s) - \rho^{d^\infty}(\widetilde{M}_{k,\Meq}^+, s) \big| \leq \rho_{\max}^{d^\infty}(\widetilde{M}_{k,\Meq}^+) - \rho_{\min}^{d^\infty}(\widetilde{M}_{k,\Meq}^+)\\
\mbox{where } \ \rho_{\max}^{d^\infty}(\widetilde{M}_{k,\Meq}^+) = \max_{s \in \mathcal{S}}\rho^{d^\infty}(\widetilde{M}_{k,\Meq}^+, s) \ \mbox{ and } \ \rho_{\min}^{d^\infty}(\widetilde{M}_{k,\Meq}^+) = \min_{s \in \mathcal{S}}\rho^{d^\infty}(\widetilde{M}_{k,\Meq}^+, s).
\end{split}
\end{align}
If we denote by $d$ the policy returned by EVI and $\rho^*$ the optimal gain of $\widetilde{M}_{k,\Meq}^+$ and $\widetilde{M}_k^+$ we obtain
\begin{align*}
\forall s \in \mathcal{S},\ \ \rho^* - \rho_2^{d^\infty}(\widetilde{M}_k^+, s)  &= \rho^{d^\infty}(\widetilde{M}_{k,\Meq}^+, s)  -\rho_2^{d^\infty}(\widetilde{M}_k^+, s) + \rho^ * - \rho^{d^\infty}(\widetilde{M}_{k,\Meq}^+, s) \\
&\leq \rho_{\max}^{d^\infty}(\widetilde{M}_{k,\Meq}^+) - \rho_{\min}^{d^\infty}(\widetilde{M}_{k,\Meq}^+) + \epsilon \ \\
&= \rho_{\max}^{d^\infty}(\widetilde{M}_{k,\Meq}^+) - \rho^* + \rho^* - \rho_{\min}^{d^\infty}(\widetilde{M}_{k,\Meq}^+) + \epsilon \leq 2\epsilon.
\end{align*}
For the first inequality we used Eq.~\ref{eq24} and the fact that $d$ is $\epsilon$-optimal in $\widetilde{M}_{k,\Meq}^+$. For the second inequality, we used again that $d$ is $\epsilon$-optimal in $\widetilde{M}_{k,\Meq}^+$ and we also used the fact that $\rho_{\max}^{d^\infty}(\widetilde{M}_{k,\Meq}^+) \leq \rho^*$. In conclusion, the policy returned by EVI is (2)-$2\epsilon$-optimal. The remaining part of Theorem~\ref{lem:validity.evi} is thus proved for all optimality criteria.
\end{proof}

By Theorem 8.3.2 of \citep{Puterman94}, we know that there exists an optimal policy $\widetilde{d}^*$ of MDP $\widetilde{M}_{k,\Meq}^+$ that yields a unichain Markov Chain (i.e., a Markov Chain with a single positive recurrent class). The Markov Chain induced by $\widetilde{d}^*$ in $\widetilde{M}_k^+$ is thus also unichain and moreover: $\rho_1^{(\widetilde{d}^*)^\infty}(\widetilde{M}_k^+) = \rho^{(\widetilde{d}^*)^\infty}(\widetilde{M}_{k,\Meq}^+) = \rho^*(\widetilde{M}_{k,\Meq}^+) = \rho^*(\widetilde{M}_k^+)$. We have seen that for any policy $d \in \Pi_{\widetilde{M}_k^+}^{SD}$ yielding a unichain Markov Chain $\rho_1^{d^\infty}(\widetilde{M}_k^+) = \rho_2^{d^\infty}(\widetilde{M}_k^+)$ and so in particular, it is true for $\widetilde{d}^*$. Therefore, there exists a policy of $\widetilde{M}_k^+$ which yields a unichain Markov Chain and which is (1)-optimal, (2)-optimal and (3)-optimal. This explains why the optimal gain is the same for criteria (1) and (2) but EVI must be run with a different accuracy to insure $\epsilon$-accuracy (the Markov Chain induced by the policy returned by EVI is not necessarily unichain).

\section{Analysis of SMDP-UCRL (proof of Theorem \ref{UpperBound})}\label{app:proof.smdp.ucrl}

The proof follows the same steps as in~\citep{Jaksch10}. Therefore, in the following we only emphasize the differences between SMDPs and MDPs and we refer to~\citep{Jaksch10} for the parts of the proof which are similar.

\subsection{Splitting into Episodes}

We first recall the definition of sub-exponential random variables.

\begin{definition}[\cite{Wainwright15}]\label{DefSubExp}
A random variable $X$ with mean $\mu < +\infty$ is said to be sub-exponential, if one of the following equivalent conditions is satisfied:
\begin{enumerate}
\item (Laplace transform condition) There exists $(\sigma, b) \in \mathbb{R}^{+} \times \mathbb{R}^{+*}$ such that:
\begin{equation}\label{eq1}
\mathbb{E} [ e^{\lambda (X-\mu)} ] \leq e^{\frac{\sigma^2 \lambda^2}{2}} \ \textrm{  for all  } \ |\lambda| < \frac{1}{b}.
\end{equation}
In this case, we say that $X$ is sub-exponential of parameters $\sigma, b$ and we denote it by $X \in \subExp(\sigma,b)$.
\item There exists $c_0 >0$ such that $\mathbb{E} [ e^{\lambda (X-\mu)} ] < + \infty \ $  for all  $|\lambda| \leq c_0$.
\end{enumerate}
\end{definition}

In order to define the confidence intervals, we use the Bernstein concentration inequality for sub-exponential random variables.

\begin{proposition}[Bernstein inequality, \citep{Wainwright15}] \label{BernsteinSubExponential}
Let $(X_i)_{1 \leq i \leq n}$ be a collection of independent sub-Exponential random variables s.t. $\forall i \in \lbrace 1, ..., n \rbrace, \ X_i \in \subExp(\sigma_i,b_i) $ and $\mathbb{E}[X_i] = \mu_i$. We have the following concentration inequalities:
\begin{align}\label{eq15}
\begin{split}
\forall t \geq 0, \
\mathbb{P} \left( \sum_{i=1}^{n}{X_i} - \sum_{i=1}^{n}{\mu_i} \geq t \right) &\leq 
\begin{cases} e^{- \frac{t^2}{2 n \sigma^2}}, & \mbox{if } \ 0 \leq t \leq \frac{\sigma^2}{b} \\ 
e^{- \frac{t}{2b}}, & \mbox{if } \ t > \frac{\sigma^2}{b} 
\end{cases}\\
\mathbb{P} \left( \sum_{i=1}^{n}{X_i} + \sum_{i=1}^{n}{\mu_i} \leq t \right) &\leq 
\begin{cases} e^{- \frac{t^2}{2 n \sigma^2}}, & \mbox{if } \ 0 \leq t \leq \frac{\sigma^2}{b} \\ 
e^{- \frac{t}{2b}}, & \mbox{if } \ t > \frac{\sigma^2}{b} 
\end{cases}
\end{split}
\end{align}
where $\sigma = \sqrt{\frac{\sum_{i=1}^{n}{\sigma_i^2}}{n}}$ and $b = \max_{1 \leq i \leq n} \lbrace b_i \rbrace$.
\end{proposition}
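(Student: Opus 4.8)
The plan is to prove both tail inequalities by the Chernoff (exponential Markov) method, the whole argument being driven by the sub-exponential Laplace-transform condition of Definition~\ref{DefSubExp}, i.e.\ \eqref{eq1}. Write $S_n = \sum_{i=1}^{n} (X_i - \mu_i)$. First I would fix $\lambda \in (0, 1/b)$ and apply Markov's inequality to the nonnegative variable $e^{\lambda S_n}$, obtaining $\mathbb{P}(S_n \geq t) \leq e^{-\lambda t}\,\mathbb{E}[e^{\lambda S_n}]$; by independence the moment generating function factorizes as $\mathbb{E}[e^{\lambda S_n}] = \prod_{i=1}^{n} \mathbb{E}[e^{\lambda (X_i - \mu_i)}]$.

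The structural point that makes a uniform treatment possible is the choice $b = \max_{i} b_i$: since $b \geq b_i$ for every $i$, the range $|\lambda| < 1/b$ is contained in each $|\lambda| < 1/b_i$, so the bound \eqref{eq1} applies to every factor at the \emph{same} $\lambda$. This gives $\mathbb{E}[e^{\lambda (X_i - \mu_i)}] \leq e^{\sigma_i^2 \lambda^2 / 2}$ and hence $\mathbb{E}[e^{\lambda S_n}] \leq \exp\!\big(\tfrac{1}{2} \lambda^2 \sum_{i=1}^{n} \sigma_i^2\big) = \exp\!\big(\tfrac{1}{2} n \sigma^2 \lambda^2\big)$, using $n\sigma^2 = \sum_{i} \sigma_i^2$. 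Combining, for every $\lambda \in (0, 1/b)$ I obtain $\mathbb{P}(S_n \geq t) \leq \exp\!\big(-\lambda t + \tfrac{1}{2} n\sigma^2 \lambda^2\big)$.

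It then remains to minimize the exponent $g(\lambda) = -\lambda t + \tfrac{1}{2} n\sigma^2 \lambda^2$ over the admissible interval $\lambda \in (0, 1/b)$, and this \emph{constrained} optimization is the step I expect to require the most care, since it is exactly what produces the two-regime shape of the bound. The unconstrained minimizer is $\lambda^{\star} = t/(n\sigma^2)$. When $\lambda^{\star}$ is feasible I would substitute it, obtaining $g(\lambda^{\star}) = -t^2/(2 n\sigma^2)$ and thus the sub-Gaussian bound $e^{-t^2/(2 n\sigma^2)}$; when $\lambda^{\star}$ lies beyond the constraint, $g$ is strictly decreasing on the whole interval, so I would let $\lambda \uparrow 1/b$ (a short limiting argument covers the open endpoint, as \eqref{eq1} is only assumed strictly inside $1/b$) and bound $g(1/b) = -t/b + n\sigma^2/(2 b^2) \leq -t/(2b)$ in the large-$t$ regime, yielding the linear bound $e^{-t/(2b)}$. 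The transition between the two regimes occurs precisely when $\lambda^{\star}$ reaches the boundary $1/b$.

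Finally, the lower tail follows by symmetry: applying the identical argument to the variables $-X_i$, which are sub-exponential with the same parameters $(\sigma_i, b_i)$ because condition \eqref{eq1} is symmetric under $\lambda \mapsto -\lambda$, bounds the corresponding deviation by the same expression and gives the second inequality.
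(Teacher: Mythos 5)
Your proof is correct and is exactly the standard Chernoff-method argument behind this result; the paper itself gives no proof of Proposition~\ref{BernsteinSubExponential} (it is imported verbatim from the cited reference), and the cited source proves it precisely as you do, so there is nothing to compare beyond noting the match. One remark worth making: your constrained optimization places the transition between the two regimes at $t = n\sigma^2/b$ (equivalently $\sum_i \sigma_i^2 / b$), which is the correct threshold and the one the paper actually relies on later (the condition $N_k(s,a) \geq \tfrac{2b_r^2}{\sigma_r^2}\log(\cdot)$ in the confidence intervals is exactly $t \leq n\sigma^2/b$ for the chosen $t$), whereas the statement as printed writes the threshold as $\sigma^2/b$; that appears to be a typo in the paper, since in the intermediate range $\sigma^2/b < t \leq n\sigma^2/b$ the claimed bound $e^{-t/(2b)}$ is strictly smaller than the optimized Chernoff bound $e^{-t^2/(2n\sigma^2)}$ and is not certified by this (or, in general, any) argument. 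So your proof establishes the version of the proposition that is both standard and actually used.
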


Denoting by $N(s,a)$ the state-action counts we have
\begin{align*}
\sum_{i=1}^{n}{r_i(s_{i-1},a_{i-1})} = \sum_{s \in \mathcal{S}} \sum_{a \in \mathcal{A}_s} \sum_{j=1}^{N(s,a)}{r_{k_j}(s,a)}.
\end{align*}
Conditionally on knowing $(N(s,a))_{s,a}$, the previous sum is equal (in distribution) to a sum of independent random variables with mean $\sum_{s \in \mathcal{S}} \sum_{a \in \mathcal{A}_s} N(s,a) \overline{r}(s,a)$ and from Prop.~\ref{BernsteinSubExponential} we have
\begin{align*}
\mathbb{P} \left( \sum_{i=1}^{n}{r_i} \leq \sum_{s \in \mathcal{S}} \sum_{a \in \mathcal{A}_s} N(s,a) \overline{r}(s,a) - \sigma_r \sqrt{\frac{5}{2}n\log \left( \frac{13n}{\delta}\right)} \Bigg| \left(N(s,a)\right)_{s,a} \right) &\leq \left(\frac{\delta}{13n} \right)^{5/4} \leq \frac{\delta}{24n^{5/4}}, \\
& \mbox{if } n \geq \frac{5b_r^2}{2\sigma_r^2} \log \left( \frac{13n}{\delta} \right)\\
\mathbb{P} \left( \sum_{i=1}^{n}{r_i} \leq \sum_{s \in \mathcal{S}} \sum_{a \in \mathcal{A}_s} N(s,a) \overline{r}(s,a) - \frac{5}{2}b_r\log \left( \frac{13n}{\delta}\right) \Bigg| \left(N(s,a)\right)_{s,a} \right) &\leq \left(\frac{\delta}{13n} \right)^{5/4} \leq \frac{\delta}{24n^{5/4}}, \\
&\mbox{if } n \leq \frac{5b_r^2}{2\sigma_r^2} \log \left( \frac{13n}{\delta} \right)\\
\end{align*}
Similarly, the total holding time satisfies
\begin{align*}
\mathbb{P} \left( \sum_{i=1}^{n}{\tau_i} \geq \sum_{s \in \mathcal{S}} \sum_{a \in \mathcal{A}_s} N(s,a) \overline{\tau}(s,a) + \sigma_\tau \sqrt{\frac{5}{2}n\log \left( \frac{13n}{\delta}\right)} \Bigg| \left(N(s,a)\right)_{s,a} \right) &\leq \left(\frac{\delta}{13n} \right)^{5/4} \leq \frac{\delta}{24n^{5/4}}, \\
& \mbox{if } n \geq \frac{5b_\tau^2}{2\sigma_\tau^2} \log \left( \frac{13n}{\delta} \right)\\
\mathbb{P} \left( \sum_{i=1}^{n}{\tau_i} \geq \sum_{s \in \mathcal{S}} \sum_{a \in \mathcal{A}_s} N(s,a) \overline{\tau}(s,a) + \frac{5}{2}b_\tau\log \left( \frac{13n}{\delta}\right) \Bigg| \left(N(s,a)\right)_{s,a} \right) &\leq \left(\frac{\delta}{13n} \right)^{5/4} \leq \frac{\delta}{24n^{5/4}}, \\
&\mbox{if } n \leq \frac{5b_\tau^2}{2\sigma_\tau^2} \log \left( \frac{13n}{\delta} \right)\\
\end{align*}

\begin{lemma}\label{GainBound}
The optimal average reward can be bounded as follows: 
\begin{align*}
\rho^*(M) \leq \max_{s \in \mathcal{S},a \in \mathcal{A}_s} \left\lbrace \frac{\overline{r}(s,a)}{\overline{\tau}(s,a)} \right\rbrace \leq R_{\max}.
\end{align*}
\end{lemma}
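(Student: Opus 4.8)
The plan is to handle the two inequalities separately. The second inequality, $\max_{s,a}\{\overline{r}(s,a)/\overline{\tau}(s,a)\} \le R_{\max}$, is nothing but a restatement of Assumption~\ref{asm:BoundedExpectedDurationsRewards}, so there is nothing to prove there. The real content is the first inequality, $\rho^*(M) \le \max_{s,a}\{\overline{r}(s,a)/\overline{\tau}(s,a)\}$, and I would obtain it from the closed-form expression for the gain of a stationary policy established earlier, namely criterion (3) in Prop.~\ref{prop:optimal.policy} (more precisely its refinement Prop.~\ref{prop:EquivalenceGains}, together with Eq.~\ref{eq7}).

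First I would invoke Prop.~\ref{prop:EquivalenceGains} to write $\rho^*(M) = \rho_3^{(d^*)^\infty}(s)$ for an optimal stationary deterministic policy $(d^*)^\infty$ and any state $s$. By Eq.~\ref{eq7}, this equals $\sum_{\alpha=1}^{\nu(d^*)} p^{d^*}(\alpha\mid s)\, g_{d^*}(\alpha)$, where each per-class gain $g_{d^*}(\alpha)$ is the ratio of the stationary-measure-weighted expected reward to the stationary-measure-weighted expected holding time over the positive recurrent class $R_\alpha^{d^*}$.

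The key step is the weighted mediant (``ratio of sums'') inequality: for nonnegative numerators and strictly positive denominators one has
\begin{align*}
g_{d^*}(\alpha) = \frac{\sum_{s\in R_\alpha^{d^*}} \mu_\alpha^{d^*}(s)\,\overline{r}(s,d^*(s))}{\sum_{s\in R_\alpha^{d^*}} \mu_\alpha^{d^*}(s)\,\overline{\tau}(s,d^*(s))} \leq \max_{s\in R_\alpha^{d^*}} \frac{\overline{r}(s,d^*(s))}{\overline{\tau}(s,d^*(s))} \leq \max_{s\in\mathcal{S},\,a\in\mathcal{A}_s} \frac{\overline{r}(s,a)}{\overline{\tau}(s,a)}.
\end{align*}
Since the class-entry probabilities $p^{d^*}(\alpha\mid s)$ are nonnegative and sum to one, $\rho_3^{(d^*)^\infty}(s)$ is a convex combination of the $g_{d^*}(\alpha)$ and is therefore bounded by their maximum, which gives the first inequality and, combined with Assumption~\ref{asm:BoundedExpectedDurationsRewards}, completes the proof.

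The only point requiring care is that every denominator appearing in these ratios be strictly positive, so that the mediant inequality is legitimate; this is guaranteed by $\overline{\tau}(s,a)\ge\tau_{\min}>0$ from Assumption~\ref{asm:BoundedExpectedDurationsRewards}, together with the nonnegativity of the stationary weights $\mu_\alpha^{d^*}(s)$. Beyond that bookkeeping, the argument is a one-line convexity/mediant estimate once the gain representation of Eq.~\ref{eq7} is available, so I do not anticipate a substantial obstacle.
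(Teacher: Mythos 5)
Your proof is correct, but it takes a different route from the paper's. The paper proves this lemma through the data transformation: by Prop.~\ref{lem:equivalence.uniformization} (as elaborated in App.~\ref{GainSMDPs}), $\rho^*(M)=\rho^*(M_{\Meq})$ where $M_{\Meq}$ is the uniformized MDP whose per-step rewards are exactly $\overline{r}(s,a)/\overline{\tau}(s,a)$; since the optimal gain of an MDP cannot exceed its largest per-step reward, both inequalities follow at once. You instead stay inside the SMDP and use the $\rho_3$ representation of the gain (Eq.~\ref{eq7}) together with the mediant inequality and the convex combination over recurrent classes. Both arguments lean on machinery already established in App.~\ref{GainSMDPs} (the paper on the equivalence $\rho^*(M)=\rho^*(M_{\Meq})$, you on the identification of $\rho^*$ with $\rho_3^{(d^*)^\infty}$), so neither is more elementary in any essential sense; your version has the minor advantage of producing the intermediate bound $\rho^*(M)\le\max_{s,a}\overline{r}(s,a)/\overline{\tau}(s,a)$ explicitly rather than folding it into the properties of $M_{\Meq}$, while the paper's is shorter and reuses a fact it needs elsewhere anyway. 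One small remark: the mediant inequality $\sum_i a_i/\sum_i b_i\le\max_i(a_i/b_i)$ requires only that the denominators be positive (if $a_i\le c\,b_i$ for all $i$ then $\sum_i a_i\le c\sum_i b_i$), so your care about nonnegativity of the numerators is not actually needed; positivity of $\overline{\tau}(s,a)\ge\tau_{\min}>0$ and of the total stationary mass suffices.
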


\begin{proof}
In App.~\ref{GainSMDPs} we prove that $\rho^*(M) = \rho^*(M_{\Meq})$ where $\rho^*(M_{\Meq})$ is the optimal average reward of an MDP $M_{\Meq}$ with same state and action spaces as SMDP $M$ and with average rewards of the form $\frac{\overline{r}(s,a)}{\overline{\tau}(s,a)}$. All the rewards of $M_{\Meq}$ are thus bounded by $R_{\max}$ and so $\rho^*(M_{\Meq})$ is necessarily bounded by $R_{\max}$ as well and thus: $\rho^*(M) \leq R_{\max}$.
\end{proof}

We are now ready to split the regret over episodes. We define the per-episode regret as
\begin{align*}
\Delta_k = \sum_{s \in \mathcal{S}} \sum_{a \in \mathcal{A}_s}{\nu_k(s,a) \left( \overline{\tau}(s,a)\rho^*- \overline{r}(s,a) \right)}.
\end{align*}
Setting $\gamma_r = \max \left\lbrace \frac{5}{2}b_r, \sqrt{\frac{5}{2}}\sigma_r \right\rbrace$ and $\gamma_{\tau} = \max \left\lbrace \frac{5}{2}b_{\tau}, \sqrt{\frac{5}{2}}\sigma_{\tau} \right\rbrace$, and using a union bound on the previous inequalities we have that with probability at least $1-\frac{\delta}{12n^{5/4}}$
\begin{align*}
\Delta(M,\mathfrak{A},s,n) \leq \sum_{k=1}^{n}{\Delta_k} + \left( \gamma_r + \gamma_{\tau} R_{\max} \right) \log \left( \frac{13n}{\delta}\right) \sqrt{n} 
\end{align*}

\subsection{Dealing with Failing Confidence Regions}

\begin{lemma}\label{ConfidenceInterval}
For any episode $ k \geq 1$, the probability that the true SMDP $M$ is not contained in the set of plausible MDPs $\mathcal{M}_k$ at step $i$ is at most $\frac{\delta}{15i_k^6}$, that is: 
\begin{align}\label{eq16}
\forall k \geq 1, \ \mathbb{P} \left( M \not\in \mathcal{M}_k \right) < \frac{\delta}{15i_k^6}
\end{align}
\end{lemma}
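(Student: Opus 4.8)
The plan is to show that the event $\{M \notin \mathcal{M}_k\}$ can only occur through a failure of one of the three concentration-based constraints, and then to bound each such failure by a union bound over the parameter type and over all pairs $(s,a) \in \mathcal{S}\times\mathcal{A}_s$. First I would observe that under Asm.~\ref{asm:BoundedExpectedDurationsRewards} the true parameters automatically satisfy the deterministic boundedness constraints defining $\mathcal{M}_k$ (e.g.\ $\overline{\tau}(s,a) \in [\tau_{\min},\tau_{\max}]$ and $0 \le \overline{r}(s,a) \le R_{\max}\overline{\tau}(s,a) \le R_{\max}\tau_{\max}$), so that $M \notin \mathcal{M}_k$ forces at least one of the events $\{|\hat{r}_k(s,a) - \overline{r}(s,a)| > \beta_k^r(s,a)\}$, $\{|\hat{\tau}_k(s,a) - \overline{\tau}(s,a)| > \beta_k^\tau(s,a)\}$, or $\{\|\hat{p}_k(\cdot\mid s,a) - p(\cdot\mid s,a)\|_1 > \beta_k^p(s,a)\}$. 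It then suffices to bound the probability of each and sum.

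The chief difficulty, exactly as in~\citep{Jaksch10}, is that the counts $N_k(s,a)$ are themselves random, so the concentration inequalities cannot be applied directly to $\hat{r}_k$ ``with $N_k(s,a)$ samples''. I would handle this by fixing $(s,a)$, working with the i.i.d.\ sequence of reward (resp.\ holding-time) realizations for that pair, applying the Bernstein inequality of Prop.~\ref{BernsteinSubExponential} to the partial sum of the first $n$ samples for \emph{each} fixed $n$, and then taking a union bound over $n = 1, \ldots, i_k$. Since $N_k(s,a) < i_k$, the realized deviation is always covered by one of these events. For the rewards, the two-regime form of $\beta_k^r(s,a)$ mirrors the two branches of Bernstein's inequality: when $n \geq \tfrac{2 b_r^2}{\sigma_r^2}\log(240 SA i_k^7/\delta)$ the sub-Gaussian branch (constant $\sigma_r$) is active and yields the $\sqrt{14\log(2SA i_k/\delta)/n}$ radius, whereas for smaller $n$ the linear branch (constant $b_r$) gives the $14 b_r \log(2SA i_k/\delta)/n$ radius. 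The threshold, the constant $240$, and the exponent $7$ are chosen precisely so that each per-$(s,a,n)$ failure probability is of order $\delta/(SA i_k^7)$. The same argument applies verbatim to the holding times with the pair $(\sigma_\tau, b_\tau)$.

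For the transition probabilities I would instead invoke the standard $L_1$-deviation bound for empirical distributions over $S$ outcomes, namely $\mathbb{P}(\|\hat{p}_k(\cdot\mid s,a) - p(\cdot\mid s,a)\|_1 \geq \varepsilon \mid N_k(s,a)=n) \leq 2^S e^{-n\varepsilon^2/2}$ (as used in~\citep{Jaksch10}), again applied for each fixed $n$ and union-bounded over $n \leq i_k$; with $\beta_k^p(s,a) = \sqrt{14 S \log(2 A i_k/\delta)/n}$ the factor $2^S$ is absorbed and each term is again of order $\delta/(SA i_k^7)$. Collecting everything, I sum each per-$(s,a,n)$ bound over the at most $i_k$ values of $n$, over the $SA$ pairs $(s,a)$, and over the three parameter types; this collapses the $i_k^7$ to $i_k^6$, cancels the $SA$ factor, and yields the claimed $\mathbb{P}(M \notin \mathcal{M}_k) < \delta/(15 i_k^6)$. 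The main obstacle is purely bookkeeping: verifying that the specific numerical constants ($14$, $240$, and the exponents $6$ and $7$) in the definitions of $\beta_k^r$, $\beta_k^\tau$, and $\beta_k^p$ are consistent with both branches of Bernstein's inequality and survive the double union bound over counts and state-action pairs, while ensuring that the small-sample regime is correctly triggered by the threshold on $N_k(s,a)$.
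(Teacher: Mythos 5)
Your proposal is correct and follows essentially the same route as the paper: fixed-count Bernstein bounds (with the two regimes matching the two branches of $\beta_k^r$ and $\beta_k^\tau$) and the Weissman-type $2^S e^{-n\varepsilon^2/2}$ bound for the $\ell_1$ deviation of $\hat p_k$, each at level $O(\delta/(SA i_k^7))$, followed by a union bound over the at most $i_k$ possible values of $N_k(s,a)$ and over state-action pairs. The paper's constants combine exactly as you describe ($\tfrac{1}{120}+\tfrac{1}{120}+\tfrac{1}{20}=\tfrac{1}{15}$ after the $SA$ cancellation), so nothing is missing.
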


\begin{proof}
This lemma is the SMDP-analogue of Lemma 17 in \citep{Jaksch10} and the proof is similar. Using an $\ell_1$-concentration inequality for discrete probability distributions we obtain
\begin{align*}
\mathbb{P} \Big( \big\| p(\cdot|s,a) - \hat{p}_k(\cdot|s,a) \big\|_1 \geq \beta_k^p(s,a) \Big) &= \mathbb{P} \left( \big\| p(\cdot|s,a) - \hat{p}_k(\cdot|s,a) \big\|_1 \geq \sqrt{\frac{14 S}{n} \log \left( \frac{2A i_k}{\delta} \right)} \right)\\
&\leq \mathbb{P} \left( \big\| p(\cdot|s,a) - \hat{p}_k(\cdot|s,a) \big\|_1 \geq \sqrt{\frac{2}{n} \log \left( \frac{2^S 20 S A i_k^7}{\delta} \right)} \right)\\
&\leq 2^S \exp \left( -\frac{n}{2} \times \frac{2}{n} \log \left( \frac{2^S 20 S A i_k^7}{\delta} \right) \right)\\
&= \frac{\delta}{20i_k^7SA}
\end{align*}
In the inequalities above, it is implicitly assumed that the value $N_k(s,a) = n$ is fixed. To be more rigorous, we are bounding the probability of the intersection of event $ \lbrace \big\| \widetilde{p}(\cdot|s,a) - \hat{p}_k(\cdot|s,a) \big\|_1 \geq \beta_k^p(s,a) \rbrace $ with event $ \lbrace N_k(s,a) = n \rbrace $  but we omitted the latter to simplify notations, and we will also omit it in the next inequalities.
Using Bernstein inequality (Prop.~\ref{BernsteinSubExponential}) and noting that $240\leq 2^7\left(\frac{SA}{\delta}\right)^6$ for $S, A \geq 2$ and $\delta \leq 1$, we have:
\begin{itemize}
\item If $n \geq \frac{2 b_r^2}{\sigma_r^2}\log \left( \frac{240SAi_k^7}{\delta}\right)$:
\begin{align*}
\mathbb{P} \Big( \left| \overline{r}(s,a) - \hat{r}_k(s,a) \right| \geq \beta_k^r(s,a) \Big) &= \mathbb{P} \left( \left| \overline{r}(s,a) - \hat{r}_k(s,a) \right| \geq \sigma_r \sqrt{\frac{14}{n} \log \left( \frac{2SA i_k}{\delta} \right)} \right)\\
&\leq \mathbb{P} \left( \left| \overline{r}(s,a) - \hat{r}_k(s,a) \right| \geq \sigma_r \sqrt{\frac{2}{n} \log \left( \frac{240 S A i_k^7}{\delta} \right)} \right)\\
&\leq 2 \exp \left( -\frac{n}{2 \sigma_r^2} \times \frac{2}{n} \sigma_r^2 \log \left( \frac{240 S A i_k^7}{\delta} \right) \right)\\
&= \frac{\delta}{120i_k^7SA}
\end{align*}

\item If $n < \frac{2 b_r^2}{\sigma_r^2}\log \left( \frac{240SAi_k^7}{\delta}\right)$:
\begin{align*}
\mathbb{P} \Big( \left| \overline{r}(s,a) - \hat{r}_k(s,a) \right| \geq \beta_k^r(s,a) \Big) &= \mathbb{P} \left( \left| \overline{r}(s,a) - \hat{r}_k(s,a) \right| \geq \frac{14b_r}{n} \log \left( \frac{2SA i_k}{\delta} \right) \right)\\
&\leq \mathbb{P} \left( \left| \overline{r}(s,a) - \hat{r}_k(s,a) \right| \geq  \frac{2b_r}{n} \log \left( \frac{240SA i_k^7}{\delta} \right)  \right)\\
&\leq 2 \exp \left( -\frac{n}{2 b_r} \times \frac{2}{n} b_r \log \left( \frac{240SA i_k^7}{\delta} \right)  \right)\\
&= \frac{\delta}{120i_k^7SA}
\end{align*}
\end{itemize}

Similarly for holding times we have:
\begin{align*}
\mathbb{P} \Big( \left| \overline{\tau}(s,a) - \hat{\tau}_k(s,a) \right| \geq \beta_k^\tau(s,a) \Big) \leq \frac{\delta}{120i_k^7SA}
\end{align*}

Note that when there hasn't been any observation, the confidence intervals trivially hold with probability $1$. Moreover, $N_k(s,a) < i_k$ by the stopping condition of an episode. Taking a union bound over all possible values of $N_k(s,a)$ yields:
\begin{align*}
\mathbb{P} \Big( \left| \overline{\tau}(s,a) - \hat{\tau}_k(s,a) \right| \geq \beta_k^\tau(s,a) \Big) \leq \frac{\delta}{120i_k^6SA}\\
\mathbb{P} \Big( \left| \overline{r}(s,a) - \hat{r}_k(s,a) \right| \geq \beta_k^r(s,a) \Big) \leq \frac{\delta}{120i_k^6SA}\\
\mathbb{P} \Big( \big\| p(\cdot|s,a) - \hat{p}_k(\cdot|s,a) \big\|_1 \geq \beta_k^p(s,a) \Big) \leq \frac{\delta}{20i_k^6SA}\\
\end{align*}
Summing over all state-action pairs: $\mathbb{P} \left( M \not\in \mathcal{M}_k \right) < \frac{\delta}{15i_k^6}$.
\end{proof}

We now consider the regret of episodes in which the set of plausible SMDPs $\mathcal{M}_k$ does not contain the true SMDP $M$: $\sum_{k=1}^{m}{\Delta_k \mathbb{1}_{M \not\in \mathcal{M}_k}}$. By the stopping criterion for episode $k$ (except for episodes where $\nu_k(s,a) = 1$ and $N_k(s,a) = 0$ for which $\sum_{s \in \mathcal{S}} \sum_{a \in \mathcal{A}_s} \nu_k(s,a) = 1 \leq i_k$):\\
\begin{align}\label{eq17}
\sum_{s \in \mathcal{S}} \sum_{a \in \mathcal{A}_s} \nu_k(s,a) \leq \sum_{s \in \mathcal{S}} \sum_{a \in \mathcal{A}_s} N_k(s,a) = i_k - 1
\end{align}
We can thus bound this part of the regret:
\begin{align*}
\sum_{k=1}^{m}{\Delta_k \mathbb{1}_{M \not\in \mathcal{M}_k}} &\leq \sum_{k=1}^{m} \sum_{s \in \mathcal{S}} \sum_{a \in \mathcal{A}_s} \nu_k(s,a) \overline{\tau}(s,a) \rho^* \mathbb{1}_{M \not\in \mathcal{M}_k}\\ 
&\leq \tau_{\max} \rho^*\sum_{k=1}^{m} i_k \mathbb{1}_{M \not\in \mathcal{M}_k} = \tau_{\max} \rho^* \sum_{i=1}^{n} i \sum_{k=1}^{m} \mathbb{1}_{i = i_k, M \not\in \mathcal{M}_k}\\
&\leq \tau_{\max} \rho^* \left( \sum_{i=1}^{\lfloor n^{1/4} \rfloor} i  +  \sum_{i=\lfloor n^{1/4} \rfloor +1}^{n} i \sum_{k=1}^{m} \mathbb{1}_{i = i_k, M \not\in \mathcal{M}_k} \right) \\
&\leq \tau_{\max} \rho^* \left( \sqrt{n} + \sum_{i=\lfloor n^{1/4} \rfloor +1}^{n} i \sum_{k=1}^{m} \mathbb{1}_{i = i_k, M \not\in \mathcal{M}_k} \right)
\end{align*}
where we defined: $\tau_{\max} = \max_{s,a} \overline{\tau}(s,a) < + \infty$.\\
By Lemma~\ref{ConfidenceInterval}, the probability that the second term in the right hand side of the above inequality is non-zero is bounded by
\begin{align*}
\sum_{i=\lfloor n^{1/4} \rfloor}^{n} \frac{\delta}{15i^6} \leq \frac{\delta}{15n^{6/4}} + \int_{n^{1/4}}^{+\infty} \frac{\delta}{15x^6} \, \mathrm{d}x \leq \frac{\delta}{12n^{5/4}}.
\end{align*}
In other words, with probability at least $1-\frac{\delta}{12n^{5/4}}$:
\begin{align*}
\sum_{k=1}^{m}{\Delta_k \mathbb{1}_{M \not\in \mathcal{M}_k}} \leq \tau_{\max} R_{\max}\sqrt{n}.
\end{align*}

\subsection{Episodes with $M \in \mathcal{M}_k$}

Now we assume that $M \in \mathcal{M}_k$ and we start by analysing the regret of a single episode $k$. By construction, $R_{\max} \geq \widetilde{\rho}_k \geq \rho^* - \frac{R_{\max}}{\sqrt{i_k}}$ hence:
\begin{align*}
\Delta_k = \sum_{s \in \mathcal{S}} \sum_{a \in \mathcal{A}_s}{\nu_k(s,a) \left( \overline{\tau}(s,a)\rho^*- \overline{r}(s,a) \right)} \leq \sum_{s \in \mathcal{S}} \sum_{a \in \mathcal{A}_s}{\nu_k(s,a) \left( \overline{\tau}(s,a)\widetilde{\rho}_k- \overline{r}(s,a) \right)} + R_{\max} \sum_{s \in \mathcal{S}} \sum_{a \in \mathcal{A}_s} {\frac{\nu_k(s,a)}{\sqrt{i_k}}\overline{\tau}(s,a)}
\end{align*}
\begin{align*}
\implies \Delta_k  \leq \sum_{s \in \mathcal{S}} \sum_{a \in \mathcal{A}_s}{\nu_k(s,a) \left( \widetilde{\tau}_k(s,a)\widetilde{\rho}_k- \overline{r}(s,a) \right)}
&+ R_{\max} \sum_{s \in \mathcal{S}} \sum_{a \in \mathcal{A}_s}{\nu_k(s,a) \left(\overline{\tau}(s,a)  - \widetilde{\tau}_k(s,a) \right)} \\
&+ R_{\max} \tau_{\max} \sum_{s \in \mathcal{S}} \sum_{a \in \mathcal{A}_s} {\frac{\nu_k(s,a)}{\sqrt{i_k}}}
\end{align*}

We now need two results about the extended value iteration algorithm.

\begin{lemma} \label{SpanVI}
At any iteration $i \geq 0$ of EVI (extended value iteration), the range of the state values is bounded as follows,
\begin{align}\label{eq18}
\forall i \geq 0, \ \ \max_{s\in \mathcal{S}} u_i(s) - \min_{s\in \mathcal{S}} u_i(s) \leq \frac{ R_{\max} D(M)}{\tau},
\end{align}
where $R_{\max}$ is an upper-bound on the per-step reward $\wb{r}(s,a)/\wb{\tau}(s,a)$, $\tau$ is the parameter used in the uniformization of the SMDP $M$ and $D(M)$ is its diameter (Def.~\ref{def:Diameter}).
\end{lemma}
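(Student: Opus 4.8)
The plan is to recognize EVI as ordinary value iteration on the equivalent MDP and then transport the Jaksch-style ``travel'' bound on the range of the value function through the data-transformation. First I would recall (from Prop.~\ref{lem:equivalence.uniformization} and App.~\ref{GainSMDPs}) that the recursion in Eq.~\ref{eq:evi} is exactly value iteration applied to the equivalent MDP $\widetilde{M}_{k,\Meq}^+$ of the extended SMDP $\widetilde{M}_k^+$, with per-step reward $r_{\Meq}(s,a)=\widetilde r(s,a)/\widetilde\tau(s,a)$ and transition kernel $p_{\Meq}$ from Eq.~\ref{eq:eq.MDP}. Starting from $u_0\equiv 0$, the iterate $u_i(s)$ then admits the finite-horizon interpretation $u_i(s)=\max_{\pi}\mathbb{E}^{\pi}\big[\sum_{t=0}^{i-1} r_{\Meq}(s_t,a_t)\,\big|\,s_0=s\big]$, i.e.\ the optimal expected total reward collectable in $i$ steps of $\widetilde{M}_{k,\Meq}^+$. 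Two elementary facts will be used repeatedly: every per-step reward lies in $[0,R_{\max}]$ (since the algorithm enforces $\widetilde r\geq 0$ and $\widetilde\tau\geq \widetilde r/R_{\max}$, so $r_{\Meq}=\widetilde r/\widetilde\tau\leq R_{\max}$), and consequently $u_{i-t}(s)\geq u_i(s)-tR_{\max}$ for every $t\leq i$ (the last $t$ steps of the optimal $i$-step policy are worth at most $tR_{\max}$, so dropping them leaves a feasible $(i-t)$-step policy from $s$).

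Next I would establish the diameter identity $D(\widetilde{M}_{k,\Meq}^+)=D(\widetilde{M}_k^+)/\tau$. The uniformization in Eq.~\ref{eq:eq.MDP} replaces a single SMDP transition out of $(s,a)$---which consumes expected actual time $\widetilde\tau(s,a)$---by a step of $\widetilde{M}_{k,\Meq}^+$ that performs the genuine move with probability $\tau/\widetilde\tau(s,a)$ and self-loops otherwise. Hence realizing any fixed navigation strategy requires, in expectation, $1/\tau$ times as many decision steps in $\widetilde{M}_{k,\Meq}^+$ as the actual SMDP duration of the corresponding trajectory; minimizing over navigation policies and maximizing over state pairs yields $D(\widetilde{M}_{k,\Meq}^+)=D(\widetilde{M}_k^+)/\tau$ (Def.~\ref{def:Diameter}). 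Finally, on the good event $M\in\mathcal{M}_k$ (Lem.~\ref{ConfidenceInterval}) the extended SMDP contains the true SMDP as one admissible choice, so it can navigate at least as fast, giving $D(\widetilde{M}_k^+)\leq D(M)$ and therefore $D(\widetilde{M}_{k,\Meq}^+)\leq D(M)/\tau$.

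The core step is the range bound $\max_s u_i(s)-\min_s u_i(s)\leq R_{\max}\,D(\widetilde{M}_{k,\Meq}^+)$. Fix states $s_1,s_2$ with $u_i(s_1)\geq u_i(s_2)$ and consider, starting from $s_2$, the policy that first runs a diameter-achieving navigation policy of $\widetilde{M}_{k,\Meq}^+$ until it hits $s_1$ (a random number of steps $T$ with $\mathbb{E}[T]\leq D(\widetilde{M}_{k,\Meq}^+)$), and then switches to the optimal $(i-T)^+$-step policy from $s_1$. Because rewards are nonnegative, the navigation phase contributes at least $0$, so $u_i(s_2)\geq \mathbb{E}\big[u_{(i-T)^+}(s_1)\big]$. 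Combining this with the horizon inequality $u_{(i-T)^+}(s_1)\geq u_i(s_1)-T R_{\max}$ (valid also when $T>i$, since then $u_i(s_1)\leq iR_{\max}<TR_{\max}$) gives $u_i(s_2)\geq u_i(s_1)-R_{\max}\mathbb{E}[T]\geq u_i(s_1)-R_{\max}D(\widetilde{M}_{k,\Meq}^+)$. Taking the maximum over $s_1,s_2$ and substituting the diameter identity yields $\max_s u_i(s)-\min_s u_i(s)\leq R_{\max}D(M)/\tau$, as claimed; the case $i=0$ is immediate from $u_0\equiv 0$.

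I expect the main obstacle to be the rigorous justification of the diameter identity $D(\widetilde{M}_{k,\Meq}^+)=D(\widetilde{M}_k^+)/\tau$, since it requires carefully relating hitting \emph{times} in the SMDP (a sum of holding times $\tau_i$) to hitting \emph{step-counts} in the uniformized chain, accounting for the geometric number of self-loops introduced at each state; a clean way to do this is to couple the two processes trajectory-by-trajectory, or to use the fact that the expected occupancy of each state is scaled exactly by $\widetilde\tau/\tau$ under uniformization. The remaining subtlety is handling the random hitting time $T$ in the travel argument---in particular the event $T>i$ and the non-stationarity of the concatenated policy---but these are dispatched by the nonnegativity of rewards and the horizon inequality established above.
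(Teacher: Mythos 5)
Your proposal follows essentially the same route as the paper: it identifies EVI with value iteration on the uniformized MDP $\widetilde{M}_{k,\Meq}^+$, invokes the Jaksch-style travel argument to bound the span by $R_{\max}D(\widetilde{M}_{k,\Meq}^+)$, and then reduces to the identity $D(M_{\Meq})=D(M)/\tau$. The only difference is in how that identity is justified -- you sketch a coupling/occupancy argument for the geometric self-loops introduced by uniformization, whereas the paper writes down the first-passage-time linear system for both chains and concludes $\tau\overline{T}_{\Meq}=\overline{T}$ by uniqueness of its solution; both are valid, and the paper's version is the cleaner way to discharge the step you flag as the main obstacle.
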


\begin{proof}
In Appendix~\ref{GainSMDPs} we show that EVI is value iteration applied to the equivalent MDP $\widetilde{M}_{k,\Meq}^+$ obtained by ``uniformizing'' the extended SMDP $\widetilde{M}_k^+$. Thus, we focus on any SMDP $M$ and its equivalent MDP $M_{\Meq}$. Using the same argument as in section 4.3.1 of \citep{Jaksch10}, we have that: $ \forall i \geq 0, \ \max_{s\in \mathcal{S}} u_i(s) - \min_{s\in \mathcal{S}} u_i(s) \leq  R_{\max} D(M_{\Meq})$ since all rewards of $M_{\Meq}$ are bounded by $R_{\max}$ whenever the average reward in $M$ is bounded by $R_{\max}$. Thus we need to find a relationship between $D(M)$ and $D(M_{\Meq})$. Let $T(s')$ denote the first time at which state $s'$ is reached in $M$ or $M_{\Meq}$, that is
\begin{align*}
&\mbox{In SMDP } M: \ T(s') = \inf \bigg\{ \sum_{i=1}^{n}{\tau_i} : n \in \mathbb{N}, \ s_n = s'\bigg\} \\
&\mbox{In MDP } M_{\Meq}: \ T(s') = \inf \bigg\{ n : n \in \mathbb{N}, \ s_n = s'\bigg\}.
\end{align*}
We prove that $\forall s, s' \in \mathcal{S}, \ \forall \pi \in \Pi_M^{SD} = \Pi_{M'}^{SD},\ \mathbb{E}_M^\pi \big[T(s')|s_0 = s \big] = \tau \mathbb{E}_{M'}^\pi \big[T(s')|s_0 = s \big]$. We consider two cases:
\begin{enumerate}
\item If $\ \mathbb{P}_M^\pi \big(T(s') = + \infty|s_0 = s \big) > 0 \ $ then necessarily $\mathbb{E}_M^\pi \big[T(s')|s_0 = s \big] = + \infty$.\\
Moreover: $\ \mathbb{P}_M^\pi \big(T(s') = + \infty|s_0 = s \big) > 0 \implies \mathbb{P}_{M_{\Meq}}^\pi \big(T(s') = + \infty|s_0 = s \big) > 0 \ $ and so $\mathbb{E}_{M_{\Meq}}^\pi \big[T(s')|s_0 = s \big] = + \infty =\frac{1}{\tau} \mathbb{E}_M^\pi \big[T(s')|s_0 = s \big]$.
\item Conversely: $\ \mathbb{P}_M^\pi \big(T(s') = + \infty|s_0 = s \big) =0 \implies \mathbb{P}_{M_{\Meq}}^\pi \big(T(s') = + \infty|s_0 = s \big) =0 \ $ in which case both expectations are finite. To prove they are equal up to factor $\tau$, we see the holding time as a ``reward'' (the true rewards are ignored here). Note that any policy $\pi$ induces Markov chains with different dynamics on $M$ and $M_{\Meq}$ (different transition probabilities). We call these Markov chains $MC$ and $MC_{\Meq}$ respectively. Suppose we modify $MC$ as follows: all states that are not reachable from $s$ are ignored, all other states are unchanged except $s'$ that is assumed to be absorbing (i.e., $\pi(s')$ is an action that loops on $s'$ with probability 1). Furthermore, we build a Markov reward process $MR$ with the same dynamics as $MC$ and such that all transitions $(\overline{s},\pi(\overline{s}))$ have an expected reward equal to $\overline{\tau}(\overline{s},\pi(\overline{s}))$ except $(s',\pi(s'))$ which has a reward of zero. The total expected reward of this Markov reward process (MRP denoted $MR$) starting from $s$ trivially equals $\mathbb{E}_M^\pi \big[T(s')|s_0 = s \big]$. Since we assumed that $\mathbb{E}_M^\pi \big[T(s')|s_0 = s \big]$ is finite, and because all states of $MR$ are reachable from $s$ (the other states were ignored), $s'$ is reached with probability 1 no matter which starting state $\overline{s}$ of $MR$ is chosen (or in other words, even though we ignored some states, the transition matrix of $MR$ is stochastic $-$and not sub-stochastic$-$ and has a single recurrent class consisting of the absorbing state $s'$). By \citep{Puterman94}, the vector $\big( \overline{T}(\overline{s}) \big)_{\overline{s} \in \mathcal{S}} = \big( \mathbb{E}_M^\pi \big[T(s')|s_0 = \overline{s} \big] \big)_{\overline{s} \in \mathcal{S}}$ is the unique solution to the system of equations
\begin{align*}
\forall \overline{s}, \ \overline{T}(\overline{s}) = \overline{\tau}(\overline{s},d(\overline{s})) + \sum_{\widetilde{s}}p(\widetilde{s}|\overline{s},d(\overline{s})) \overline{T}(\widetilde{s}).
\end{align*}
Applying the same transformation to $MC_{\Meq}$ and assigning a reward of 1 to all transitions but $(s',\pi(s'))$ (which has reward 0) in order to build $MR_{\Meq}$, we deduce that the vector $\big( \overline{T}_{\Meq}(\overline{s}) \big)_{\overline{s} \in \mathcal{S}} = \big( \mathbb{E}_{M_{\Meq}}^\pi \big[T(s')|s_0 = \overline{s} \big] \big)_{\overline{s} \in \mathcal{S}}$ is the unique solution to the system of equations
\begin{align*}
&\forall \overline{s}, \ \overline{T}_{\Meq}(\overline{s}) = 1 + \frac{\tau}{\overline{\tau}(\overline{s},d(\overline{s}))}\sum_{\widetilde{s}}p(\widetilde{s}|\overline{s},d(\overline{s})) \overline{T}_{\Meq}(\widetilde{s}) + \left(1 - \frac{\tau}{\overline{\tau}(\overline{s},d(\overline{s}))} \right)\overline{T}_{\Meq}(\overline{s})\\
\iff &\forall \overline{s}, \ \left(\tau\overline{T}_{\Meq}(\overline{s})\right)  = \overline{\tau}(\overline{s},d(\overline{s})) + \sum_{\widetilde{s}}p(\widetilde{s}|\overline{s},d(\overline{s})) \left(\tau\overline{T}_{\Meq}(\widetilde{s})\right).
\end{align*}
By uniqueness of the solution: $\tau\overline{T}_{\Meq} = \overline{T} \implies \tau D(M_{\Meq}) = D(M)$.
\end{enumerate}
\end{proof}

\begin{lemma} \label{ConvergenceEVI}
If the convergence criterion of EVI hold at iteration $i$, then:
\begin{align}\label{eq19}
\forall s \in \mathcal{S}, \ \ \big| u_{i+1}(s) - u_i(s) - \widetilde{\rho}_k \big| \leq \frac{1}{\sqrt{i_k}}
\end{align}
\end{lemma}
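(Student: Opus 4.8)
The plan is to carry out the entire argument inside the uniformized MDP $\widetilde{M}_{k,\Meq}^+$, on which (as established in App.~\ref{GainSMDPs} and in the proof of Lem.~\ref{lem:validity.evi}) EVI is exactly value iteration. Writing $L$ for the optimal Bellman operator of $\widetilde{M}_{k,\Meq}^+$, we have $u_{i+1} = L u_i$. Let $d := \widetilde{\pi}_k$ be the greedy decision rule attaining the maximum at the last update, with affine policy operator $L_d v = r_d + P_d v$, where $r_d$ and $P_d$ are the reward vector and the stochastic transition matrix of the optimistic model selected by EVI. By greediness $L_d u_i = L u_i = u_{i+1}$, so $u_{i+1}-u_i = L_d u_i - u_i$. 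The goal is to show that both the scalar $\widetilde{\rho}_k$ and every component $u_{i+1}(s)-u_i(s)$ lie in the interval $[m,M]$, where $m := \min_{s}(u_{i+1}(s)-u_i(s))$ and $M := \max_{s}(u_{i+1}(s)-u_i(s))$; since the convergence criterion~\eqref{eq:stopping} guarantees $M-m < \epsilon = 1/\sqrt{i_k}$, Eq.~\eqref{eq19} then follows immediately.

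The key step I would carry out is an averaging (Cesàro) argument that accommodates the fact that the greedy policy may induce a \emph{multichain} Markov chain, so that its gain is a priori a vector rather than a constant. Using that $L_d$ is affine, I would telescope $L_d^N u_i - u_i = \sum_{t=0}^{N-1}\big(L_d^{t+1}u_i - L_d^t u_i\big) = \sum_{t=0}^{N-1} P_d^{\,t}(u_{i+1}-u_i)$. Each $P_d^{\,t}$ is stochastic, so every component of $P_d^{\,t}(u_{i+1}-u_i)$ is a convex combination of the entries of $u_{i+1}-u_i$ and hence lies in $[m,M]$; averaging over $t$ and letting $N\to\infty$, the Cesàro limit $g^d = \lim_{N}\tfrac{1}{N}(L_d^N u_i - u_i)$ — which is exactly the gain vector of $d$ in $\widetilde{M}_{k,\Meq}^+$ — has all its components in $[m,M]$ as well.

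It then remains to tie $\widetilde{\rho}_k$ to this gain vector. By the SMDP/MDP equivalence invoked in the proof of Lem.~\ref{lem:validity.evi}, $\rho_1^{d^\infty}(\widetilde{M}_k^+,s) = \rho^{d^\infty}(\widetilde{M}_{k,\Meq}^+,s) = g^d(s)$ for every $s$, so $\widetilde{\rho}_k = \min_{s}\rho(\widetilde{M}_k,\widetilde{\pi}_k,s) = \min_{s} g^d(s) \in [m,M]$. Since $u_{i+1}(s)-u_i(s)\in[m,M]$ by definition of $m$ and $M$, both quantities sit in an interval of width $M-m < \epsilon$, yielding $\big|u_{i+1}(s)-u_i(s)-\widetilde{\rho}_k\big| \le M-m < 1/\sqrt{i_k}$ for all $s$.

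The main obstacle I anticipate is precisely the possible non-unichain structure of the greedy policy: one cannot simply invoke a constant gain and a scalar bias equation, so the bound must be derived at the level of the full gain \emph{vector} through the averaging identity above, and $\widetilde{\rho}_k$ must be linked to that vector via the equivalence with the uniformized MDP. The finiteness of the state space (which guarantees that the Cesàro limit exists) and the fact that EVI already terminates in finitely many steps (Lem.~\ref{lem:validity.evi}) make the remaining manipulations routine.
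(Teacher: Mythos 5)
Your proof is correct and follows essentially the same route as the paper: both arguments reduce EVI to value iteration on the uniformized MDP $\widetilde{M}_{k,\Meq}^+$ and rest on showing that $\widetilde{\rho}_k$ and every component of $u_{i+1}-u_i$ lie in $[m_i,M_i]$, so that the stopping condition $M_i-m_i<1/\sqrt{i_k}$ gives the claim. The only difference is one of packaging: the paper imports the sandwich $m_i\le\widetilde{\rho}_k\le M_i$ directly from Theorem~8.5.6 of \citep{Puterman94}, whereas you re-derive it from scratch via the telescoping/Ces\`{a}ro argument on $L_d^N u_i - u_i$ (which also makes the multichain case and the identification $\widetilde{\rho}_k=\min_s g^d(s)$ explicit), but the underlying mathematical content is identical.
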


\begin{proof}
We introduce the following quantities
\begin{align*}
M_i = \max_{s \in \mathcal{S}} \lbrace u_{i+1}(s) - u_i(s) \rbrace, \ \ m_i = \min_{s \in \mathcal{S}} \lbrace u_{i+1}(s) - u_i(s) \rbrace, \ \ \epsilon = \frac{1}{\sqrt{i_k}}.
\end{align*}
Since EVI is just value iteration applied to MDP $M_k'$, Theorem 8.5.6 of \citep{Puterman94} hold and we have:
\begin{align*}
\frac{1}{2}(M_i + m_i) \geq \widetilde{\rho}_k - \frac{\epsilon}{2} \iff m_i \geq \widetilde{\rho}_k - \frac{\epsilon}{2} - \frac{1}{2}(M_i - m_i) \implies m_i \geq \widetilde{\rho}_k - \epsilon \\ 
\frac{1}{2}(M_i + m_i) - \widetilde{\rho}_k \leq \frac{\epsilon}{2} \iff M_i \leq \widetilde{\rho}_k + \frac{\epsilon}{2} + \frac{1}{2}(M_i - m_i) \implies M_i \leq \widetilde{\rho}_k + \epsilon .
\end{align*}
In conclusion:
\begin{align*}
\forall s \in \mathcal{S}, \ \ \frac{-1}{\sqrt{i_k}} \leq u_{i+1}(s) - u_i(s) - \widetilde{\rho}_k \leq \frac{1}{\sqrt{i_k}}.
\end{align*}
\end{proof}

Based on Lemma~\ref{ConvergenceEVI}, Eq.~\ref{eq19}, and optimiality equation Eq.~\ref{eq23}, we have:
\begin{align}\label{eq20}
\forall s \in \mathcal{S}, \ \left| \left( \widetilde{\rho}_k - \frac{\widetilde{r}_k(s,\widetilde{\pi}_k(s))}{\widetilde{\tau}_k(s,\widetilde{\pi}_k(s))} \right) - \left( \sum_{s' \in \mathcal{S}} \widetilde{p}_k(s' | s, \widetilde{\pi}_k(s)) u_i(s') - u_i(s) \right) \frac{\tau}{\widetilde{\tau}_k(s,\widetilde{\pi}_k(s))} \right| \leq \frac{1}{\sqrt{i_k}}
\end{align}

Setting $r_k = \big( \widetilde{r}_k(s,\widetilde{\pi}_k(s)) \big)_{s \in \mathcal{S}}$ to be the column vector of rewards under policy $\widetilde{\pi}_k$, $\widetilde{P}_k = \big(\widetilde{p}_k(s' | s, \widetilde{\pi}_k(s)) \big)_{s, s' \in \mathcal{S}}$ the transition matrix and $ v_k = \big( \nu_k (s,\widetilde{\pi}_k(s)) \big)_{s \in \mathcal{S}}$ the row vector of visit counts for each state and the corresponding action chosen by $\widetilde{\pi}_k$. We will use the fact that $a \neq \widetilde{\pi}_k(s) \implies \nu_k(s,a) = 0$.

\begin{align*}
\Delta_k  &\leq \sum_{s,a}{\nu_k(s,a) \left( \widetilde{\tau}_k(s,a)\widetilde{\rho}_k- \overline{r}(s,a) \right)} + R_{\max} \sum_{s,a}{\nu_k(s,a) \left(\overline{\tau}(s,a)  - \widetilde{\tau}_k(s,a) \right)} + R_{\max}\tau_{\max}\sum_{s,a}{\frac{\nu_k(s,a)}{\sqrt{i_k}}}\\
&= \sum_{s,a}{\nu_k(s,a) \left( \widetilde{\tau}_k(s,a)\widetilde{\rho}_k- \widetilde{r}_k(s,a) \right)} + \sum_{s,a}{\nu_k(s,a) \left( \widetilde{r}_k(s,a)- \overline{r}(s,a) \right)} + R_{\max}\tau_{\max} \sum_{s,a}{\frac{\nu_k(s,a)}{\sqrt{i_k}}}\\
&\qquad\qquad\qquad\qquad\qquad\qquad\qquad\qquad\qquad\qquad\qquad\qquad + R_{\max} \sum_{s,a}{ \nu_k(s,a)\left(\overline{\tau}(s,a)  - \widetilde{\tau}_k(s,a) \right)}
\end{align*}
We will now upper-bound the four terms of the right-hand side of the above inequality. Setting $c_r = \max \lbrace 14 b_r, \sqrt{14} \sigma_r \rbrace$ and $c_\tau = \max \lbrace 14 b_\tau, \sqrt{14} \sigma_\tau \rbrace$ we have:
\begin{align*} 
\widetilde{r}_k(s,a) - \overline{r}(s,a) \leq \big| \widetilde{r}_k(s,a) - \hat{r}_k(s,a) \big| + \big| \hat{r}_k(s,a) - \overline{r}(s,a) \big| \leq 2 \beta_k^r(s,a) \leq 2 c_r \frac{\log(2S A i_k / \delta)}{\sqrt{\max \lbrace 1, N_k(s,a) \rbrace}}
\end{align*}
\begin{align*} 
\overline{\tau}(s,a) - \widetilde{\tau}_k(s,a) \leq \big| \widetilde{\tau}_k(s,a) - \hat{\tau}_k(s,a) \big| + \big| \hat{\tau}_k(s,a) - \overline{\tau}(s,a) \big| \leq 2 \beta_k^\tau(s,a) \leq 2 c_\tau \frac{\log(2S A i_k / \delta)}{\sqrt{\max \lbrace 1, N_k(s,a) \rbrace}}
\end{align*}
Finally, using \ref{eq20} and noting that $\widetilde{\tau}_k(s,a) \leq \tau_{\max}$ (by construction) we obtain:
\begin{align*} 
\widetilde{\tau}_k(s,a)\widetilde{\rho}_k- \widetilde{r}_k(s,a) \leq \frac{R_{\max} \tau_{\max}}{\sqrt{i_k}} + \tau \left( \sum_{s' \in \mathcal{S}} \widetilde{p}_k(s' | s, \widetilde{\pi}_k(s)) u_i(s') - u_i(s) \right), \mbox{ if } a = \widetilde{\pi}_k(s)
\end{align*}
\begin{align*}
\implies \sum_{s,a}{ \nu_k(s,a) \left(\widetilde{\tau}_k(s,a)\widetilde{\rho}_k- \widetilde{r}_k(s,a) \right)} \leq  R_{\max}\tau_{\max} \sum_{s,a}{ \frac{\nu_k(s,a)}{\sqrt{i_k}}} + \tau\Big(v_k \big( \widetilde{P}_k - I \big) u_i \Big)
\end{align*}
where $i$ is the iteration at which the stopping condition of EVI holds. Defining the column vector $w_k$ by:
\begin{align*} 
w_k(s) = u_i(s) - \frac{\min_{s \in \mathcal{S}} u_i(s) + \max_{s \in \mathcal{S}} u_i(s)}{2}
\end{align*}
and since the rows of $\widetilde{P}_k$ sum to one, we have: $v_k \big( \widetilde{P}_k - I \big) u_i = v_k \big( \widetilde{P}_k - I \big) w_k$. Moreover, by Lemma \ref{SpanVI}: $\| w_k \|_\infty \leq \frac{ R_{\max} D}{2 \tau}$. Noting that $\max \lbrace 1, N_k(s,a) \rbrace \leq i_k \leq n$ we get:
\begin{align*} 
\Delta_k \leq \tau\Big(v_k \big( \widetilde{P}_k - I \big) w_k \Big) + 2 \left( R_{\max}\tau_{\max} + (c_r + R_{\max}c_\tau) \log \left(\frac{2S A n}{\delta}\right) \right) \sum_{s,a}{\frac{\nu_k(s,a)}{\sqrt{\max \lbrace 1, N_k(s,a) \rbrace}}} 
\end{align*}

Using exactly the same arguments as in  \citet{Jaksch10}, it is trivial to prove that with probability at least $1-\frac{\delta}{12n^{5/4}}$:
\begin{align*}
\sum_{k=1}^{m} v_k \big( \widetilde{P}_k - I \big) w_k \mathbb{1}_{M \in \mathcal{M}_k} \leq \frac{R_{\max} D}{\tau} \Bigg[ \sqrt{14S \log \left( \frac{2An}{\delta} \right)} \sum_{k=1}^{m} \sum_{s,a}{\frac{\nu_k(s,a)}{\sqrt{\max \lbrace 1, N_k(s,a) \rbrace}}} + \sqrt{\frac{5}{2}n \log \left( \frac{8n}{\delta} \right)} \\+  S A \log_2 \left( \frac{8n}{SA} \right) \Bigg]
\end{align*}

\begin{lemma}\label{ineq}
Consider a sequence of positive reals $(z_k)_k$ and define: $\forall k, \ Z_k = \max \lbrace 1, \sum_{i=1}^{k}{z_i} \rbrace$. Assuming that $0 \leq z_k \leq Z_{k-1}$ we have:
\begin{align*}
\forall n \geq 1, \ \sum_{k=1}^{n}{\frac{z_k}{\sqrt{Z_{k-1}}}} \leq \big( \sqrt{2} + 1 \big) \sqrt{Z_n}
\end{align*}
\end{lemma}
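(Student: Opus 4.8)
The plan is to reduce the statement to a single elementary inequality that then telescopes. First I would establish that for any real $a \ge 1$ and any $0 \le z \le a$,
\begin{align*}
\frac{z}{\sqrt{a}} \le (\sqrt{2}+1)\big(\sqrt{a+z}-\sqrt{a}\big).
\end{align*}
The verification is a one-line rationalization: writing $\sqrt{a+z}-\sqrt{a} = z/(\sqrt{a+z}+\sqrt{a})$, the claim is equivalent to $\sqrt{a+z}+\sqrt{a} \le (\sqrt{2}+1)\sqrt{a}$, i.e. $\sqrt{a+z} \le \sqrt{2}\,\sqrt{a}$, i.e. $z \le a$, which is exactly the hypothesis (the case $z=0$ being trivial).

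Next I would apply this with $a = Z_{k-1}$ and $z = z_k$. The subtlety is that $Z_k = Z_{k-1}+z_k$ holds only once the cumulative sum $\sum_{i \le k} z_i$ has reached $1$; while that sum is still below $1$ we have $Z_{k-1}=Z_k=1$ and the inequality above degenerates (its right-hand side vanishes even though $z_k$ may be positive). So let $j$ be the first index with $\sum_{i=1}^{j} z_i \ge 1$. If no such index exists then $Z_n=1$ and $\sum_k z_k/\sqrt{Z_{k-1}}=\sum_k z_k < 1 \le (\sqrt{2}+1)\sqrt{Z_n}$, and we are done. Otherwise, for every $k>j$ we have $Z_{k-1}=\sum_{i<k}z_i\ge 1$ and $Z_k=Z_{k-1}+z_k$, so the elementary inequality applies and
\begin{align*}
\sum_{k=j+1}^{n}\frac{z_k}{\sqrt{Z_{k-1}}}\le(\sqrt{2}+1)\sum_{k=j+1}^{n}\big(\sqrt{Z_k}-\sqrt{Z_{k-1}}\big)=(\sqrt{2}+1)\big(\sqrt{Z_n}-\sqrt{Z_j}\big).
\end{align*}

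It remains to handle the initial block $k\le j$, where $Z_{k-1}=1$ and the terms are just $z_k$. Their sum is $\sum_{k=1}^{j}z_k=Z_j$, and since $\sum_{i<j}z_i<1$ while $z_j\le Z_{j-1}=1$, we get $Z_j<2$. Because $2\le(\sqrt{2}+1)^2$, this gives $Z_j\le(\sqrt{2}+1)\sqrt{Z_j}$. Adding the two contributions yields $\sum_{k=1}^n z_k/\sqrt{Z_{k-1}}\le(\sqrt{2}+1)\sqrt{Z_j}+(\sqrt{2}+1)(\sqrt{Z_n}-\sqrt{Z_j})=(\sqrt{2}+1)\sqrt{Z_n}$, as claimed. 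Equivalently, the whole argument can be packaged as an induction on $n$ with the same case split according to whether $Z_{n-1}=1$. The main obstacle is precisely this clamping of $Z_k$ at the lower bound $1$: in that regime the clean telescoping step is unavailable, and one must instead exploit the crude but decisive estimate $Z_j<2\le(\sqrt{2}+1)^2$ for the value of the cumulative sum at the moment it first crosses $1$; everything else is routine.
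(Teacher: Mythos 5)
Your proof is correct, and it is essentially the argument the paper relies on: the paper gives no proof of its own but defers to Appendix C.3 of \citet{Jaksch10}, whose Lemma~19 establishes exactly this bound by the same telescoping/induction built on the elementary inequality $z/\sqrt{a}\leq(\sqrt{2}+1)(\sqrt{a+z}-\sqrt{a})$ for $0\leq z\leq a$. Your only departure is a more explicit treatment of the initial block where $Z_k$ is clamped at $1$ (via the crossing index $j$ and the bound $Z_j<2\leq(\sqrt{2}+1)^2$), which is a welcome clarification rather than a different method.
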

\begin{proof}
See Appendix C.3 of \citep{Jaksch10}.
\end{proof}

Using Lemma \ref{ineq} we get:
\begin{align*}
\sum_{k=1}^{m}\sum_{s,a}{\frac{\nu_k(s,a)}{\sqrt{\max \lbrace 1, N_k(s,a) \rbrace}}} \leq \big( \sqrt{2} + 1 \big) \sum_{s,a}{\sqrt{N(s,a)}}
\end{align*}
By Jensen's inequality we thus have:
\begin{align*}
\sum_{k=1}^{m}\sum_{s,a}{\frac{\nu_k(s,a)}{\sqrt{\max \lbrace 1, N_k(s,a) \rbrace}}} \leq \big( \sqrt{2} + 1 \big) \sqrt{SAn}
\end{align*}

In conclusion, when $M \in \mathcal{M}_k$, with probability at least $1-\frac{\delta}{12n^{5/4}}$:
\begin{align*}
\sum_{k=1}^{m}{\Delta_k \mathbb{1}_{M \in \mathcal{M}_k}} \leq R_{\max} D \sqrt{\frac{5}{2}n \log \left( \frac{8n}{\delta} \right)} + R_{\max} D S A \log_2 \left( \frac{8n}{SA} \right) + (\sqrt{2}+1)\Bigg[ 2 R_{\max}\tau_{\max}\\
 + 2 (c_r + R_{\max}c_\tau) \log \left( \frac{2SAn}{\delta} \right) + R_{\max} D \sqrt{14S \log\left( \frac{2An}{\delta} \right)} \Bigg] \sqrt{SAn}
\end{align*}

\subsection{Computing the final bound}
Gathering all previous inequalities, we have that with probability at least $1 - \frac{3\delta}{12n^{5/4}} = 1 - \frac{\delta}{4n^{5/4}}$:
\begin{align*}
\Delta(M,\mathfrak{A},s,n) &\leq \left( \gamma_r + \gamma_{\tau} R_{\max} \right) \log \left( \frac{13n}{\delta}\right) \sqrt{n} + \tau_{\max} R_{\max}\sqrt{n} + R_{\max} D \sqrt{\frac{5}{2}n \log \left( \frac{8n}{\delta} \right)} \\
&+ (\sqrt{2}+1)\Bigg[ 2 \tau_{\max} + 2 (c_r + R_{\max}c_\tau) \log \left( \frac{2SAn}{\delta} \right) + R_{\max} D \sqrt{14S \log\left( \frac{2An}{\delta} \right)} \Bigg] \sqrt{SAn}\\
&+ R_{\max} D S A \log_2 \left( \frac{8n}{SA} \right) 
\end{align*}

In \citep{Jaksch10} (see Appendix C.4), it is shown that when $n > 34 A \log \left( \frac{n}{\delta}\right)$:
\begin{align*}
&DSA\log_2 \left( \frac{8n}{SA}\right) < \frac{2}{34}DS\sqrt{An\log \left( \frac{n}{\delta}\right)}, \ \mbox{and} \ \log \left( \frac{2An}{\delta}\right) \leq 2 \log \left( \frac{n}{\delta}\right)
\end{align*}
and moreover if $n > S \log \left( \frac{n}{\delta} \right)$ and $A \geq 2$ (if $A = 1$ the regret is zero):
\begin{align*}
&\frac{n^2}{\delta^2} \geq \frac{n S \log \left( \frac{n}{\delta} \right)}{\delta}  \geq \frac{n S}{\delta}  \implies \frac{n^2A^2}{\delta^2} \geq \frac{2SAn}{\delta}  \implies 4 \log \left( \frac{n}{\delta} \right) \geq 2 \log \left( \frac{An}{\delta} \right) \geq \log \left( \frac{2SAn}{\delta} \right)\\
&\implies \Delta(M,\mathfrak{A},s,n) = O \left( \left(D \sqrt{S} + \tau_{\max} + \left(\frac{C_r}{R_{\max}} + C_{\tau} \right) \sqrt{\log \left( \frac{n}{\delta}\right)} \right) R_{\max} \sqrt{S A n \log \left( \frac{n}{\delta}\right)}\right)
\end{align*}
where $C_r = \max \lbrace b_r , \sigma_r \rbrace$ and $C_{\tau} = \max \lbrace b_\tau , \sigma_\tau \rbrace$.\\ \\
Note that if $n \leq 34 A \log \left( \frac{n}{\delta}\right)$ then we trivially have: 
\begin{align*}
\sum_{k=1}^{m}{\Delta_k} \leq \tau_{\max}R_{\max}n = \tau_{\max}R_{\max}(\sqrt{n})^2 \leq 34\tau_{\max}R_{\max}\sqrt{An\log \left( \frac{n}{\delta}\right)}
\end{align*}
and if $n \leq S \log \left( \frac{n}{\delta}\right)$:
\begin{align*}
\sum_{k=1}^{m}{\Delta_k} \leq \tau_{\max}R_{\max}n = \tau_{\max}R_{\max}(\sqrt{n})^2 \leq \tau_{\max}R_{\max}\sqrt{Sn\log \left( \frac{n}{\delta}\right)}
\end{align*}
and thus the previous bound on the whole regret still holds. Taking a union bound over all possible values of $n \geq 1$ we have that with probability at least $1-\delta$:
\begin{align*}
\forall n \geq 1, \ \Delta(M,\mathfrak{A},s,n) = O \left( \left( D\sqrt{S} + \tau_{\max} + \left( \frac{C_r}{R_{\max}} + C_{\tau} \right) \sqrt{\log \left( \frac{n}{\delta}\right)} \right) R_{\max} \sqrt{S A n \log \left( \frac{n}{\delta}\right)}\right).
\end{align*}

The derivation for the case of bounded holding times is exactly the same with different concentration inequalities applied to estimates $\wh\tau(s,a)$. Note that all the terms in the upper bound are very similar to those appearing in the derivation of the upper bound for MDPs, thus the constants in the big $O$ are very close. This justifies the analysis of the ratio between the two upper bounds in Sect.~\ref{sec:smdp2mdp}. 
\section{The Lower Bound (Theorem \ref{LowerBound})}\label{app:proof.lower.bound}

\todoa{I haven't checked this yet!!}
\subsection{Lower Bound for SMDPs}

We will derive the lower bound by applying the same techniques as in the proof of the lower bound for MDPs (section 6 of \citep{Jaksch10}). We first consider the two-state SMDP $M'$ depicted in Fig. \ref{fig:small.smdp}. Since by assumption $\frac{D}{12}> T_{\min}$ and $\frac{T_{\max}}{3}> T_{\min}$, let $\overline{\tau} \in \left]T_{\min},\min\left\lbrace \frac{D}{12}, \frac{T_{\max}}{3}\right\rbrace \right]$. Define $p = \frac{\overline{\tau}}{T_{\max}}$ and $\delta = \frac{4\overline{\tau}}{D}$. By definition of $\overline{\tau}$ we have: $p,\delta \leq \frac{1}{3}$. There are $A' = \lfloor \frac{A-1}{2} \rfloor$ actions available in each state of $M'$. We assume that $\forall (s,a,s')\in \mathcal{S} \times \mathcal{A}_s \times \mathcal{S}$ and $\forall i \geq 0$, $r_i(s,a,s')$ and $\tau_i(s,a,s')$ are independent. We also assume that $\forall i \geq 0$, $\tau_i(s_{i-1},a_i,s_i)$ and $r_i(s_{i-1},a_i,s_i)$ are independent of the next state $s_{i}$ and we write: $\tau_i(s_{i-1},a_i)$ and $r_i(s_{i-1},a_i)$. For each action $a$ in $\mathcal{A}_{s_0}$, $r(s_0, a)=0$ and $\tau(s,a) \sim \tau$ where $\tau$ is a r.v. defined in Table \ref{TimeVariables}. Moreover, for all actions $a$ but a specific action $a_0^*$, $p(s_1|s_0,a) = \delta$ whereas $p(s_1|s_0,a_0^*) = \delta + \epsilon$ for some $0 < \epsilon < \delta$ specified later in the proof. For all actions $a$ in $\mathcal{A}_{s_1}$, $p(s_0|s_1,a) = \delta $ and $\tau(s_1,a) \sim \tau$. Finally, $r(s_1, a) \sim r$ for all actions $a$ except $a_1^*$ for which $r(s_1, a_1^*) \sim r^*$, where $r$ and $r^*$ are r.v. defined in Table \ref{TimeVariables} where $0 < \eta < p$ will be defined later in the proof. Note that since $\eta < \frac{\overline{\tau}}{T_{\max}}$, we have: $\overline{r} \leq \overline{\tau}R_{\max}$ and $\overline{r}^* \leq \overline{\tau}R_{\max}$ which satisfies the definition of $R_{\max}$ given in assumption \ref{asm:BoundedExpectedDurationsRewards}. We denote $\mathbb{E}[\cdot|s]$ the expectation conditionally on starting in state $s$.

Let's define $T(s_1) = \inf \lbrace{t: s_t = s_1  \rbrace}$ the first time in which $s_1$ is encountered. $\forall d^\infty \in \Pi^{SD}_{M'}$ such that $d(s_0) \neq a_0^*$ we have:
\begin{align*}
\mathbb{E}^{d^\infty}[T(s_1)|s_0] &= \mathbb{E}^{d^\infty}\left[\sum_{n=1}^{+ \infty}{\left[ \left( \sum_{i=1}^{n}{\tau_i(s_{i-1}, a_i, s_i)} \right) \left( \prod_{j=0}^{n-1}{ \mathbb{1}_{s_j = s_0}} \right) \mathbb{1}_{s_n = s_1}  \right]} \Bigg| s_0 \right] \\
&= \sum_{n=1}^{+ \infty}{\mathbb{E}^{d^\infty}\left[ \left( \sum_{i=1}^{n}{\tau_i(s_{0}, d(s_0))} \right) \left( \prod_{j=0}^{n-1}{ \mathbb{1}_{s_j = s_0}} \right) \mathbb{1}_{s_n = s_1}  \Bigg| s_0 \right]}\\
&= \sum_{n=1}^{+ \infty}{\mathbb{E}^{d^\infty}\left[ \sum_{i=1}^{n}{\tau_i(s_{0}, d(s_0))} \Bigg| s_0 \right] \mathbb{E}^{d^\infty}\left[ \left( \prod_{j=0}^{n-1}{ \mathbb{1}_{s_j = s_0}} \right) \mathbb{1}_{s_n = s_1}  \Bigg| s_0 \right]}\\
&= \sum_{n=1}^{+ \infty}{n \overline{\tau} \times \delta {(1-\delta)}^{n-1}} = \frac{\overline{\tau}}{\delta}
\end{align*}
We used the fact that $\tau_i$ is independent of the next state $s_{i}$ and that $\overline{\tau} = \mathbb{E}[\tau]$. We can compute $\mathbb{E}^{d^\infty}[T(s_1)|s_0]$ with $d(s_0) = a_0^*$ and $\mathbb{E}^{d^\infty}[T(s_0)|s_1]$ (for both $d(s_0) \neq a_0^*$ and $d(s_0) = a_0^*$) similarly.
The diameter of SMDP $M'$ is thus:
\begin{align*}
D' = \max \left\{ \min \left\{ \frac{\overline{\tau}}{\delta}, \frac{\overline{\tau}}{\delta + \epsilon} \right\}, \min \left\{\frac{\overline{\tau}}{\delta}, \frac{\overline{\tau}}{\delta}  \right\} \right\} = \max \left\{ \frac{\overline{\tau}}{\delta + \epsilon} , \frac{\overline{\tau}}{\delta} \right\} = \frac{\overline{\tau}}{\delta}
\end{align*}

Any policy $d^\infty \in \Pi^{SD}_{M'}$ induces a recurrent Markov Chain on $M'$. Let's denote by $P_{d}^*$ the limiting matrix of this Markov Chain. We know (see \citep{Puterman94}) that $P_{d}^* = e\mu_d^\intercal$ where ${\mu_d = (1-p_d, p_d)^\intercal}$ is the stationary distribution of the recurrent Markov Chain. The probability $p_d$ can take only two different values: $p_d =\frac{1}{2}$ if and only if $d(s_0) \neq a_0^*$, and $p_d =\frac{\delta + \epsilon}{2\delta + \epsilon}$ if and only if $d(s_0) = a_0^*$. Using criterion 3 of Definition \ref{AverageReward}, the gain yielded by $d^\infty$ has the form:
\begin{align*}
\rho_d = \frac{p_d \overline{X}}{(1-p_d)\overline{\tau} + p_d \overline{\tau}} = \frac{p_d \overline{X}}{\overline{\tau}}
\end{align*}
where $X =r$ if and only if $d(s_1) \neq a_1^*$, and $X =r^*$ if and only if $d(s_1) = a_1^*$. Since $\overline{r}^*>\overline{r}$, the optimal decision rule $d^*$ must satisfy: $d^*(s_1) = a_1^*$. Similarly, since $\frac{\delta + \epsilon}{2\delta + \epsilon} > \frac{1}{2}$ we must have: $d^*(s_0) = a_0^*$. The optimal gain is thus:
\begin{align*}
\rho^* = \frac{R_{\max}}{2}\times\frac{\left(\delta + \epsilon\right)\left(\overline{\tau} +\eta T_{\max} \right)}{\left(2\delta + \epsilon\right)\overline{\tau}}
\end{align*}

\begin{figure}
\begin{tikzpicture}
	\useasboundingbox (-1.6,-2) rectangle (12,2);
	
	\tikzset{VertexStyle/.style = {draw, 
									shape          = circle,
	                                 text           = black,
	                                 inner sep      = 2pt,
	                                 outer sep      = 0pt,
	                                 minimum size   = 24 pt}}
	                                                               
	\tikzset{EdgeStyle/.style   = {->, >=latex}}
	\tikzset{LabelStyle/.style =   {above}}
	\tikzset{EdgeStyle/.append style = {bend left}}
	
		\node[VertexStyle](0) at (3,0) {$ s_{0} $};
	\node[VertexStyle](1) at (9,0){$ s_{1} $};
	
		\draw[->, >=latex](0) to [out=200,in=300,looseness=8]node[below]{$ 1-\delta, \tau, 0$} (0);
	\draw[EdgeStyle](1) to node[LabelStyle]{$ \delta, \tau, r $} (0);
	\draw[EdgeStyle](0) to node[below]{$ \delta , \tau, 0 $} (1);
	\draw[->, >=latex] (1) to [out=20,in=120,looseness=8] node[LabelStyle]{$ 1- \delta, \tau, r $} (1);
	
	\begin{scope}[dashed]
	\draw[EdgeStyle](0) to [out=50,in=135,looseness=1]node[LabelStyle]{$ \delta + \epsilon, \tau, 0 $} (1);
	\draw[EdgeStyle](1) [out=50,in=135,looseness=1]to node[below]{$ \delta, \tau, r^* $} (0);
	\draw[->, >=latex](0) to [out=160,in=60,looseness=8]node[LabelStyle]{$ 1-\delta-\epsilon, \tau, 0$} (0);
	\draw[->, >=latex] (1) to [out=340,in=250,looseness=8] node[below]{$ 1- \delta, \tau, r^* $} (1);
	\end{scope}
\end{tikzpicture}
	\caption{The two-state SMDP $M'$ for the lower-bound on SMDPs. The two special actions $a_0^*$ and $a_1^*$ are shown as dashed lines.} \label{fig:small.smdp}
\end{figure}
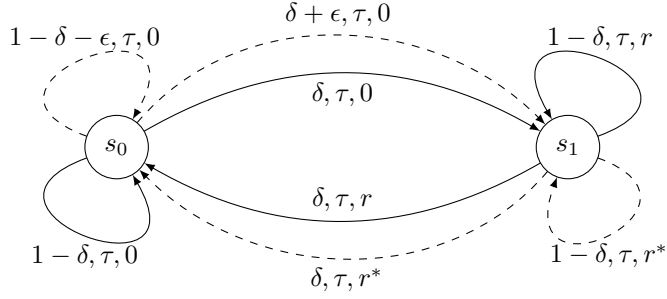

\begin{table}
\begin{align*}
\begin{array}{|c|c|c|c|c|c|}
R.v. \ \ X& X_{\min} & X_{\max} & \mathbb{P}(X = X_{\min}) & \mathbb{P}(X = X_{\max}) & \mathbb{E}[X]\\
\hline
\tau & T_{\min}& T_{\max} & >0 & >0 & \overline{\tau}\\
r & 0 & \frac{1}{2}R_{\max}T_{\max} & 1-p & p & \frac{1}{2}\overline{\tau}R_{\max} \\
r^* & 0 & \frac{1}{2}R_{\max}T_{\max} & 1-p-\eta & p+\eta & \frac{1}{2} \left(\overline{\tau}R_{\max} + \eta T_{\max} R_{\max} \right)
\end{array}
\end{align*}
\caption{Definition of random variables $\tau$, $r$ and $r^*$.} \label{TimeVariables}
\end{table}

The actual SMDP $M$ that we will use to prove the lower bound is built by considering $k = \lfloor \frac{S}{2} \rfloor$ copies of the two-state SMDP $M'$, where only one of the copies has such "good" actions $a_0^*$ and $a_1^*$ (all the other copies have the exact same number of actions as $M'$, but all actions are identical). $A'+1$ additional actions with deterministic transitions are introduced in every $s_0$-state. The reward for each of those actions is zero and the holding time is $T_{\min}$. These actions connect the $s_0$-states of the $k$ copies in a $A'$-ary tree structure on the $s_0$ states: one action goes toward the root and $A'$ actions goes toward the leaves (the same structure is described in section 6 of \citep{Jaksch10}, see Figure 6 for a representation of the $A'$-ary tree). The diameter of such an SMDP $M$ is at most $2\left(\frac{D}{4} + T_{\min}\lceil \log_{A'}{k}\rceil\right) \leq D$. All holding times of $M$ are in $[T_{\min},T_{\max}]$, and all rewards are in $[0,\frac{1}{2}R_{\max}T_{\max}]$. For all $(s,a) \in \mathcal{S} \times \mathcal{A}_s $ we have: $\overline{r}(s,a) \leq \overline{\tau}(s,a)R_{\max}$. Moreover, $M$ has at most $S$ states and $A$ actions per state.

For the analysis, we will study the simpler-to-learn SMDP $M''$ where all $s_0$-states are merged together as well as all $s_1$-states. The "merged" $s_0$-state is set to be the initial state. $M''$ is thus equivalent to the two-state SMDP $M'$ with $kA'$ available actions in both $s_0$ and $s_1$. Let's assume that the learning algorithm $\mathfrak{A}$ used is fixed. The probability distribution of the stochastic process $(s_0,a_0,\tau_0,r_0,s_1,...)$ is denoted:
\begin{itemize}[noitemsep,topsep=0pt]
\item $\mathbb{P}_{a_0,a_1}$  when $(a_0,a_1)$ are the best actions in respectively $s_0$ and $s_1$,
\item $\mathbb{P}_{\ast}$ when the pair $(a_0, a_1)$ identifying the best actions is first chosen uniformly at random from $\lbrace 1, ..., kA' \rbrace \times \lbrace 1, ..., kA' \rbrace$ before algorithm $\mathfrak{A}$ starts,
\item $\mathbb{P}_{\textrm{unif}0,a1}$ when the pair $a_1$ is the best action in $s_1$ and $\epsilon =0$ (no-optimal actions in $s_0$),
\item $\mathbb{P}_{a0,\textrm{unif}1}$ when the pair $a_0$ is the best action in $s_0$ and $\eta =0$ (no-optimal actions in $s_1$).
\end{itemize}

By construction, it is trivial to see that: ${\mathbb{E}_{\ast}[\Delta(M,\mathfrak{A},s,n)] \geq \mathbb{E}_{\ast}[\Delta(M'',\mathfrak{A},s,n)]}$ ($M''$ is easier to learn). We will show that $\mathbb{E}_{\ast}[\Delta(M'',\mathfrak{A},s_0,n)] = \Omega\left(\left(\sqrt{D'} + \sqrt{T_{\max}} \right)R_{\max}\sqrt{kA'n}\right)$ and the same result can be proved with initial state $s_1$ using similar arguments. This will necessarily imply that there exists at least one choice of pair $(a_0,a_1)$ for which, for all states $s$ we have: ${\mathbb{E}_{a_0,a_1}[\Delta(M,\mathfrak{A},s,n)] = \Omega\left(\left(\sqrt{D} + \sqrt{T_{\max}}\right)R_{\max}\sqrt{SAn}\right)}$. 

As already argued by \citep{Jaksch10}, for the analysis it is sufficient to consider algorithms with deterministic strategies for choosing actions.

We assume algorithm $\mathfrak{A}$ is run for $n$ decision steps, which means that $n+1$ states are visited in total (including the last state in which no action is taken). Let's denote by $N_0$ and $N_1$ the number of visits in states $s_0$ and $s_1$ respectively, last state excluded. For any ${(a, \overline{a}) \in \mathcal{A}_{s_0} \times \mathcal{A}_{s_1}}$, let's denote by $N_0^{a}$ and $N_1^{\overline{a}}$ the respective number of times actions $a$ and $\overline{a}$ are taken. Finally, let's denote by $N_0^{*}$ and $N_1^{*}$ the respective number of times best actions in $s_0$ and $s_1$ are taken. The SMDP $M''$ has the same transition probabilities as the MDP considered by \citep{Jaksch10} and we can use their proof to show that for any choice of best actions $(a_0,a_1)$:
\begin{align*}
&\mathbb{E}_{\textrm{unif}0,a_1}[N_1] \geq \frac{n}{2} - \frac{1}{2\delta} = \frac{n}{2} - \frac{D'}{2 \overline{\tau}}\\
&\mathbb{E}_{a_0,a_1}[N_1] \leq \frac{n}{2} + \frac{\mathbb{E}_{a_0,a_1}[N_0^*]\epsilon}{\delta} + \frac{1}{2\delta} = \frac{n}{2} + \frac{\mathbb{E}_{a_0,a_1}[N_0^{a_0}]\epsilon D'}{2\overline{\tau}} + \frac{D'}{2\overline{\tau}}
\end{align*}

The regret is defined as:
\begin{flalign*}
& \mathbb{E}_{\ast}[\Delta(M'',\mathfrak{A},s_0,n)] = \mathbb{E}_{\ast}\left[\sum_{i=1}^{n}{\tau_i(s_{i-1}, a_i, s_i)}\right] \rho^* - \mathbb{E}_{\ast}\left[\sum_{i=1}^{n}{r_i(s_{i-1}, a_i, s_i)}\right] &
\end{flalign*}
where the total duration is simply:
\begin{flalign*}
& \mathbb{E}_{\ast}\left[\sum_{i=1}^{n}{\tau_i(s_{i-1}, a_i, s_i)}\right] = \sum_{i=1}^{n}\mathbb{E}_{\ast}\left[\tau_i(s_{i-1}, a_i) \right] = n \overline{\tau} &
\end{flalign*}
and the cumulated reward is given by:
\begin{flalign*}
\mathbb{E}_{\ast}\left[\sum_{i=1}^{n}{r_i(s_{i-1}, a_i, s_i)}\right] &= \sum_{i=1}^{n}\mathbb{E}_{\ast}\left[r_i(s_{1}, a_1^*)\mathbb{1}_{s_{i-1}=s_1, a_i = a_1^*} + r_i(s_{1}, a_i \neq a_1^*)\mathbb{1}_{s_{i-1}=s_1,a_i \neq a_1^*} \right]\\
&=\mathbb{E}_{\ast}\left[\overline{r}^*\left(\sum_{i=0}^{n-1}\mathbb{1}_{s_{i}=s_1, a_{i+1} = a_1^*}\right) + \overline{r}\left(\sum_{i=0}^{n-1}\mathbb{1}_{s_{i}=s_1,a_{i+1} \neq a_1^*}\right) \right]\\
&=\mathbb{E}_{\ast}\left[\overline{r}\left(N_1 - N_1^*\right) + \overline{r}^*N_1^* \right]\\
&= \mathbb{E}_{\ast}\left[\overline{r}N_1 + (\overline{r}^*-\overline{r})N_1^* \right]\\
&= \frac{R_{\max}}{2}\overline{\tau}\mathbb{E}_{\ast}\left[N_1\right] + \eta \frac{R_{\max}}{2}T_{\max}\mathbb{E}_{\ast}[N_1^*] &
\end{flalign*}
hence the formula:
\begin{align}\label{RegretDecomposition}
\mathbb{E}_{\ast}[\Delta(M'',\mathfrak{A},s_0,n)] = \overline{\tau}\left(n\rho^* - \frac{R_{\max}}{2}\mathbb{E}_{\ast}[N_1]\right) - \eta \frac{R_{\max}}{2}T_{\max}\mathbb{E}_{\ast}[N_1^*]
\end{align}

\begin{lemma}\label{InformationCI}
Let $f : \lbrace s_0, s_1 \rbrace^{n+1} \times \lbrace T_{\min}, T_{\max} \rbrace^n \times \lbrace 0, \frac{1}{2}R_{\max}T_{\max} \rbrace^n \rightarrow [0,M]$ be any function defined on state/reward sequence $(\bm{s^{n+1}},\bm{\tau^n},\bm{r^n})\in \lbrace s_0, s_1 \rbrace^{n+1} \times \lbrace T_{\min}, T_{\max} \rbrace^n \times \lbrace 0, \frac{1}{2}R_{\max}T_{\max} \rbrace^n$ observed in the SMDP $M''$. Then for any $n \geq 1$, any $0 \leq \delta \leq \frac{1}{2}$, any $0 \leq \epsilon \leq 1-2\delta$, any $0 \leq p \leq \frac{1}{2}$, any $0 \leq \eta \leq 1-2p$, and any $(a_0,a_1) \in \lbrace 1, ..., kA' \rbrace \times \lbrace 1, ..., kA' \rbrace$:
\begin{align*}
&\bigg|\mathbb{E}_{a_0,a_1}\left[ f(\bm{s^{n+1}},\bm{\tau^n},\bm{r^n}) \right] - \mathbb{E}_{\textrm{unif}0,a_1}\left[ f(\bm{s^{n+1}},\bm{\tau^n},\bm{r^n}) \right] \bigg| \leq \frac{M}{2} \frac{\epsilon}{\sqrt{\delta}}  \sqrt{2 \mathbb{E}_{\textrm{unif}0,a_1} \left[ N_0^{a_0} \right]}\\
&\bigg|\mathbb{E}_{a_0,a_1}\left[ f(\bm{s^{n+1}},\bm{\tau^n},\bm{r^n}) \right] - \mathbb{E}_{a_0,\textrm{unif}1}\left[ f(\bm{s^{n+1}},\bm{\tau^n},\bm{r^n}) \right] \bigg| \leq \frac{M}{2} \frac{\eta}{\sqrt{p}}\sqrt{2 \mathbb{E}_{a_0,\textrm{unif}1} \left[ N_1^{a_1} \right]}
\end{align*}

\end{lemma}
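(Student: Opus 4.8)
The plan is to prove both inequalities by the same change-of-measure argument, handling the first one (perturbation $\epsilon$ of the $(s_0,a_0^*)$ transition, the reward and holding-time kernels being identical under $\mathbb{P}_{a_0,a_1}$ and $\mathbb{P}_{\mathrm{unif}0,a_1}$) and the second one (perturbation $\eta$ of the $(s_1,a_1^*)$ reward, the transition and holding-time kernels being identical) symmetrically; I describe the first in detail. The first step is to pass from the bounded test function to a total-variation distance. Since $f$ takes values in $[0,M]$, I would center it as $\tilde f = f - M/2$, so that $\|\tilde f\|_\infty \le M/2$, and use $\sum_\omega (\mathbb{P}_{a_0,a_1}(\omega) - \mathbb{P}_{\mathrm{unif}0,a_1}(\omega)) = 0$ to write $|\mathbb{E}_{a_0,a_1}[f] - \mathbb{E}_{\mathrm{unif}0,a_1}[f]| = |\mathbb{E}_{a_0,a_1}[\tilde f] - \mathbb{E}_{\mathrm{unif}0,a_1}[\tilde f]| \le M\, d_{\mathrm{TV}}(\mathbb{P}_{a_0,a_1},\mathbb{P}_{\mathrm{unif}0,a_1})$, the total-variation distance between the two laws on the trajectory $(\bm{s^{n+1}},\bm{\tau^n},\bm{r^n})$.

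The second step applies Pinsker's inequality, $d_{\mathrm{TV}} \le \sqrt{\tfrac12\,\mathrm{KL}(\mathbb{P}_{\mathrm{unif}0,a_1}\,\|\,\mathbb{P}_{a_0,a_1})}$, and evaluates the relative entropy by the chain rule over the trajectory. Here I would use the reduction (already invoked in the excerpt) that it suffices to consider deterministic algorithms, so the action at each step is a fixed function of the past observations and contributes nothing to the likelihood ratio. The decisive simplification is that the two laws differ only in the next-state kernel at $(s_0,a_0)$: the holding-time laws coincide everywhere, and the reward laws coincide because both measures keep $a_1$ as the optimal reward action. Hence only the steps at which $a_0$ is played in $s_0$ contribute, and the chain rule collapses to $\mathrm{KL}(\mathbb{P}_{\mathrm{unif}0,a_1}\,\|\,\mathbb{P}_{a_0,a_1}) = \mathbb{E}_{\mathrm{unif}0,a_1}[N_0^{a_0}]\cdot \mathrm{KL}(\mathrm{Ber}(\delta)\,\|\,\mathrm{Ber}(\delta+\epsilon))$.

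The third step bounds the scalar divergence by $\mathrm{KL}(\mathrm{Ber}(\delta)\,\|\,\mathrm{Ber}(\delta+\epsilon)) \le \epsilon^2/\delta$, which holds in the stated range $\delta \le 1/2$, $\epsilon \le 1-2\delta$. Chaining the three steps gives $M\sqrt{\tfrac12\,\mathbb{E}_{\mathrm{unif}0,a_1}[N_0^{a_0}]\,\epsilon^2/\delta} = \frac{M}{2}\frac{\epsilon}{\sqrt\delta}\sqrt{2\,\mathbb{E}_{\mathrm{unif}0,a_1}[N_0^{a_0}]}$, which is exactly the first claim. The second inequality is identical after swapping roles: the relative entropy $\mathrm{KL}(\mathbb{P}_{a_0,\mathrm{unif}1}\,\|\,\mathbb{P}_{a_0,a_1})$ now localizes to visits of $(s_1,a_1)$ and reduces to $\mathbb{E}_{a_0,\mathrm{unif}1}[N_1^{a_1}]\cdot\mathrm{KL}(\mathrm{Ber}(p)\,\|\,\mathrm{Ber}(p+\eta)) \le \mathbb{E}_{a_0,\mathrm{unif}1}[N_1^{a_1}]\,\eta^2/p$, the reward taking only the two values $\{0,\tfrac12 R_{\max}T_{\max}\}$, whose relabeling leaves the divergence unchanged.

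The main obstacle is the second step: one must write the trajectory likelihood under each law explicitly and verify that, thanks to determinism of the policy and to the shared reward and holding-time kernels, the relative entropy factorizes cleanly into the expected visit count times a single Bernoulli divergence, with the expectation taken under the reference measure (the one with $\epsilon=0$, resp. $\eta=0$) so that the count $N_0^{a_0}$, resp. $N_1^{a_1}$, is measured under $\mathbb{P}_{\mathrm{unif}0,a_1}$, resp. $\mathbb{P}_{a_0,\mathrm{unif}1}$. Securing the exact constant in the third step — so that the bound features $\epsilon/\sqrt\delta$ rather than a looser $\epsilon/\sqrt{\delta(1-\delta)}$-type quantity — is an elementary but slightly delicate calculus check: the map $\epsilon \mapsto \mathrm{KL}(\mathrm{Ber}(\delta)\,\|\,\mathrm{Ber}(\delta+\epsilon))$ vanishes to second order at $\epsilon=0$, and it is precisely the constraint $\epsilon \le 1-2\delta$ that keeps it below $\epsilon^2/\delta$ over the whole admissible range.
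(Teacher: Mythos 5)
Your proposal is correct and follows essentially the same route as the paper's proof (which itself adapts Appendix E of Jaksch et al.): bound the difference of expectations of the bounded test function by total variation, apply Pinsker, use the chain rule to localize the KL divergence to the visits of $(s_0,a_0)$ (resp.\ $(s_1,a_1)$) thanks to the independence of next-state, holding-time and reward and the fact that the two measures differ only in the transition (resp.\ reward) kernel there, and finish with the Bernoulli KL bound $\epsilon^2/\delta$ valid under $\delta\le 1/2$, $\epsilon\le 1-2\delta$ (Lemma 20 of Jaksch et al.). The only cosmetic difference is that the paper works with base-2 logarithms, carrying a $\log(2)$ through Pinsker and the Bernoulli bound that cancels exactly as in your natural-log version.
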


\begin{proof}
We refer the reader to Appendix E of \citep{Jaksch10} where a similar Lemma is proved for the MDP-analogue of SMDP $M''$. In the following, we will only stress the main difference with the proof in \citep{Jaksch10}. We know from information theory that:
\begin{align*}
&\bigg|\mathbb{E}_{a_0,a_1}\left[ f(\bm{s^{n+1}},\bm{\tau^n},\bm{r^n}) \right] - \mathbb{E}_{\textrm{unif}0,a_1}\left[ f(\bm{s^{n+1}},\bm{\tau^n},\bm{r^n}) \right] \bigg| \leq \frac{M}{2}  \sqrt{2 \log(2) KL\left( \mathbb{P}_{\textrm{unif}0,a_1} \Vert \mathbb{P}_{a_0,a_1}\right)}\\
&\bigg|\mathbb{E}_{a_0,a_1}\left[ f(\bm{s^{n+1}},\bm{\tau^n},\bm{r^n}) \right] - \mathbb{E}_{a_0,\textrm{unif}1}\left[ f(\bm{s^{n+1}},\bm{\tau^n},\bm{r^n}) \right] \bigg| \leq \frac{M}{2}  \sqrt{2 \log(2) KL\left(\mathbb{P}_{a_0,\textrm{unif}1} \Vert \mathbb{P}_{a_0,a_1} \right)}
\end{align*}
By the chain rule of Kullback–Leibler divergences, it holds that:
\begin{align*}
&KL\left(\mathbb{P}_{a_0,\textrm{unif}1} \Vert \mathbb{P}_{a_0,a_1}\right) = \sum_{i=1}^{n} KL\left(\mathbb{P}_{a_0,\textrm{unif}1}(s_{i},\tau_{i},r_{i} | \bm{s^{i-1}},\bm{\tau^{i-1}},\bm{r^{i-1}}) \Vert \mathbb{P}_{a_0,a_1}(s_{i},\tau_{i},r_{i} | \bm{s^{i-1}},\bm{\tau^{i-1}},\bm{r^{i-1}}) \right)\\
& \mbox{where  } KL\left(\mathbb{P}(s_{i},\tau_{i},r_{i} | \bm{s^{i-1}},\bm{\tau^{i-1}},\bm{r^{i-1}}) \Vert \mathbb{Q}(s_{i},\tau_{i},r_{i} | \bm{s^{i-1}},\bm{\tau^{i-1}},\bm{r^{i-1}}) \right) =
\\
& \ \ \ \ \ \ \ \ \ \ \ \ \ \ \ \ \ \ \ \ \ \ \ \ \ \ \ \ \ \ \ \ \ \ \ \ \ \ \ \ \ \  \sum_{\bm{s^{i}} \in \mathcal{S}^{i},\bm{\tau^{i}} \in \mathcal{T}, \bm{r^{i}} \in \mathcal{R}^{i}}{\mathbb{P}(\bm{s^{i}},\bm{\tau^{i-1}},\bm{r^{i}})}\log_2{\left(\frac{\mathbb{P}(s_{i},\tau_{i},r_{i}|\bm{s^{i-1}},\bm{\tau^{i-1}},\bm{r^{i-1}})}{\mathbb{Q}(s_{i},\tau_{i},r_{i}|\bm{s^{i-1}},\bm{\tau^{i-1}},\bm{r^{i-1}})}\right)}
\end{align*}

with $\mathcal{S} = \lbrace s_0, s_1 \rbrace$, $\mathcal{T} = \lbrace T_{\min}, T_{\max} \rbrace$ and $\mathcal{R} = \lbrace 0, \frac{1}{2} R_{\max}T_{\max} \rbrace$. The same holds for $\mathbb{P}_{\textrm{unif}0,a_1}$.

Similarly to \citep{Jaksch10} and using the independence between $s_i$, $\tau_i$ and $r_i$, we obtain:
\begin{align*}
KL&\left(\mathbb{P}_{\textrm{unif}0,a_1}(s_{i},\tau_i,r_{i} | \bm{s^{i-1}},\bm{\tau^{i-1}},\bm{r^{i-1}}) \Vert \mathbb{P}_{a_0,a_1}(s_{i},\tau_i,r_{i} | \bm{s^{i-1}},\bm{\tau^{i-1}},\bm{r^{i-1}}) \right)\\
&= \mathbb{P}_{\textrm{unif}0,a_1}(s_{i-1} = s_{0},a_i =a_0) \sum_{ s'\in \mathcal{S}, \tau' \in \mathcal{T}, r' \in \mathcal{R}}{\mathbb{P}_{\textrm{unif}0,a_1}(s',\tau',r'|s_0, a_0)}\log_2{\left(\frac{\mathbb{P}_{\textrm{unif}0,a_1}(s',\tau',r'|s_0, a_0)}{\mathbb{P}_{a_0,a_1}(s',\tau',r'|s_0, a_0)}\right)}\\
&= \mathbb{P}_{\textrm{unif}0,a_1}(s_{i-1} = s_{0},a_i =a_0) \sum_{ s' \in \mathcal{S}}{\mathbb{P}_{\textrm{unif}0,a_1}(s'|s_0, a_0)}\log_2{\left(\frac{\mathbb{P}_{\textrm{unif}0,a_1}(s'|s_0, a_0)}{\mathbb{P}_{a_0,a_1}(s'|s_0, a_0)}\right)}\\
&= \mathbb{P}_{\textrm{unif}0,a_1}(s_{i-1} = s_{0},a_i =a_0) \left(\delta\log_2 \left( \frac{\delta}{\delta+\epsilon}\right) + (1-\delta)\log_2 \left( \frac{1-\delta}{1-\delta-\epsilon}\right) \right)
\end{align*}
\begin{align*}
KL&\left(\mathbb{P}_{a_0,\textrm{unif}1}(s_{i},\tau_i,r_{i} | \bm{s^{i-1}},\bm{\tau^{i-1}},\bm{r^{i-1}}) \Vert \mathbb{P}_{a_0,a_1}(s_{i},\tau_i,r_{i} | \bm{s^{i-1}},\bm{\tau^{i-1}},\bm{r^{i-1}}) \right)\\
&= \mathbb{P}_{a_0,\textrm{unif}1}(s_{i-1} = s_{1},a_i =a_1) \sum_{ s'\in \mathcal{S}, \tau' \in \mathcal{T}, r' \in \mathcal{R}}{\mathbb{P}_{a_0,\textrm{unif}1}(s',\tau',r'|s_1, a_1)}\log_2{\left(\frac{\mathbb{P}_{a_0,\textrm{unif}1}(s',\tau',r'|s_1, a_1)}{\mathbb{P}_{a_0,a_1}(s',\tau',r'|s_1, a_1)}\right)}\\
&= \mathbb{P}_{a_0,\textrm{unif}1}(s_{i-1} = s_{1},a_i =a_1) \sum_{r' \in \mathcal{R}}{\mathbb{P}_{a_0,\textrm{unif}1}(r'|s_1, a_1)}\log_2{\left(\frac{\mathbb{P}_{a_0,\textrm{unif}1}(r'|s_1, a_1)}{\mathbb{P}_{a_0,a_1}(r'|s_1, a_1)}\right)}\\
&= \mathbb{P}_{a_0,\textrm{unif}1}(s_{i-1} = s_{1},a_i =a_1) \left(p\log_2 \left( \frac{p}{p+\eta}\right) + (1-p)\log_2 \left( \frac{1-p}{1-p-\eta}\right) \right)
\end{align*}

Using Lemma 20 of \citep{Jaksch10} we have that under conditions $0 \leq \delta \leq \frac{1}{2}$, $0 \leq \epsilon \leq 1-2\delta$, $0 \leq p \leq \frac{1}{2}$, and $0 \leq \eta \leq 1-2p$  the following inequalities hold:
\begin{align*}
&\delta\log_2 \left( \frac{\delta}{\delta+\epsilon}\right) + (1-\delta)\log_2 \left( \frac{1-\delta}{1-\delta-\epsilon}\right)  \leq \frac{\epsilon^2}{\delta \log(2)} \\
\mbox{and   } \ &p\log_2 \left( \frac{p}{p+\eta}\right) + (1-p)\log_2 \left( \frac{1-p}{1-p-\eta}\right) \leq \frac{\eta^2}{p \log(2)}
\end{align*}
which concludes the proof.
\end{proof}

Note that by assumption: $\epsilon \leq \delta \leq \frac{1}{3} \leq 1 - 2\delta$ and $\eta \leq p \leq \frac{1}{3} \leq 1 - 2p$.

We can bound $\mathbb{E}_{\ast}\left[N_1\right]$ using Lemma \ref{InformationCI} as is done in \citep{Jaksch10}. This is because $N_1$ can be written as a function of $(\bm{s^{n+1}},\bm{\tau^n},\bm{r^n})$. Since the computations are rigorously the same except that $\delta = \frac{\overline{\tau}}{D'}$ instead of $\frac{1}{D'}$, we give the results without any further details:
\begin{align*}
\begin{split}
\mathbb{E}_{\ast}\left[N_1\right] \leq \frac{n}{2} + \frac{D'}{2\overline{\tau}} + \frac{\epsilon n D'}{2\overline{\tau}kA'} + \frac{\epsilon D'^ 2}{2\overline{\tau}^2 kA'}+ \frac{\epsilon^ 2 n D'}{2\overline{\tau}kA'}\sqrt{\frac{D'kA'n}{\overline{\tau}}} + \frac{\epsilon^ 2 n D'^2}{2\overline{\tau}^2 kA'}\sqrt{kA'}
\end{split}
\end{align*}
Taking into account the fact that by assumption $n \geq DSA \geq 16 D'kA'$ we get:
\begin{align}\label{FirstTerm1}
\begin{split}
\mathbb{E}_{\ast}\left[N_1\right] \leq \frac{n}{2} + \frac{D'}{2\overline{\tau}} + \frac{\epsilon n D'}{\overline{\tau}}\left(\frac{1}{2 k A'} + \frac{1}{32\overline{\tau} k^2 A'^2}\right) + \frac{\epsilon^ 2 n D'}{\overline{\tau}kA'}\sqrt{\frac{D'kA'n}{\overline{\tau}}} \left(\frac{1}{2} + \frac{1}{8\sqrt{kA'}} \right)
\end{split}
\end{align}

Given that $\delta \geq  \epsilon \geq 0$ we have the following inequality:
\begin{align}\label{FirstTerm2}
\begin{split}
\rho^* - \frac{R_{\max}}{4} &= \frac{R_{\max}}{2} \times \frac{\epsilon \overline{\tau} + (\delta + \epsilon)\eta T_{\max}}{2 (2 \delta + \epsilon) \overline{\tau} } \geq \frac{R_{\max}}{2} \times \left( \frac{\epsilon D'}{6\overline{\tau}} + \frac{\eta T_{\max}}{6 \overline{\tau} } \right)
\end{split}
\end{align}

Applying Lemma \ref{InformationCI} ($N_1^*$ is a function of $(\bm{s^{n+1}},\bm{\tau^{n}},\bm{r^n})$) and Jensen's inequality we get ${\forall a_0 \in \lbrace 1,...,kA' \rbrace}$:
\begin{align*}
\mathbb{E}_{\ast}\left[N_1^*\right] = \frac{1}{kA'} \sum_{a_1 = 1}^{kA'} \mathbb{E}_{a_0,a_1}\left[N_1^{a_1}\right] \leq \frac{\mathbb{E}_{a_0,\textrm{unif}1}\left[N_1\right]}{kA'} 
+ \frac{n}{2kA'}\eta \sqrt{\frac{2kA'T_{\max}}{\overline{\tau}}\mathbb{E}_{a_0,\textrm{unif}1}\left[N_1\right]}
\end{align*}

We will now derive an upper-bound on $\mathbb{E}_{a_0,\textrm{unif}1}\left[N_1\right]$.

\begin{lemma}\label{ArithmeticoGeometricIneq}
Let $\left(u_n\right)_{n \in \mathbb{N}} \in \mathbb{R}^\mathbb{N}$ be any real sequence satisfying the following arithmetico-geometric recurrence relation:
\begin{align*}
\forall n \in \mathbb{N}, \ \  u_{n+1} \geq q u_{n} + r \ \ \ \mbox{where} \ (q,r) \in \mathbb{R} \setminus \lbrace 1 \rbrace \times \mathbb{R}
\end{align*}
Then we have that:
\begin{align*}
\forall n \in \mathbb{N}, \ \ u_{n} \geq \left( u_0 - \frac{r}{1-q} \right)\alpha^n + \frac{r}{1-q}
\end{align*}
\end{lemma}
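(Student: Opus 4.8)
The plan is to reduce the affine (arithmetico-geometric) recurrence to a purely homogeneous one by shifting to the fixed point of the associated map $x \mapsto qx + r$. Since $q \neq 1$, this map has a unique fixed point
\begin{align*}
x^* = \frac{r}{1-q}, \qquad \text{characterized by} \qquad q x^* + r = x^*.
\end{align*}
First I would introduce the shifted sequence $v_n = u_n - x^*$ and substitute it into the hypothesis $u_{n+1} \ge q u_n + r$:
\begin{align*}
v_{n+1} = u_{n+1} - x^* \ge q u_n + r - x^* = q(v_n + x^*) + r - x^* = q v_n,
\end{align*}
where the last equality uses the fixed-point identity $q x^* + r = x^*$. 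Thus the affine inequality collapses to the clean homogeneous inequality $v_{n+1} \ge q v_n$, which is much easier to iterate.

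The second step is a short induction on $n$ to pass from this one-step bound to the closed form $v_n \ge q^n v_0$. The base case $n = 0$ is the identity $v_0 \ge v_0$; for the inductive step I multiply the induction hypothesis $v_n \ge q^n v_0$ by $q$ and chain it with $v_{n+1} \ge q v_n$ to get $v_{n+1} \ge q^{n+1} v_0$. Substituting $v_n = u_n - x^*$ back then yields
\begin{align*}
u_n \ge q^n\,(u_0 - x^*) + x^* = \left(u_0 - \frac{r}{1-q}\right)\alpha^n + \frac{r}{1-q},
\end{align*}
with $\alpha = q$, which is exactly the claimed bound.

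The main (indeed the only) subtle point is the monotonicity invoked in the inductive step: multiplying the inequality $v_n \ge q^n v_0$ through by $q$ preserves its direction only when $q \ge 0$. For $q < 0$ the conclusion is in fact false — a simple alternating example with $r = 0$ shows that $v_{n+1} \ge q v_n$ does not force $v_n \ge q^n v_0$ — so the statement must be read with $\alpha = q \ge 0$, which is precisely the regime in which the lemma is later invoked (the multiplicative factor there arising from a probability-weighted transition). I would therefore make the hypothesis $q \ge 0$ explicit and remark that the whole argument is a deterministic manipulation of a real sequence, so no finiteness or integrability conditions are required.
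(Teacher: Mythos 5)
Your proof is correct and is essentially identical to the paper's: both shift by the fixed point $r/(1-q)$ to obtain $v_{n+1} \geq q v_n$ and then iterate. Your added observation that the induction step requires $q \geq 0$ (and that $\alpha$ should read $q$) is a fair point the paper's ``trivial induction'' glosses over, and it holds in the regime where the lemma is applied.
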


\begin{proof}
Defining sequence $v_n = u_n - \frac{r}{1-q}$, we have:
\begin{align*}
\forall n \in \mathbb{N}, \ \ v_{n+1} = u_{n+1} - \frac{r}{1-q} \geq q u_{n} + r - \frac{r}{1-q} = q \left(u_{n} - \frac{r}{1-q} \right) = q v_{n}
\end{align*}
By trivial induction we get: $\forall n \in \mathbb{N}, \ \ v_{n} \geq v_0 q^n$. The result follows by replacing $v_{n}$ by ${u_n - \frac{r}{1-q}}$.
\end{proof}

By the law of total probability and since $\mathbb{P}_{a_0,\textrm{unif}1}\left(s_{i} = s_0\right) + \mathbb{P}_{a_0,\textrm{unif}1}\left(s_{i} = s_1\right) =1$ we have:
\begin{align*}
\forall i \geq 0, \ \ \mathbb{P}_{a_0,\textrm{unif}1}\left(s_{i+1} = s_0\right) = &\mathbb{P}_{a_0,\textrm{unif}1}\left(s_{i+1} = s_0 | s_i = s_0\right)\mathbb{P}_{a_0,\textrm{unif}1}\left(s_{i} = s_0\right) \\
&+ \mathbb{P}_{a_0,\textrm{unif}1}\left(s_{i+1} = s_0 | s_i = s_1\right)\mathbb{P}_{a_0,\textrm{unif}1}\left(s_{i} = s_1\right)\\
= &\mathbb{P}_{a_0,\textrm{unif}1}\left(s_{i} = s_0\right)\big( \mathbb{P}_{a_0,\textrm{unif}1}\left(s_{i+1} = s_0 | s_i = s_0\right)\\
&- \mathbb{P}_{a_0,\textrm{unif}1}\left(s_{i+1} = s_0 | s_i = s_1\right)\big) + \mathbb{P}_{a_0,\textrm{unif}1}\left(s_{i+1} = s_0 | s_i = s_1\right)\\
\geq &\mathbb{P}_{a_0,\textrm{unif}1}\left(s_{i} = s_0\right)(1-2\delta-\epsilon) + \delta
\end{align*}
Since the initial state is $s_0$ and $2\delta+\epsilon \leq 3\delta$ we have by Lemma \ref{ArithmeticoGeometricIneq}:
\begin{align*}
&\forall i \geq 0, \ \ \ \mathbb{P}_{a_0,\textrm{unif}1}\left(s_{i} = s_0\right) \geq \left(1-\frac{\delta}{2\delta+\epsilon}\right)(1-2\delta-\epsilon)^i + \frac{\delta}{2\delta+\epsilon} \geq \frac{1}{3}\\
&\implies \mathbb{E}_{a_0,\textrm{unif}1}\left[N_0\right] = \sum_{i=0}^{n} \mathbb{P}_{a_0,\textrm{unif}1}\left(s_{i} = s_0\right) \geq \frac{n}{3}\\
&\implies \mathbb{E}_{a_0,\textrm{unif}1}\left[N_1\right]\leq \frac{2n}{3}\\
&\implies \mathbb{E}_{\ast}\left[N_1^*\right] \leq \frac{2n}{3kA'} + \frac{n\eta}{kA'} \sqrt{\frac{T_{\max}nkA'}{3\overline{\tau}}}
\end{align*}
Hence the bound:
\begin{align}\label{SecondTerm}
\begin{split}
\eta T_{\max}\mathbb{E}_{\ast}\left[N_1^*\right] \leq \frac{2n\eta T_{\max}}{3kA'}
\end{split} + \frac{n\eta^2 T_{\max}}{kA'} \sqrt{\frac{T_{\max}nkA'}{3\overline{\tau}}}
\end{align}

By setting $\epsilon = c \sqrt{\frac{kA'}{nD'}}$ and $\eta = \kappa \sqrt{\frac{kA'}{n T_{\max}}}$ and incorporating inequalities \ref{FirstTerm1}, \ref{FirstTerm2} and \ref{SecondTerm} into inequality \ref{RegretDecomposition} we obtain:
\begin{align*}
\mathbb{E}_{\ast}[\Delta(M'',\mathfrak{A},s_0,n)] \geq &\left[ \frac{c}{6} - \frac{c}{2kA'} - \frac{c}{32k^2A'^2\overline{\tau}} -\frac{c^2}{2\sqrt{\overline{\tau}}} -\frac{c^2}{8\sqrt{kA'\overline{\tau}}} -\frac{1}{8kA'} \right]\frac{R_{\max}}{2}\sqrt{D'kA'n}\\
&+ \left[ \frac{\kappa}{6} - \frac{2\kappa}{3kA'} - \frac{\kappa^2}{\sqrt{3\overline{\tau}}} \right]\frac{R_{\max}}{2}\sqrt{T_{\max}kA'n}\\
&\geq \left[ \frac{c}{6} - \frac{c}{2kA'} - \frac{c}{32k^2A'^2} -\frac{c^2}{2} -\frac{c^2}{8\sqrt{kA'}} -\frac{1}{8kA'} \right]\frac{R_{\max}}{2}\sqrt{D'kA'n}\\
&+ \left[ \frac{\kappa}{6} - \frac{2\kappa}{3kA'} - \frac{\kappa^2}{\sqrt{3}} \right]\frac{R_{\max}}{2}\sqrt{T_{\max}kA'n}
\end{align*}
For the second inequality, we used the fact that $\overline{\tau} > T_{\min} \geq 1$ (by assumption). If $c$ and $\kappa$ are sufficiently small, then the conditions of lemma \ref{InformationCI} are indeed satisfied and the above polynomials in $c$ and $\kappa$ are non-negative. For example, if $c = \kappa = \frac{1}{5}$:
\begin{align*}
& n \geq DSA \geq 16D'kA' \implies \epsilon = \frac{1}{5} \sqrt{\frac{kA'}{n D'}} \leq \frac{\delta}{20} < \delta\\
& n \geq T_{\max}SA \geq 4T_{\max}kA' \implies \eta = \frac{1}{5} \sqrt{\frac{kA'}{n T_{\max}}} \leq \frac{p}{10} < p\\
&\mbox{and    } \mathbb{E}_{\ast}[\Delta(M'',\mathfrak{A},s_0,n)] \geq 0.0015 \times \left(\sqrt{D'} + \sqrt{T_{\max}}\right) R_{\max} \sqrt{kA'n}\\
\end{align*}

\subsection{Lower Bound for MDPs with options}

We first note that SMDP $M'$ depicted in Fig. \ref{fig:small.smdp} cannot be converted into an MDP with options. This is due to the fact that $\tau_i(s_{i-1},a_i,s_i)$ and $r_i(s_{i-1},a_i,s_i)$ were assumed to be independent and the fact that $\mathbb{P}(r^* = \frac{1}{2}R_{\max}T_{\max}) > \mathbb{P}(\tau = T_{\max})$. However, it is possible to prove a slightly smaller lower bound for a family of SMDPs that can be transformed into an equivalent MDP with options. We will first present the SMDPs and the lower bound and then we will describe how to transform them into an MDP with options.

The family of SMDPs is constructed in the same way as previously except that we use a slightly different SMDP $M'$, represented on Fig. \ref{fig:small.smdp2} with the random variables given in Table \ref{TimeVariables2}. We take: $p = \frac{1}{2(T_{\max} - T_{\min})}$ and $\delta = \frac{4 \overline{\tau}}{D}$. By assumption $p<\frac{1}{3}$ and $\delta <  \frac{1}{2}$. As before, we assume that for all $i$, $\tau_i(s_{i-1},a_i,s_i)$ and $r_{i}(s_{i-1},a_i,s_i)$ are independent of the next state $s_{i}$. But the main difference with the previous lower bound is that we assume that $r_i$ and $\tau_i$ are strongly correlated, namely: $r_i(s_{i-1},a_i,s_i) = R_{\max} \mathbb{1}_{\lbrace{s_{i-1} = s_1 \rbrace}}\tau_i(s_{i-1},a_i)$. We assume $\epsilon \leq \delta$ and $\eta \leq 1 - 2p$. The optimal gain of $M'$ is reached when $a_0^*$ and $a_1^*$ are chosen in $s_0$ and $s_1$ respectively and is equal to:
\begin{align*}
\rho^* = R_{\max}\times\frac{\left(\delta + \epsilon\right)\left(\overline{\tau} +\eta (T_{\max}-T_{\min}) \right)}{\left(2\delta + \epsilon\right)\overline{\tau} + \left(\delta + \epsilon\right) (T_{\max}-T_{\min})}
\end{align*}
By adapting the proof of the lower bound for general SMDPs, we can obtain the following result:
\begin{align*}
\mathbb{E}_{\ast}[\Delta(M'',\mathfrak{A},s_0,n)] \geq &\left[ \frac{c}{6} - \frac{c}{2kA'} - \frac{c}{32k^2A'^2\overline{\tau}} -\frac{c^2}{2\sqrt{\overline{\tau}}} -\frac{c^2}{8\sqrt{kA'\overline{\tau}}} -\frac{1}{8kA'} \right]R_{\max}\sqrt{D'kA'n}\\
&+ \left[ \frac{\kappa}{6} - \frac{2\kappa}{3kA'} - \frac{\kappa^2}{\sqrt{3/2}} \right]R_{\max}\sqrt{(T_{\max} - T_{\min})kA'n}\\
&\geq \left[ \frac{c}{6} - \frac{c}{2kA'} - \frac{c}{32k^2A'^2} -\frac{c^2}{2} -\frac{c^2}{8\sqrt{kA'}} -\frac{1}{8kA'} \right]R_{\max}\sqrt{D'kA'n}\\
&+ \left[ \frac{\kappa}{6} - \frac{2\kappa}{3kA'} - \frac{\kappa^2}{\sqrt{3/2}} \right]R_{\max}\sqrt{(T_{\max} - T_{\min})kA'n}
\end{align*}
by setting $\epsilon = c \sqrt{\frac{kA'}{nD'}}$ and $\eta = \kappa \sqrt{\frac{kA'}{n (T_{\max} - T_{\min})}}$. It is then possible to tune $c$ and $\kappa$ so that $\epsilon$ and $\eta$ satisfy the constraints and:
\begin{align*}
\mathbb{E}_{\ast}[\Delta(M'',\mathfrak{A},s_0,n)] = \Omega \left(\left(\sqrt{D'} + \sqrt{T_{\max} - T_{\min}}\right) R_{\max} \sqrt{kA'n} \right)
\end{align*}

\begin{figure}
\begin{tikzpicture}
	\useasboundingbox (-1.6,-2) rectangle (12,2);
	
	\tikzset{VertexStyle/.style = {draw, 
									shape          = circle,
	                                 text           = black,
	                                 inner sep      = 2pt,
	                                 outer sep      = 0pt,
	                                 minimum size   = 24 pt}}
	                                                               
	\tikzset{EdgeStyle/.style   = {->, >=latex}}
	\tikzset{LabelStyle/.style =   {above}}
	\tikzset{EdgeStyle/.append style = {bend left}}
	
		\node[VertexStyle](0) at (3,0) {$ s_{0} $};
	\node[VertexStyle](1) at (9,0){$ s_{1} $};
	
		\draw[->, >=latex](0) to [out=200,in=300,looseness=8]node[below]{$ 1-\delta, \tau, 0$} (0);
	\draw[EdgeStyle](1) to node[LabelStyle]{$ \delta, \tau, r $} (0);
	\draw[EdgeStyle](0) to node[below]{$ \delta , \tau, 0 $} (1);
	\draw[->, >=latex] (1) to [out=20,in=120,looseness=8] node[LabelStyle]{$ 1- \delta, \tau, r $} (1);
	
	\begin{scope}[dashed]
	\draw[EdgeStyle](0) to [out=50,in=135,looseness=1]node[LabelStyle]{$ \delta + \epsilon, \tau, 0 $} (1);
	\draw[EdgeStyle](1) [out=50,in=135,looseness=1]to node[below]{$ \delta, \tau^*, r^* $} (0);
	\draw[->, >=latex](0) to [out=160,in=60,looseness=8]node[LabelStyle]{$ 1-\delta-\epsilon, \tau, 0$} (0);
	\draw[->, >=latex] (1) to [out=340,in=250,looseness=8] node[below]{$ 1- \delta, \tau^*, r^* $} (1);
	\end{scope}
\end{tikzpicture}
	\caption{The two-state SMDP $M'$ for the lower-bound on MDPs with options. The two special actions $a_0^*$ and $a_1^*$ are shown as dashed lines.} \label{fig:small.smdp2}
\end{figure}
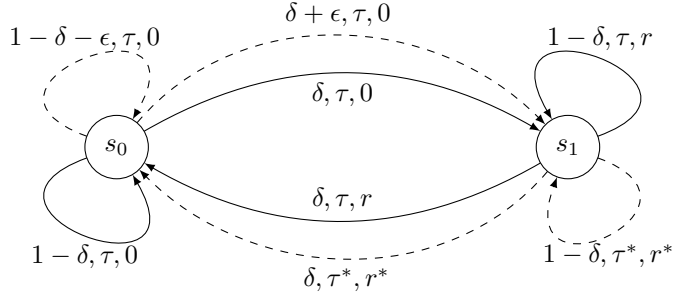

\begin{table}
\begin{align*}
\begin{array}{|c|c|c|c|c|c|}
R.v. \ \ X& X_{\min} & X_{\max} & \mathbb{P}(X = X_{\min}) & \mathbb{P}(X = X_{\max}) & \mathbb{E}[X]\\
\hline
\tau & T_{\min}& T_{\max} & 1-p & p & \overline{\tau} = T_{\min} + p(T_{\max} - T_{\min})\\
\tau^* & T_{\min}& T_{\max} & 1-p-\eta & p+\eta & \overline{\tau} + \eta(T_{\max} - T_{\min})\\
r & R_{\max}T_{\min} & R_{\max}T_{\max} & 1-p & p & \overline{\tau}R_{\max} \\
r^* & R_{\max}T_{\min} & R_{\max}T_{\max} & 1-p-\eta & p+\eta & \overline{\tau}R_{\max} + \eta R_{\max}\left(T_{\max} - T_{\min}\right)
\end{array}
\end{align*}
\caption{Definition of random variables $\tau$, $\tau^*$, $r$ and $r^*$.} \label{TimeVariables2}
\end{table}

SMDPs $M'$ and $M''$ can be transformed into equivalent MDPs with options. We illustrate this transformation on Fig. \ref{fig:smdp2mdp} for an action $a_1 \in \mathcal{A}_{s_1}$ different than $a_1^*$. The same method can be applied for the other actions. The idea consists in adding new states (blank states in Fig. \ref{fig:smdp2mdp}) and primitive actions between those states. Note that the states added are just "hidden" states from which no option can be started. Thus, they should not be counted in the number of states for the lower bound. In our example it is sufficient to consider primitive actions with constant (i.e., deterministic) reward. On Fig. \ref{fig:smdp2mdp} we give the probabilities of each primitive action when $a_1 \in \mathcal{A}_{s_1} \setminus \lbrace{ a_1^* \rbrace}$ is executed.

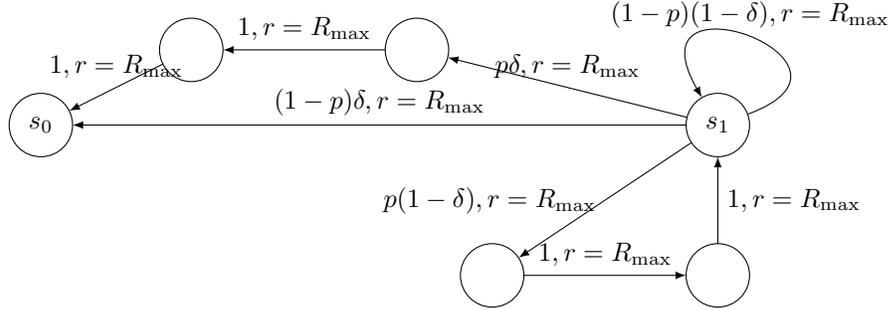
\begin{figure}
\begin{tikzpicture}
	\useasboundingbox (-0.5,-2.2) rectangle (12,2);
	
	\tikzset{VertexStyle/.style = {draw, 
									shape          = circle,
	                                 text           = black,
	                                 inner sep      = 2pt,
	                                 outer sep      = 0pt,
	                                 minimum size   = 24 pt}}
	                                                               
	\tikzset{EdgeStyle/.style   = {->, >=latex}}
	\tikzset{LabelStyle/.style =   {above}}
	
		\node[VertexStyle](0) at (3,0) {$ s_{0} $};
	\node[VertexStyle](1) at (12,0){$ s_{1} $};
	\node[VertexStyle](2) at (9,-2){};
	\node[VertexStyle](3) at (12,-2){};
	\node[VertexStyle](4) at (8,1){};
	\node[VertexStyle](5) at (5,1){};
	
		\draw[EdgeStyle](1) to node[LabelStyle]{$ (1-p) \delta, r = R_{\max} $} (0);
	\draw[EdgeStyle](1) to node[left]{$ p(1-\delta), r = R_{\max}$} (2);
	\draw[EdgeStyle](2) to node[LabelStyle]{$ 1, r = R_{\max}$} (3);
	\draw[EdgeStyle](3) to node[right]{$ 1, r = R_{\max}$} (1);
	\draw[EdgeStyle](1) to node[LabelStyle]{$ p \delta, r = R_{\max} $} (4);
	\draw[EdgeStyle](4) to node[LabelStyle]{$ 1, r = R_{\max}  $} (5);
	\draw[EdgeStyle](5) to node[LabelStyle]{$ 1, r = R_{\max}  $} (0);
	\draw[->, >=latex] (1) to [out=20,in=120,looseness=8] node[LabelStyle]{$ (1-p)(1-\delta), r=R_{\max}$} (1);
	
	\tikzset{EdgeStyle/.append style = {bend left}}
	
\end{tikzpicture}
	\caption{Decomposition of an action $a_1 \in \mathcal{A}_{s_1} \setminus \lbrace{ a_1^* \rbrace}$ into a set of primitive actions in an MDP. In this example, $T_{\max} = 3$ and $T_{\min} = 1$.} \label{fig:smdp2mdp}
\end{figure}
 
\section{Distribution of the holding time and reward of a finite Markov option (Lem.~\ref{lem:MDPwithOptions})}\label{DistributionOption}

We denote by $\mathbb{R}^{+}$ and $\mathbb{R}^{+*}$ the set of positive and non-negative reals respectively. For the definition of sub-exponential random variables, we refer to Def.~\ref{DefSubExp}, while sub-Gaussian random variables are defined as follows.

\begin{definition}[\cite{Wainwright15}]\label{DefSubGauss}
A random variable $X$ with mean $\mu < +\infty$ is said to be sub-Gaussian if and only if there exists $\sigma \in \mathbb{R}^{+}$ such that:
\begin{equation}\label{eq1}
\mathbb{E} [ e^{\lambda (X-\mu)} ] \leq e^{\frac{\sigma^2 \lambda^2}{2}} \ \textrm{  for all  } \ \lambda \in \mathbb{R}.
\end{equation}
\end{definition}

A finite\footnote{Note that if at least one option is not (almost surely) finite, the learning agent can potentially be stuck executing that option forever and the problem is ill-posed.} Markov option can be seen as an absorbing Markov Chain together with a reward process (i.e., a finite Markov option can be seen as an absorbing Markov Reward Process). To see this we add a new state $\widetilde{s}$ for every state $s$ for which $\beta_o(s)>0$. We then add a transitions from $s$ to $\widetilde{s}$ with probability $\beta_o(s)>0$ and reward 0, and we add a self-loop on $\widetilde{s}$ with probability 1 and reward 0 ($\widetilde{s}$ is an absorbing state). The Markov Reward Process obtained is indeed absorbing since we assumed the option to be a.s. finite, and it is equivalent to the original option (same reward and holding time). Let's denote by $P$ the transition matrix of the Markov Chain. In canonical form we have:
\begin{align*}
P = \begin{bmatrix}
Q & R\\
0 & I_r
\end{bmatrix}
\end{align*}
where $r$ is the number of absorbing states, $I_r$ is the identity matrix of dimension $r$, $Q$ is the transition matrix between non-absorbing states and $R$ the transition matrix from non-absorbing to absorbing states. If the option is a.s. finite then $Q$ is necessarily (strictly) sub-stochastic ($Qe \leq e$ where $e = (1,...,1)^\intercal$ and $\exists j$ s.t. $(Qe)_j < 1$) and irreducible (no recurrent class). It is well-known that such a matrix has a spectral radius strictly smaller than 1 ($\rho(Q) <1$) and thus $I-Q$ is invertible (where $I$ is the identity matrix). The holding time $\tau(s,o,s')$ of any option $o$ is defined as the first time absorbing state $s'$ is reached starting from state $s$: $\inf \lbrace n\geq 1: s_n = s' \ \mbox{ with } \ s_0 = s  \rbrace$ where $(s_n)_n$ is the sequence of states in the absorbing Markov Chain defined by $o$. It is well-known in the literature \citep{Buchholz2014} that this type of stopping times have Discrete Phase-Type distributions, with probability mass function given by:
\begin{align*}
\forall k \in \mathbb{N}^*, \ \ \mathbb{P}(\tau(s,a,s')=k) = e_s^\intercal Q^{k-1}R e_{s'}
\end{align*}
where $e_s = (0,0,... 0,1,0, ...,0)^\intercal$ is a vector of all zeros except in state $s$ where it equals 1. These distributions generalize the geometric distribution (defined in dimension 1) to higher dimensions. The Laplace transform can be computed as follows (we simplify notations and denote: $\tau \leftarrow \tau(s,o,s')$ and $\overline{\tau} \leftarrow \overline{\tau}(s,o,s') = \mathbb{E}[\tau(s,o,s')]$):
\begin{align*}
\mathbb{E}\left[ e^{\lambda (\tau - \overline{\tau})} \right] = \sum_{k=1}^{\infty}{e^{\lambda (k- \overline{\tau})}e_s^\intercal Q^{k-1}R e_{s'}} = e^{\lambda (1- \overline{\tau})}e_s^\intercal \left[\sum_{k=0}^{\infty}{\left( e^{\lambda} Q \right)^{k}} \right]R e_{s'}
\end{align*}
The term $\sum_{k=0}^{\infty}{\left( e^{\lambda} Q \right)^{k}}$ is finite if and only if $e^{\lambda} \rho(Q) <1$, in which case we have:
\begin{align*}
\mathbb{E}\left[ e^{\lambda (\tau - \overline{\tau})} \right] = e^{\lambda (1- \overline{\tau})}e_s^\intercal \left(I - e^\lambda Q \right)^{-1} R e_{s'}
\end{align*}
and otherwise: $\mathbb{E}\left[ e^{\lambda (\tau - \overline{\tau})} \right] = +\infty$. Note that $e^{\lambda} \rho(Q) <1$ if and only if either $\lambda < -\log \left(\rho(Q) \right)$ or $\rho(Q) = 0$. We will now analyse the two cases separately:

\begin{enumerate}
\item $\rho(Q)=0$ if and only if all the eigenvalues of $Q$ in $\mathbb{C}$ are $0$, if and only if $Q$ is nilpotent ($\exists n >0$ s.t. $Q^n =0$). This is because $Q$ can always be triangularized in $\mathbb{C}$: $Q=U T U^{-1}$ where $T$ is upper-triangular with the eigenvalues of $Q$ on the diagonal that is, only zeros if $\rho(Q)=0$. This implies that $\exists n >0$ s.t. $T^n = U^{-1} Q^n U =0 \implies Q^n = 0$ hence $Q$ is nilpotent. The reverse is obviously true: if $Q$ is nilpotent then $\rho(Q)=0$, (otherwise there would exist $\lambda \neq 0$, $v\neq 0$ and $n>0$ s.t. $Q^n =0$ and $Qv = \lambda v \implies Q^nv = \lambda^n v = 0$, which is absurd). By definition, matrix $Q$ is nilpotent of order $n$ if and only if the Markov Chain reaches an absorbing state in at most $n$ steps (a.s.). In conclusion, $\rho(Q)=0$ if and only if the option is almost surely bounded. This happens if and only if there is no cycle in the option (with probability 1, every non-absorbing state is visited at most once).
\item In the case where $\rho(Q)>0$: it is clear that $\mathbb{E}\left[ e^{\lambda (\tau - \overline{\tau})} \right]$ can not be bounded by a function of the form $\lambda \rightarrow e^{\frac{\sigma^2 \lambda^2}{2}}$ for $\lambda \geq -\log \left(\rho(Q) \right)$ so $\tau(s,o,s')$ is not sub-Gaussian (Definition \ref{DefSubGauss}). However, since $\rho(Q)<1$ we can choose $0< c_0 < -\log \left(\rho(Q)\right)$ and we have $\mathbb{E}\left[ e^{\lambda (\tau - \overline{\tau})} \right]< +\infty$ for all $|\lambda| < c_0$, which implies that $\tau(s,o,s')$ is sub-exponential (Definition \ref{DefSubExp}).
\end{enumerate}

In conclusion, either option $o$ contains inner-loops (some states are visited several times with non-zero probability) in which case the distribution of $\tau(s,o,s')$ is sub-Exponential but not sub-Gaussian, or $o$ has no inner-loop in which case $o$ is bounded (and thus sub-Gaussian). There is no other alternative.

The distribution of rewards $r(s,o,s')$ is not as simple: the reward of an option is the sum of all micro-rewards obtained at every time step before the option ends, and every micro-reward earned at each time step can have a different distribution. The only constraint is that all micro-rewards should be (a.s.) bounded between 0 and $R_{\max}$. As a result, if $\tau(s,o,s')$ is a.s. bounded (by let's say $T_{\max}$) then $r(s,o,s')$ is also a.s. bounded (by $R_{\max}T_{\max}$). But if $\tau(s,o,s')$ is unbounded then $r(s,o,s')$ may still be bounded if for example, all micro-rewards are 0. If however all micro-rewards are equal to $R_{\max}$ then $r(s,o,s')$ has a discrete phase-type distribution just like $\tau(s,o,s')$. $r(s,o,s')$ can thus be unbounded (and even not sub-Gaussian). However, we will show that $r(s,o,s')$ is always sub-Exponential. Using the law of total expectations and the fact that $\mathbb{P}\left( r \leq R_{\max}\tau \right) = 1$ we have:
\begin{align*}
\forall \lambda >0, \ \ \mathbb{E}\left[ e^{\lambda (r - \overline{r})} \right] = \sum_{k=1}^{\infty}\mathbb{E}\left[e^{\lambda (r- \overline{r})} \big| \tau =k \right] \mathbb{P}(\tau = k) &\leq \sum_{k=1}^{\infty}\mathbb{E}\left[e^{\lambda (R_{\max}\tau- \overline{r})} \big| \tau =k \right] \mathbb{P}(\tau = k)\\
&= \sum_{k=1}^{\infty}\mathbb{E}\left[e^{\lambda (R_{\max}k- \overline{r})} \big| \tau =k \right] \mathbb{P}(\tau = k)\\
&= \sum_{k=1}^{\infty}e^{\lambda (R_{\max}k- \overline{r})} \mathbb{P}(\tau = k)\\
&= e^{\lambda (R_{\max}- \overline{\tau})}e_s^\intercal \left[\sum_{k=0}^{\infty}{\left( e^{\lambda R_{\max}} Q \right)^{k}} \right]R e_{s'}
\end{align*}
We can now conclude as we did for $\tau(s,o,s')$: let $0< c_0 < -\frac{\log \left(\rho(Q)\right)}{R_{\max}}$, for all $0<\lambda < c_0$ the quantity $\mathbb{E}\left[ e^{\lambda (r - \overline{r})} \right]$ is finite. Note that for $\lambda \leq 0$: $\mathbb{E}\left[ e^{\lambda r} \right] \leq 1$ so $\mathbb{E}\left[ e^{\lambda (r - \overline{r})} \right] < +\infty$. By Definition \ref{DefSubExp}, $r(s,o,s')$ is sub-Exponential. 
\section{Equivalent policies in an MDP with options and the induced SMDP (Lem.~\ref{thm:smdp.mdp.equivalence})}

We consider the original MDP $M$ and the SMDP $M_{\O}$ induced by the set of options $\O$. By definition of $M_{\O}$, the reward of an option is equal to the sum of the rewards of all the primitive actions taken until the option terminates (when the option is executed in $M$). Therefore $\sum_{i=1}^{n}r_\mathcal{O}^i = \sum_{i=1}^{N(T_n)}r_\mathcal{O}^i = \sum_{t=1}^{T_n}r_t$ and:
\begin{align}\label{eq:regretDecomposition}
\begin{split}
\Delta(M,\mathfrak{A},s,T_n) &= T_n \rho^*(M) - \sum_{t=1}^{T_n}r_t\\
&= T_n \rho^*(M') + T_n \left(\rho^*(M)-\rho^*(M_{\O})\right) - \sum_{i=1}^{n}r_\mathcal{O}^i \\
&= \Delta(M_{\O},\mathfrak{A},s,n) + T_n \left(\rho^*(M)-\rho^*(M_{\O})\right)
\end{split}
\end{align}
The second part of Lem.~\ref{thm:smdp.mdp.equivalence} is thus proved. We now define the (finite-time) average reward in the two processes
\begin{align*}
\forall T \in \mathbb{N}^*, \ \ \rho^{\pi}(M,s,T) = \mathbb{E}_M^\pi\left[ \frac{\sum_{t=1}^{T}r_t}{T} \bigg| s_0 = s \right]\\
\forall T' \in \mathbb{R}^{+*}, \ \ \rho^{\pi_\mathcal{O}}(M_{\O},s,T') = \mathbb{E}_{M_{\O}}^{\pi_\mathcal{O}}\left[ \frac{\sum_{i=1}^{N(T')}r_\mathcal{O}^i}{T'} \bigg| s_0 = s \right].
\end{align*}
The limit $\lim_{n\rightarrow +\infty} T_n = + \infty$ since the sequence $(T_n)_{n \in \mathbb{N}^*}$ is strictly increasing and unbounded (at least one primitive action is executed before the option ends: ${\forall n \geq 1, T_{n+1} \geq T_n + 1}$). Moreover, $\lim_{T' \rightarrow + \infty} \rho^{\pi_\mathcal{O}}(M_{\O},s,T')$ exists since $\pi_\mathcal{O}$ is stationary and deterministic (see appendix \ref{GainSMDPs}) and by composition of the limit we have
\begin{align*}
\lim_{n \rightarrow + \infty} \rho^{\pi_\mathcal{O}}(M_{\O},s,T_n) = \lim_{T' \rightarrow + \infty} \rho^{\pi_\mathcal{O}}(M_{\O},s,T') = \rho^{\pi_\mathcal{O}}(M_\mathcal{O},s)
\end{align*}
The limit $\lim_{T \rightarrow + \infty} \rho^{\pi}(M,s,T)$ also exists. To see this, we can build an augmented MDP equivalent to $M$ where the state and actions encountered in two different options are duplicated (see section 3 of \citep{Levy2012}). The equivalence between the original and augmented MDPs is in the strong sense: for any optional policy, the corresponding policy in the augmented MDP yields exactly the same reward for any finite horizon. In the augmented MDP, policy $\pi$ is stationary deterministic and we know from MDP theory \citep{Puterman94} that the corresponding average reward exists.
We also have:
\begin{flalign*}
& \forall n \geq 1, \ \ \mathbb{E}_M^\pi\left[ \frac{\sum_{t=1}^{T_n}r_t}{T_n} \bigg| s_0 = s \right] = \mathbb{E}_{M_{\O}}^{\pi_\mathcal{O}}\left[ \frac{\sum_{i=1}^{n}r_\mathcal{O}^i}{T_n} \bigg| s_0 = s \right]& \\ 
&\implies \rho^{\pi_\mathcal{O}}(M_{\O},s) = \lim_{n \rightarrow +\infty}{\rho^{\pi}(M,s,T_n)} = \lim_{T \rightarrow +\infty}{\rho^{\pi}(M,s,T)} = \rho^{\pi}(M,s)&
\end{flalign*}
The first part of Lem.~\ref{thm:smdp.mdp.equivalence} is thus proved. Finally, we know from the literature \citep{Puterman94} that there exists a stationary deterministic optimal policy in the augmented MDP and thus there also exists a stationary deterministic optional policy (a policy using only options in $\O$) in the original MDP. As a result, $\rho^*(M_{\O})$ is the maximal average reward achievable in $M$ using only options in $\O$. In conclusion, the linear term $T_n\left(\rho^*(M)-\rho^*(M_{\O})\right)$ in equation \ref{eq:regretDecomposition} is the minimal asymptotic regret incurred if the learning agent decides to only use options. This linear loss is unavoidable. 
\section{Details of the illustrative experiments}\label{app:experiments}

In this section we will detail the experiments described in section \ref{sec:exp}.

\paragraph{Terminating Condition}
Let's denote the current state by $s_0$ and for all $k \in \lbrace 1 ... m \rbrace$, denote by $s_k$ the state which is $k$ steps on the left to $s_0$. Assume option \textit{LEFT} is taken in $s_0$. By definition, once $s_k$ is reached, the probability of ending the option is given by $\beta_o(s_k) = 1/(m-k+1)$. Since all transitions in the MDP have probability 1 (except at the target), the probability of ending in exactly $k$ steps can be computed as follows:
\begin{itemize}
\item If $k=1$: 
\begin{align*}
\mathbb{P}(\tau = 1) = \beta_o(s_1) = \frac{1}{m}
\end{align*}
\item If $k \geq 1$: 
\begin{align*}
\mathbb{P}(\tau = k) &= \left(\prod_{i = 1}^{k-1}{\left(1-\beta_o(s_i)\right)}\right) \times \beta_o(s_k) \\
&= \left(\prod_{i = 1}^{k-1}{\left(1-\frac{1}{m-i+1}\right)}\right) \times \frac{1}{m-k+1} \\
&= \left(\prod_{i = 1}^{k-1}{\left(\frac{m-i}{m-i+1}\right)}\right) \times \frac{1}{m-k+1} = \frac{1}{m}
\end{align*}
\end{itemize}
By symmetry, the other options (\textit{RIGHT}, \textit{UP} and \textit{DOWN}) have the same holding time.

\paragraph{Expected Holding Time}
Based on the previous result, we can easily compute the expected holding time:
\begin{align*}
\mathbb{E}[\tau] = \sum_{k = 1}^{m}{k \cdot \mathbb{P}(\tau = k)} = \frac{1}{m}\sum_{k = 1}^{m}{k} = \frac{m+1}{2}
\end{align*}

\begin{figure}
\centerline{
\scalebox{0.7} {
\begin{pspicture}(0,-6.29)(13.409312,6.29)
\definecolor{color251c}{rgb}{0.027450980392156862,0.00784313725490196,0.00784313725490196}
\definecolor{color253}{rgb}{0.00784313725490196,0.5098039215686274,0.054901960784313725}
\definecolor{color254b}{rgb}{0.00784313725490196,0.5098039215686274,0.058823529411764705}
\definecolor{color1}{rgb}{0.9921568627450981,0.9372549019607843,0.9372549019607843}
\definecolor{color261b}{rgb}{0.5098039215686274,0.01568627450980392,0.00784313725490196}
\definecolor{color69}{rgb}{0.5803921568627451,0.00392156862745098,0.00392156862745098}
\rput(0.033999998,-6.25){\psgrid[gridwidth=0.014199999,subgridwidth=0.014199999,gridlabels=0.0pt,unit=2.5cm,subgridcolor=color251c](0,0)(0,0)(5,5)
\psset{unit=1.0cm}}
\psframe[linewidth=0.1,linecolor=color253,dimen=outer](10.573,0.269)(7.021,-3.283)
\psframe[linewidth=0.004,dimen=outer,fillstyle=solid,fillcolor=color254b](9.068,-1.216)(8.54,-1.744)
\psline[linewidth=0.096cm,fillcolor=black,arrowsize=0.05291667cm 2.0,arrowlength=1.4,arrowinset=0.4,dotsize=0.07055555cm 2.0]{<-**}(10.834,-1.45)(10.814,0.15)
\psline[linewidth=0.096cm,fillcolor=black,arrowsize=0.05291667cm 2.0,arrowlength=1.4,arrowinset=0.4,dotsize=0.07055555cm 2.0]{<-**}(8.834,-3.53)(7.134,-3.55)
\psline[linewidth=0.096cm,fillcolor=black,arrowsize=0.05291667cm 2.0,arrowlength=1.4,arrowinset=0.4,dotsize=0.07055555cm 2.0]{<-**}(10.834,-1.53)(10.814,-3.17)
\psline[linewidth=0.096cm,fillcolor=black,arrowsize=0.05291667cm 2.0,arrowlength=1.4,arrowinset=0.4,dotsize=0.07055555cm 2.0]{<-**}(8.834,-3.53)(10.454,-3.51)
\usefont{T1}{ptm}{m}{it}
\rput(8.85275,-1.505){\fontsize{18pt}{18pt}\selectfont \color{color1}\bm{$s'$}}
\psframe[linewidth=0.004,dimen=outer,fillstyle=solid,fillcolor=color261b](0.528,6.284)(0.0,5.756)
\usefont{T1}{ptm}{m}{it}
\rput(0.26103124,6.015){\fontsize{18pt}{18pt}\selectfont \color{color1}\bm{$s$}}
\usefont{T1}{ptm}{m}{}
\rput(12.3085315,-0.505){\fontsize{14pt}{14pt}\selectfont $m$-close (up)}
\usefont{T1}{ptm}{m}{}
\rput(12.608532,-2.485){\fontsize{14pt}{14pt}\selectfont $m$-close (down)}
\usefont{T1}{ptm}{m}{}
\rput(10.65,-4){\fontsize{14pt}{14pt}\selectfont $m$-close (right)}
\usefont{T1}{ptm}{m}{}
\rput(7.45,-4){\fontsize{14pt}{14pt}\selectfont $m$-close (left)}
\usefont{T1}{ptm}{m}{n}
\rput(6.897594,4.415){\fontsize{20pt}{20pt}\selectfont \bm{$ \leq D$}}
\psline[linewidth=0.09,linecolor=color69,arrowsize=0.1291667cm 2.0,arrowlength=1.7,arrowinset=0]{->}(0.574,5.99)(3.374,5.97)(3.374,3.97)(5.374,3.97)(5.374,3.45)(6.734,3.47)(6.734,3.05)(7.794,3.05)(7.794,-0.49)(7.794,-0.53)
\end{pspicture} 
}
}
\caption{Upper bound on the diameter of the SMDP used in the experiments.} \label{fig:diameter}
\end{figure}

\paragraph{Diameter}
Let $s$ and $s'$ be two distinct states in the grid. With the options defined above, the expected shortest path from $s$ to $s'$ is obtained if in each visited state on the way to $s'$, we choose an option that goes in the direction of $s'$. For example, if $s$ is the state located in the top left corner of the grid and $s'$ is the target, the expected shortest path is obtained when either \textit{RIGHT} or \textit{DOWN} is taken in every state. With this policy, the expected time to get $m$-close to $s'$ both horizontally and vertically is trivially bounded by $D$ (red path on Fig.~\ref{fig:diameter}). Once we are $m$-close to $s'$ (green square on Fig.~\ref{fig:diameter}, $m = 3$ on this example), we will potentially start cycling until we reach $s'$. On Fig.~ \ref{fig:close.states}, we give an example (in one dimension) of a possible path before reaching $s'$ once in an $m$-close state (the green arrows represent the successive transitions, and $m=3$ on this example). Since all options end after at most $m$ time steps, once we are $m$-close to $s'$, we stay $m$-close with the chosen policy. The expected time it takes to reach $s'$ once we are $m$-close to it is $m(m+1)/2$ both horizontally and vertically. To prove this, we need to solve a linear system. For all $i \in \lbrace 1 ... m-1 \rbrace$, denote by $\tau_{i}$ the time it takes to go from $s$ to the i-th state to the left (respectively right, up or down) when the option chosen is left (respectively right, up or down). The value is the same in all directions by symmetry. We can express the $\tau_{i}$ as follows:
\begin{align}\label{eq:diameterExperiments}
\begin{split}
\tau_1 &= \frac{1}{m} + \frac{1}{m}(2+\tau_{1}) + \dots + \frac{1}{m}(m+\tau_{m-1}) \\
\tau_2 &= \frac{1}{m} \times 2 + \frac{1}{m}(1+\tau_{1}) + \frac{1}{m}(3+\tau_{1}) + \dots + \frac{1}{m}(m+\tau_{m-2}) \\
\tau_3 &= \frac{1}{m} \times 3 + \frac{1}{m}(1+\tau_{2}) + \frac{1}{m}(2+\tau_{1}) + \frac{1}{m}(4+\tau_{1}) + \dots + \frac{1}{m}(m+\tau_{m-3}) \\
&\dots\\
\tau_{i} &= \frac{m+1}{2} + \frac{1}{m}\sum_{j=1}^{i-1}{\tau_j} + \frac{1}{m}\sum_{j=1}^{m-i}{\tau_j}
\end{split}
\end{align}
With probability $1/m$, the next state after executing the option is $1$ step to the left of $s$ and the value of $\tau_{1}$ is then $1$ . With probability $1/m$ the next state is $2$ steps to the left of $s$ and so $s'$ is now located $1$ step to the right of the new state: the value of $\tau_{1}$ is thus $2+\tau_{1}$. With probability $1/m$ the next state is $3$ steps to the left of $s$ and and so $s'$ is now located $2$ steps to the right of the new state: the value of $\tau_{1}$ is thus $3+\tau_{2}$. And so on and so forth. What we used here is basically the law of total expectations where the partition of events is the set of all possible states reached after executing the option only once. The same thing can be done for $\tau_2 \dots \tau_{m-1}$. It is trivial to verify that the only solution of the linear system in equation \ref{eq:diameterExperiments} is: $\tau_{i} = m(m+1)/2, \forall i \in \lbrace 1 ... m-1 \rbrace$. This results is rather intuitive: $m$ corresponds to the expected number of times the option needs to be executed to end up in the desired state $s'$ whereas $(m+1)/2$ is the expected duration at every decision step. The simplicity of this result comes from the symmetry of the problem: every time an option is executed, we stay $m$-close to $s'$ and the probability to exactly reach $s'$ is always $1/m$. So in this sense, we have i.i.d. Bernoulli trials where the probability of success is $1/m$. The expected time to reach $s'$ when we start in an $m$-close state both horizontally and vertically is thus $2 \times m (m+1)/2 = m(m+1)$. Therefore, the expected time to go from $s$ to $s'$ is always bounded by $D + m(m+1)$.

\begin{figure}
\centerline{
\scalebox{0.7} {
\begin{pspicture}(0,-2.2784376)(11.2,2.2734375)
\definecolor{color229c}{rgb}{0.5019607843137255,0.5019607843137255,0.5019607843137255}
\definecolor{color45}{rgb}{0.00784313725490196,0.4980392156862745,0.00784313725490196}
\rput(0.0,-1.2415625){\psgrid[gridwidth=0.028222222,subgridwidth=0.014111111,gridlabels=0.0pt,subgriddiv=1,unit=1.6cm,subgridcolor=color229c](0,0)(0,0)(7,1)
\psset{unit=1.0cm}}
\usefont{T1}{ptm}{m}{n}
\rput(0.82234377,-0.4565625){$RIGHT$}
\usefont{T1}{ptm}{m}{n}
\rput(7.201875,-0.4365625){$LEFT$}
\usefont{T1}{ptm}{m}{n}
\rput(2.4423437,-0.4365625){$RIGHT$}
\usefont{T1}{ptm}{m}{n}
\rput(4.0223436,-0.4365625){$RIGHT$}
\usefont{T1}{ptm}{m}{n}
\rput(8.841875,-0.4365625){$LEFT$}
\usefont{T1}{ptm}{m}{n}
\rput(10.381875,-0.4365625){$LEFT$}
\usefont{T1}{ptm}{m}{n}
\rput(5.586094,-0.35){\fontsize{20pt}{20pt}\selectfont \bm{$s'$}}
\psline[linewidth=0.11399999cm,fillcolor=black,dotsize=0.07055555cm 2.0,arrowsize=0.05291667cm 2.0,arrowlength=1.4,arrowinset=0.4]{**->}(0.8,-1.5615625)(5.48,-1.5415626)
\psline[linewidth=0.11399999cm,fillcolor=black,arrowsize=0.05291667cm 2.0,arrowlength=1.4,arrowinset=0.4,dotsize=0.07055555cm 2.0]{<-**}(5.82,-1.5415626)(10.46,-1.5415626)
\usefont{T1}{ptm}{m}{n}
\rput(3.1221876,-2.0565624){\fontsize{14pt}{14pt}\selectfont $m$ steps}
\usefont{T1}{ptm}{m}{n}
\rput(8.162188,-2.0565624){\fontsize{14pt}{14pt}\selectfont $m$ steps}
\psbezier[linewidth=0.07,linecolor=color45,arrowsize=0.05291667cm 4.0,arrowlength=1.2,arrowinset=0.4]{<-}(4.04,0.3784375)(4.02,1.6455839)(0.82,1.7384375)(0.82,0.42625433)
\psbezier[linewidth=0.07,linecolor=color45,arrowsize=0.05291667cm 4.0,arrowlength=1.2,arrowinset=0.4]{<-}(8.92,0.3784375)(8.92,2.1779072)(4.2,2.2384374)(4.24,0.47483766)
\psbezier[linewidth=0.07,linecolor=color45,arrowsize=0.05291667cm 4.0,arrowlength=1.2,arrowinset=0.4]{->}(8.72,0.4468096)(8.72,1.3584375)(7.24,1.2900654)(7.24,0.3784375)
\psbezier[linewidth=0.07,linecolor=color45,arrowsize=0.05291667cm 4.0,arrowlength=1.2,arrowinset=0.4]{->}(7.0,0.48680958)(7.0,1.3984375)(5.6,1.3300654)(5.6,0.4184375)
\end{pspicture} 
}
}
\caption{Behaviour of the agent in $m$-close states.} \label{fig:close.states}
\end{figure}
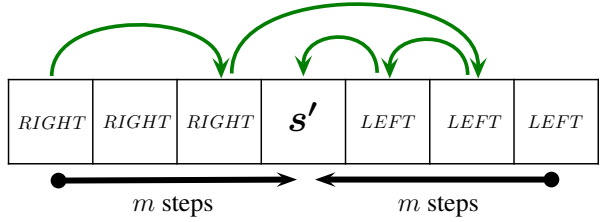

On Fig. \ref{fig:deterministic.options} we illustrate what happens when the options are deterministic i.e., when they terminate after exactly $m$ time steps. On this example we chose $m = 3$. If we start from state $s_0$, only the green states can be reached without resorting to the restart triggered by the target state, whereas if we start from $s_1$ only the blue states can be reached \footnote{Here we slightly simplified the problem. In reality, due to the truncation operated by the walls and if we assume $s_0$ to be the leftmost state, it is possible to go from $s_1$ to $s_0$ in one time step by executing $LEFT$. But for $m \geq 3 $ there will always be pairs of states that cannot be reached from each other without a restart. If $s_0$ is the leftmost state, this is the case for $s_1$ and the white state adjacent to it on Fig. \ref{fig:deterministic.options}. So the proof remains valid.} (and similarly for the white states). Let's assume that we want to go from a green state to a blue state. The only way to do so is to go to the target and "hope" to end up in one of the blue states after the restart (we recall that the restart state is chosen randomly with equi-probability). The shortest path to go from any state to the target is bounded by $D$ and the probability to restart in a state with the desired colour is $1/m$ ($1/m^2$ in dimension 2). We can thus upper bound the diameter of the SMDP $M_\mathcal{O}$ by the expected time needed to go from $s_0$ to $s_1$ in the SMDP of Fig. \ref{fig:deterministic.options.smdp}, that is: $D_\mathcal{O} \leq D(1+m^2)$. This bound is tight (up to a constant factor) since the average time to go from any state chosen at random with equi-probability to the target is exactly $D/2$ in the 2-dimensional grid.

\paragraph{Optimality}
Since the target state is located in a corner of the grid, the shortest path to go from any state to the target is equally long in the original MDP and the MDP with options. As a result, the optimal average rewards are also equal (i.e., there exists an optimal policy using only options \textit{LEFT} \textit{RIGHT}, \textit{UP} and \textit{DOWN} which consists in applying only \textit{RIGHT} or \textit{DOWN}).

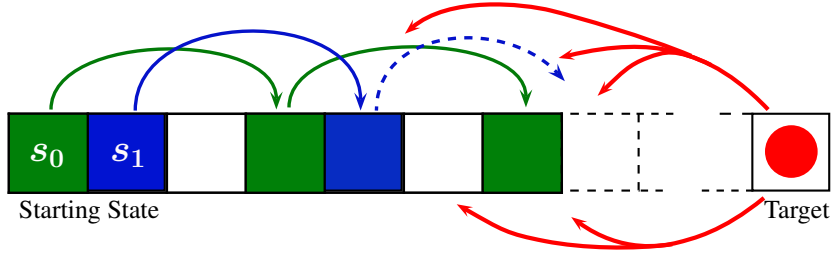
\begin{figure}[t]
\centerline{
\scalebox{0.7} {
\begin{pspicture}(0,-2.49)(15.8375,2.49)
\definecolor{color94c}{rgb}{0.5019607843137255,0.5019607843137255,0.5019607843137255}
\definecolor{color96b}{rgb}{0.0,0.5019607843137255,0.03137254901960784}
\definecolor{color108}{rgb}{0.00784313725490196,0.47843137254901963,0.03137254901960784}
\definecolor{color164b}{rgb}{0.00392156862745098,0.07450980392156863,0.8196078431372549}
\definecolor{color165b}{rgb}{0.027450980392156862,0.17254901960784313,0.7607843137254902}
\definecolor{color166}{rgb}{0.0196078431372549,0.07058823529411765,0.792156862745098}
\definecolor{color12}{rgb}{1.0,0.9803921568627451,0.9803921568627451}
\definecolor{color8}{rgb}{0.996078431372549,0.996078431372549,0.996078431372549}
\rput(0.1975,-1.29){\psgrid[gridwidth=0.0482,subgridwidth=0.014111111,gridlabels=0.0pt,subgriddiv=1,unit=1.5cm,subgridcolor=color94c](0,0)(0,0)(7,1)
\psset{unit=1.0cm}}
\psframe[linewidth=0.04,dimen=outer,fillstyle=solid,fillcolor=color96b](1.7175,0.23)(0.1775,-1.31)
\psframe[linewidth=0.04,dimen=outer,fillstyle=solid,fillcolor=color96b](6.2175,0.23)(4.6775,-1.31)
\psframe[linewidth=0.04,dimen=outer,fillstyle=solid,fillcolor=color96b](10.7175,0.23)(9.1775,-1.31)
\psframe[linewidth=0.04,linestyle=dashed,dash=0.16cm 0.16cm,dimen=outer](12.1775,0.23)(10.6775,-1.27)
\psframe[linewidth=0.04,dimen=outer](15.7975,0.23)(14.2975,-1.27)
\psdots[dotsize=1.0,linecolor=red](15.0575,-0.51)
\usefont{T1}{ptm}{m}{n}
\rput(15.162344,-1.665){\fontsize{14pt}{14pt}\selectfont Target}
\psline[linewidth=0.04cm,linestyle=dashed,dash=0.16cm 0.16cm](13.6975,0.21)(14.3375,0.21)
\psline[linewidth=0.04cm,linestyle=dashed,dash=0.16cm 0.16cm](13.3575,-1.25)(14.3375,-1.25)
\psline[linewidth=0.04cm,linestyle=dashed,dash=0.16cm 0.16cm](12.1175,0.21)(12.8175,0.21)
\psline[linewidth=0.04cm,linestyle=dashed,dash=0.16cm 0.16cm](12.1975,-1.25)(12.6175,-1.25)
\psbezier[linewidth=0.064,linecolor=color108,arrowsize=0.05291667cm 3.0,arrowlength=1.4,arrowinset=0.4]{<-}(5.2775,0.29)(5.2775,1.79)(0.9975,1.79)(0.9975,0.29)
\psbezier[linewidth=0.064,linecolor=color108,arrowsize=0.05291667cm 3.0,arrowlength=1.4,arrowinset=0.4]{<-}(10.0175,0.33)(10.0175,1.83)(5.5175,1.83)(5.5175,0.33)
\psbezier[linewidth=0.08,linecolor=red,arrowsize=0.05291667cm 2.0,arrowlength=1.4,arrowinset=0.4]{->}(14.6175,0.29)(13.8175,1.13)(13.182116,1.3114285)(12.7575,1.41)(12.332885,1.5085714)(11.6951065,1.53)(10.6375,1.27)
\psbezier[linewidth=0.08,linecolor=red,arrowsize=0.05291667cm 2.0,arrowlength=1.4,arrowinset=0.4]{->}(14.5375,-1.39)(13.746388,-2.041077)(13.168269,-2.1684616)(12.8944235,-2.2392309)(12.620577,-2.31)(11.8375,-2.45)(10.8775,-1.7537001)
\psbezier[linewidth=0.08,linecolor=red,arrowsize=0.05291667cm 2.0,arrowlength=1.4,arrowinset=0.4]{->}(13.1575,1.31)(12.2975,1.67)(11.033213,2.0375035)(10.4975,2.17)(9.961787,2.3024967)(8.5575,2.45)(7.7175,1.73)
\psbezier[linewidth=0.08,linecolor=red,arrowsize=0.05291667cm 2.0,arrowlength=1.4,arrowinset=0.4]{->}(13.5375,1.1351395)(12.840605,1.4652156)(12.656909,1.3961469)(12.3175,1.27)(11.978091,1.1438531)(11.8375,0.99)(11.3775,0.55)
\psbezier[linewidth=0.08,linecolor=red,arrowsize=0.05291667cm 2.0,arrowlength=1.4,arrowinset=0.4]{->}(12.7975,-2.27)(12.0575,-2.37)(11.0975,-2.35)(10.3775,-2.21)(9.6575,-2.07)(9.5775,-2.01)(8.702432,-1.51)
\usefont{T1}{ptm}{m}{n}
\rput(1.72109376,-1.665){\fontsize{14pt}{14pt}\selectfont Starting State}
\psframe[linewidth=0.04,dimen=outer,fillstyle=solid,fillcolor=color164b](3.1775,0.23)(1.6775,-1.27)
\psframe[linewidth=0.04,dimen=outer,fillstyle=solid,fillcolor=color165b](7.6775,0.23)(6.1775,-1.27)
\psbezier[linewidth=0.064,linecolor=color166,arrowsize=0.05291667cm 3.0,arrowlength=1.4,arrowinset=0.4]{<-}(6.8775,0.27)(6.8775,2.25)(2.5975,2.25)(2.5975,0.27)
\psbezier[linewidth=0.064,linecolor=color166,linestyle=dashed,dash=0.16cm 0.16cm,arrowsize=0.05291667cm 3.0,arrowlength=1.4,arrowinset=0.4]{<-}(10.7575,0.8281818)(9.635121,2.05)(7.1775,1.8730303)(7.1775,0.27)
\usefont{T1}{ptm}{m}{n}
\rput(0.92234373,-0.585){\color{color8} \fontsize{20pt}{20pt}\selectfont \bm{$s_0$}}
\usefont{T1}{ptm}{m}{n}
\rput(2.4667187,-0.585){\color{color12} \fontsize{20pt}{20pt}\selectfont \bm{$s_1$}}
\end{pspicture} 
}
}
\caption{Behaviour of the agent with deterministic options.}\label{fig:deterministic.options}
\end{figure}

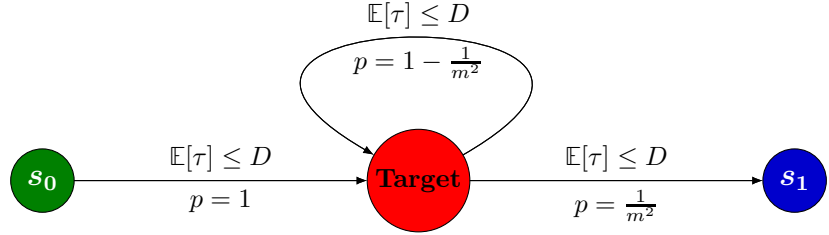
\begin{figure}[t]
\begin{tikzpicture}
	\useasboundingbox (-1,-0.5) rectangle (15,2.3);
	
	\tikzset{VertexStyle/.style = {draw, 
									shape          = circle,
	                                 text           = black,
	                                 inner sep      = 2pt,
	                                 outer sep      = 0pt,
	                                 minimum size   = 24 pt}}
	                                                               
	\tikzset{EdgeStyle/.style   = {->, >=latex}}
	\tikzset{LabelStyle/.style =   {above}}
	
		\node[VertexStyle,fill=black!50!green](0) at (3,0) {\color{white} \fontsize{12pt}{12pt}\selectfont \bm{$ s_{0} $}};
	\node[VertexStyle,fill=red](1) at (8,0){\textbf{Target}};
	\node[VertexStyle,fill=black!20!blue](2) at (13,0){\color{white} \fontsize{12pt}{12pt}\selectfont \bm{$ s_{1} $}};
	
		\draw[EdgeStyle](0) to node[above]{$\mathbb{E}[\tau] \leq D $} (1);
	\draw[EdgeStyle](0) to node[below]{$ p = 1$} (1);
	\draw[EdgeStyle](1) to node[above]{$ \mathbb{E}[\tau] \leq D $} (2);
	\draw[EdgeStyle](1) to node[below]{$ p = \frac{1}{m^2}$} (2);
	\draw[->, >=latex] (1) to [out=30,in=150,looseness=9] node[above]{$\mathbb{E}[\tau] \leq D $} (1);
	\draw[->, >=latex] (1) to [out=30,in=150,looseness=9] node[below]{$ p = 1- \frac{1}{m^2}$} (1);
\end{tikzpicture}
	\caption{Upper bound on the diameter of $M_\mathcal{O}$ for deterministic options.} \label{fig:deterministic.options.smdp}
\end{figure}

\paragraph{Asymptotic behaviour}
We will now analyse the behaviour of the ratio $\frac{n}{T_n}$ using results on martingales.

\begin{theorem}[Martingale Strong Law of Large Numbers, \citep{Vovk:2005:ALR:1062391}]\label{LLNMartingales}
Let $X_1, ..., X_n$ be a martingale difference sequence w.r.t. a filtration $\mathcal{F}_0, \mathcal{F}_1,..., \mathcal{F}_n$ and let $A_1, ..., A_n$ be an increasing predictable sequence w.r.t. the same filtration with $A_1 >0$ and $\lim_{n \rightarrow + \infty} A_n = +\infty$ almost surely. If:
\begin{align*}
\sum_{i=1}^{+\infty}{\frac{\mathbb{E}[X_i^2 | \mathcal{F}_{i-1}]}{A_i^2}}< + \infty \ \ \mbox{    a.s.}
\end{align*}
then:
\begin{align*}
\frac{1}{A_n} \sum_{i=1}^{n}{X_i} \xrightarrow[n \rightarrow +\infty]{} 0  \ \ \mbox{    a.s.}
\end{align*}
\end{theorem}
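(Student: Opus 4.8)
The plan is to reduce the statement to two classical ingredients: an almost-sure martingale convergence result and Kronecker's lemma. First I would introduce the normalized partial sums $M_n = \sum_{i=1}^n X_i/A_i$. Since each $A_i$ is $\mathcal{F}_{i-1}$-measurable (predictability) and $X_i$ is a martingale difference, the ratio $X_i/A_i$ is again a martingale difference sequence, so $(M_n)$ is a martingale with conditional variances $\mathbb{E}[(M_i - M_{i-1})^2 \mid \mathcal{F}_{i-1}] = \mathbb{E}[X_i^2 \mid \mathcal{F}_{i-1}]/A_i^2$, where the $A_i^2$ can be pulled out precisely because $A_i$ is predictable. The hypothesis is exactly that $V_\infty := \sum_{i=1}^\infty \mathbb{E}[X_i^2\mid\mathcal{F}_{i-1}]/A_i^2$ is finite almost surely.

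The core step is to show that $(M_n)$ converges almost surely to a finite limit. The difficulty is that the summability of the conditional variances holds only almost surely and not in expectation, so I cannot directly invoke $L^2$-boundedness of $(M_n)$ itself. The remedy is a localization by predictable stopping times. Writing $V_n = \sum_{i=1}^n \mathbb{E}[X_i^2\mid\mathcal{F}_{i-1}]/A_i^2$, which is $\mathcal{F}_{n-1}$-measurable, I would set $\tau_K = \inf\{n \ge 1 : V_{n+1} > K\}$; this is a predictable stopping time since $\{\tau_K \ge n\} = \{V_n \le K\} \in \mathcal{F}_{n-1}$. The stopped martingale $(M_{n\wedge\tau_K})$ then satisfies $\mathbb{E}[M_{n\wedge\tau_K}^2] = \mathbb{E}\big[V_{n\wedge\tau_K}\big]\le K$, so it is $L^2$-bounded and converges almost surely by the martingale convergence theorem. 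On the event $\{\tau_K=\infty\}$ we have $M_n = M_{n\wedge\tau_K}$ for every $n$, hence $(M_n)$ converges there; letting $K\to\infty$ and using $\{V_\infty<\infty\}=\bigcup_K\{\tau_K=\infty\}$ together with $\mathbb{P}(V_\infty<\infty)=1$ shows that $(M_n)$ converges almost surely.

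Finally, I would apply Kronecker's lemma pathwise. On the full-probability event where $\sum_{i} X_i/A_i$ converges and $A_n\uparrow+\infty$, Kronecker's lemma with weights $A_i$ and summands $X_i/A_i$ gives $A_n^{-1}\sum_{i=1}^n A_i\,(X_i/A_i) = A_n^{-1}\sum_{i=1}^n X_i \to 0$, which is the desired conclusion. I expect the localization argument in the second step to be the main obstacle, since that is where the merely-almost-sure nature of the variance hypothesis has to be handled correctly; the remaining work is bookkeeping around the two cited classical lemmas.
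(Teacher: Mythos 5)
The paper does not prove this statement at all: it is imported verbatim as a classical result with a citation to Vovk, Gammerman, and Shafer (2005), and is used only as a black box in Appendix~F to control $T_n/n$. There is therefore no in-paper proof to compare against. Your proposal is the standard textbook derivation and it is correct: predictability of $A_i$ makes $X_i/A_i$ a martingale difference and lets you pull $A_i^2$ out of the conditional variance; the predictable stopping time $\tau_K=\inf\{n: V_{n+1}>K\}$ (with $\{\tau_K\ge n\}=\{V_n\le K\}\in\mathcal{F}_{n-1}$) correctly localizes the merely almost-sure variance bound, giving $\mathbb{E}[M_{n\wedge\tau_K}^2]=\mathbb{E}[V_{n\wedge\tau_K}]\le K$ and hence a.s. convergence of the stopped martingale; taking the union over $K$ of $\{\tau_K=\infty\}$ recovers a.s. convergence of $\sum_i X_i/A_i$, and the pathwise application of Kronecker's lemma with $A_n\uparrow+\infty$ finishes the argument. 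You correctly identified the localization step as the only place where care is needed, and you handled it properly; nothing is missing.
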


Let's take $X_i = \tau_i - \overline{\tau}_i$ (where $\overline{\tau}_i = \overline{\tau}_i(s_{i-1},a_i)$) and $\mathcal{F}_i = \sigma\left(s_0, a_1, \tau_1, r_1, ..., s_{i}, a_{i+1} \right)$. The sequence $(X_i)_{i\leq 1}$ is a martingale difference because $\mathbb{E}[X_i]<+\infty$ and $\mathbb{E}[X_{i}|\mathcal{F}_{i-1}] = 0$. Since $(\tau(s,a,s'))_{s,a,s'}$ are sub-Exponential, all moments are finite and it is well known from the literature that the variance is bounded by the sub-Exponential constant $\sigma_\tau^2$ hence: $\mathbb{E}[X_i^2 | \mathcal{F}_{i-1}]<\sigma_\tau^2$. If in addition we take $A_i = i$ then the conditions of Theorem \ref{LLNMartingales} are satisfied and thus:
\begin{align*}
\frac{T_n}{n} - \frac{\overline{T_n}}{n} \xrightarrow[n \rightarrow +\infty]{} 0  \ \ \mbox{    a.s.}
\end{align*}
where $\overline{T_n} = \mathbb{E}[T_n]$. By definition: $\tau_{\max}n \geq \overline{T_n} \geq \tau_{\min}n$ hence: $\forall \epsilon>0, \ \exists N_\epsilon >0$  s.t. $\forall n \geq N_\epsilon :$
\begin{align*}
\bigg| \frac{T_n}{n} - \frac{\overline{T_n}}{n} \bigg| \leq \epsilon  \ \ \mbox{a.s.} \implies \tau_{\min} - \epsilon \leq \frac{\overline{T_n}}{n} - \epsilon \leq \frac{T_n}{n} \leq \epsilon + \frac{\overline{T_n}}{n} \leq \epsilon + \tau_{\max}
\end{align*}
and so: $\liminf_{n \rightarrow +\infty} \frac{T_n}{n} \geq \tau_{\min}$ and $\limsup_{n \rightarrow +\infty} \frac{T_n}{n} \leq \tau_{\max}$ a.s. Finally:
\begin{align*}
\frac{\log \lbrace\frac{n}{\delta}\rbrace}{\log \lbrace\frac{T_n}{\delta}\rbrace}  = \frac{\log \lbrace\frac{n}{\delta}\rbrace}{\log \lbrace\frac{T_n - \overline{T_n}}{n} + \frac{\overline{T_n}}{n}\rbrace + \log \lbrace\frac{n}{\delta}\rbrace} \leq \frac{\log \lbrace\frac{n}{\delta}\rbrace}{\log \lbrace\frac{T_n - \overline{T_n}}{n} + \tau_{\min}\rbrace + \log \lbrace\frac{n}{\delta}\rbrace} \xrightarrow[n \rightarrow +\infty]{} 1
\end{align*}
In the general case of sub-Exponential rewards and holding times our results provide no theoretical evidence of the advantage of introducing options due to the fact that $\mathcal{C}(M',n,\delta)$ scales as $\sqrt{\log(n)}$:
\begin{align*}
\lim_{n \rightarrow +\infty} \mathcal{R}(M,n,\delta) =  +\infty \ \ \mbox{a.s.}
\end{align*}
but if the rewards and holding times are bounded we have:
\begin{align*}
\limsup_{n \rightarrow +\infty} \mathcal{R}(M,n,\delta) \leq  \frac{1}{\sqrt{\tau_{\min}}}\left(1+ \frac{T_{\max}}{D\sqrt{S}}\right) \ \ \mbox{a.s.}
\end{align*}
Note that $\tau_{\min}$ is a very loose upper-bound on $\liminf_{n \rightarrow +\infty} \frac{T_n}{n}$ and in practice the ratio $\frac{T_n}{n}$ can take much higher values if $\tau_{\max}$ is big and many options have a high expected holding time.

\paragraph{Tightness of the upper bounds}

On Fig. \ref{fig:theoretical.ratios}, we plot the expected theoretical values taken by the ratio of the regrets according to our upper bounds (formula given in Sect.~\ref{sec:exp}). On Fig. \ref{fig:empirical.ratios} however, we plot the empirical values of the ratios in our experiments (same graph as on Fig.~\ref{fig:results.ratio}). We can see that the curves have similar shapes. In particular, they reach their respective minima for the same value of $T_{\max}$ and the value of these minimum is below 1 (meaning that learning with options is more efficient than with primitive actions in this case). Moreover, the theoretical ratios are upper-bounding the empirical ones for all values of $T_{\max}$. We can conclude that the ratio of the upper bounds is a good proxy for the true ratio in this example.

\begin{figure}
\begin{subfigure}{1\textwidth}
\center
\includegraphics[width=0.8\textwidth]{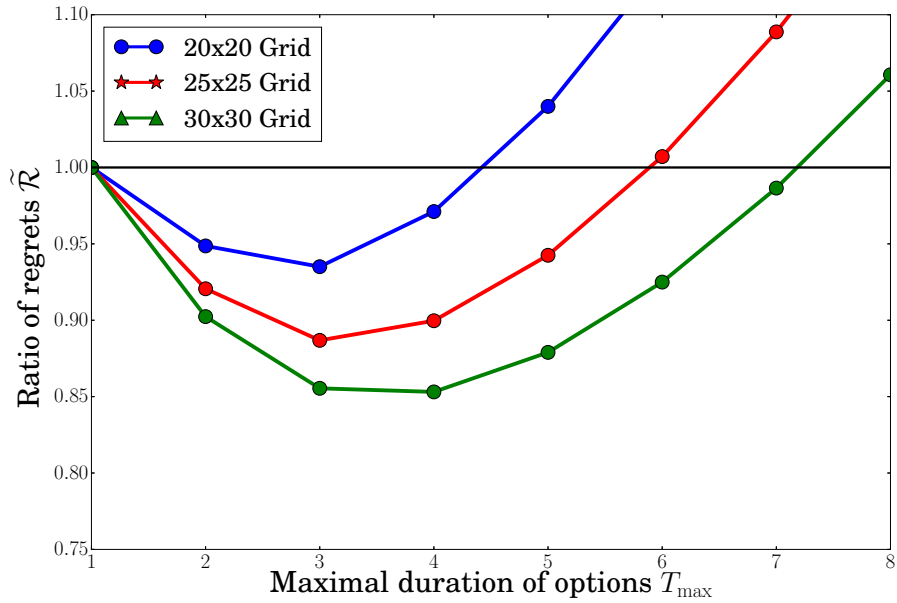}
\caption{}
\label{fig:theoretical.ratios}
\end{subfigure}\\
\begin{subfigure}{1\textwidth}
\center
\includegraphics[width=0.8\textwidth]{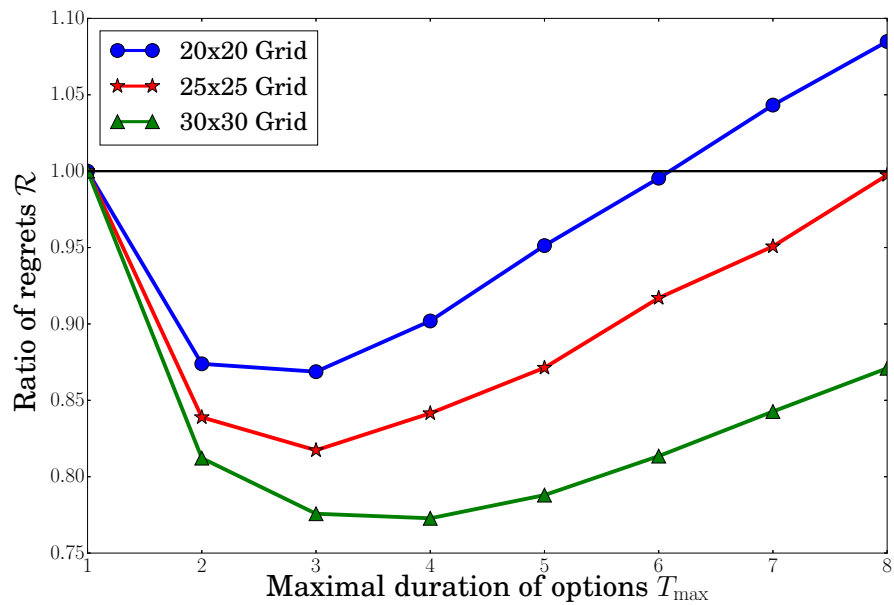}
\caption{}
\label{fig:empirical.ratios}
\end{subfigure}
\caption{\textit{(a)} Theoretical ratios of the regrets with and without options for different values of $T_{\max}$; \textit{(b)} Empirical ratios of the regrets with and without options for different values of $T_{\max}$.}
\label{fig:comparison.ratios}
\end{figure} 

\begin{thebibliography}{29}
\providecommand{\natexlab}[1]{#1}
\providecommand{\url}[1]{\texttt{#1}}
\expandafter\ifx\csname urlstyle\endcsname\relax
  \providecommand{\doi}[1]{doi: #1}\else
  \providecommand{\doi}{doi: \begingroup \urlstyle{rm}\Url}\fi

\bibitem[Ros(1970)]{Ross70}
\emph{Applied Probability Models with Optimization Applications}, chapter 7:
  Semi Markov Decision Processes.
\newblock Dover Publications, INC., New York, 1970.

\bibitem[Tij(2003)]{Tijms03}
\emph{A First Course in Stochastic Models}, chapter 7: Semi Markov Decision
  Processes.
\newblock Wiley, 2003.

\bibitem[Bacon and Precup(2015)]{bacon2015the-option-critic}
Pierre-Luc Bacon and Doina Precup.
\newblock The option-critic architecture.
\newblock In \emph{NIPS'15 Deep Reinforcement Learning Workshop}, 2015.

\bibitem[Brunskill and Li(2014)]{brunskill2014pac-inspired}
Emma Brunskill and Lihong Li.
\newblock {PAC-inspired Option Discovery in Lifelong Reinforcement Learning}.
\newblock In \emph{Proceedings of the 31st International Conference on Machine
  Learning}, volume~32 of \emph{{JMLR Proceedings}}, pages 316--324.
  {JMLR.org}, 2014.

\bibitem[Castro and Precup(2012)]{Castro:2011:ACT:2341664.2341685}
Pablo~Samuel Castro and Doina Precup.
\newblock Automatic construction of temporally extended actions for mdps using
  bisimulation metrics.
\newblock In \emph{Proceedings of the 9th European Conference on Recent
  Advances in Reinforcement Learning}, EWRL'11, pages 140--152, Berlin,
  Heidelberg, 2012. Springer-Verlag.
\newblock ISBN 978-3-642-29945-2.
\newblock \doi{10.1007/978-3-642-29946-9_16}.

\bibitem[\c{S}im\c{s}ek and Barto(2004)]{simsek2004using}
\"{O}zg\"{u}r \c{S}im\c{s}ek and Andrew~G. Barto.
\newblock Using relative novelty to identify useful temporal abstractions in
  reinforcement learning.
\newblock In \emph{Proceedings of the Twenty-first International Conference on
  Machine Learning}, ICML '04, 2004.

\bibitem[Dietterich(2000)]{dietterich2000hierarchical}
Thomas~G. Dietterich.
\newblock Hierarchical reinforcement learning with the maxq value function
  decomposition.
\newblock \emph{Journal of Artificial Intelligence Research}, 13:\penalty0
  227--303, 2000.

\bibitem[Federgruen et~al.(1983)Federgruen, Schweitzer, and
  Tijms]{Federgruen83}
A.~Federgruen, P.J. Schweitzer, and H.C. Tijms.
\newblock Denumerable undiscounted semi-markov decision processes with
  unbounded rewards.
\newblock \emph{Mathematics of Operations Research}, 1983.

\bibitem[Jaksch et~al.(2010)Jaksch, Ortner, and Auer]{Jaksch10}
Thomas Jaksch, Ronald Ortner, and Peter Auer.
\newblock Near-optimal regret bounds for reinforcement learning.
\newblock \emph{J. Mach. Learn. Res.}, 11:\penalty0 1563--1600, August 2010.

\bibitem[Jong et~al.(2008)Jong, Hester, and Stone]{jong2008the-utility}
Nicholas~K.\ Jong, Todd Hester, and Peter Stone.
\newblock The utility of temporal abstraction in reinforcement learning.
\newblock In \emph{The Seventh International Joint Conference on Autonomous
  Agents and Multiagent Systems}, May 2008.

\bibitem[Leizarowitz(2013)]{Leizarowitz13}
Arie Leizarowitz.
\newblock \emph{Decision and Control in Management Science: Essays in Honor of
  Alain Haurie}, chapter 5: On Optimal Policies of Multichain Finite State
  Compact Action Markov Decision Processes.
\newblock Springer Science and Business Media, 2013.

\bibitem[Levy and Shimkin(2011)]{conf/ewrl/LevyS11}
Kfir~Y. Levy and Nahum Shimkin.
\newblock Unified inter and intra options learning using policy gradient
  methods.
\newblock In Scott Sanner and Marcus Hutter, editors, \emph{EWRL}, volume 7188
  of \emph{Lecture Notes in Computer Science}, pages 153--164. Springer, 2011.
\newblock ISBN 978-3-642-29945-2.

\bibitem[Levy and Shimkin(2012)]{Levy2012}
Kfir~Y. Levy and Nahum Shimkin.
\newblock \emph{Recent Advances in Reinforcement Learning: 9th European
  Workshop, EWRL 2011}, chapter Unified Inter and Intra Options Learning Using
  Policy Gradient Methods, pages 153--164.
\newblock Springer Berlin Heidelberg, 2012.

\bibitem[Liu and Zhao(2004)]{JianyongX04}
Jianyong Liu and Xiaobo Zhao.
\newblock On average reward semi-markov decision processes with a general
  multichain structure.
\newblock \emph{Math. Oper. Res.}, 29\penalty0 (2):\penalty0 339--352, 2004.

\bibitem[Mann and Mannor(2014)]{mann2014scaling}
Timothy~A. Mann and Shie Mannor.
\newblock Scaling up approximate value iteration with options: Better policies
  with fewer iterations.
\newblock In \emph{Proceedings of the 31st International Conference on Machine
  Learning}, 2014.

\bibitem[Mann et~al.(2014)Mann, Mankowitz, and Mannor]{DBLP:conf/icml/MannMM14}
Timothy~Arthur Mann, Daniel~J. Mankowitz, and Shie Mannor.
\newblock Time-regularized interrupting options {(TRIO)}.
\newblock In \emph{Proceedings of the 31th International Conference on Machine
  Learning, {ICML} 2014, Beijing, China, 21-26 June 2014}, pages 1350--1358,
  2014.

\bibitem[McGovern and Barto(2001)]{mcgovern2001automatic}
Amy McGovern and Andrew~G. Barto.
\newblock Automatic discovery of subgoals in reinforcement learning using
  diverse density.
\newblock In \emph{Proceedings of the Eighteenth International Conference on
  Machine Learning}, pages 361--368, 2001.

\bibitem[Menache et~al.(2002)Menache, Mannor, and Shimkin]{menache2002q-cut}
Ishai Menache, Shie Mannor, and Nahum Shimkin.
\newblock Q-cut - dynamic discovery of sub-goals in reinforcement learning.
\newblock In \emph{Proceedings of the 13th European Conference on Machine
  Learning}, 2002.

\bibitem[Peter~Buchholz(2014)]{Buchholz2014}
Iryna~Felko Peter~Buchholz, Jan~Kriege.
\newblock \emph{Input Modeling with Phase-Type Distributions and Markov
  Models}, chapter Phase-Type Distributions.
\newblock Springer, 2014.

\bibitem[Puterman(1994)]{Puterman94}
Martin~L. Puterman.
\newblock \emph{Markov Decision Processes: Discrete Stochastic Dynamic
  Programming}.
\newblock John Wiley \& Sons, Inc., New York, NY, USA, 1st edition, 1994.
\newblock ISBN 0471619779.

\bibitem[Sairamesh and Ravindran(2012)]{Sairamesh:2011:OE:2341664.2341687}
Munu Sairamesh and Balaraman Ravindran.
\newblock Options with exceptions.
\newblock In \emph{Proceedings of the 9th European Conference on Recent
  Advances in Reinforcement Learning}, EWRL'11, pages 165--176, Berlin,
  Heidelberg, 2012. Springer-Verlag.
\newblock ISBN 978-3-642-29945-2.
\newblock \doi{10.1007/978-3-642-29946-9_18}.

\bibitem[Sch{\"a}l(1992)]{Schal92}
M.~Sch{\"a}l.
\newblock On the second optimality equation for semi-markov decision models.
\newblock \emph{Mathematics of Operations Research}, 1992.

\bibitem[Sorg and Singh(2010)]{sorg2010linear}
Jonathan Sorg and Satinder~P. Singh.
\newblock {Linear Options}.
\newblock In \emph{AAMAS}, pages 31--38, 2010.

\bibitem[Sutton and Barto(1998)]{sutton1998introduction}
Richard~S. Sutton and Andrew~G. Barto.
\newblock \emph{Introduction to Reinforcement Learning}.
\newblock MIT Press, Cambridge, MA, USA, 1st edition, 1998.
\newblock ISBN 0262193981.

\bibitem[Sutton et~al.(1999)Sutton, Precup, and Singh]{sutton1999between}
Richard~S. Sutton, Doina Precup, and Satinder Singh.
\newblock Between mdps and semi-mdps: A framework for temporal abstraction in
  reinforcement learning.
\newblock \emph{Artificial Intelligence}, 112\penalty0 (1):\penalty0 181 --
  211, 1999.

\bibitem[Tessler et~al.(2016)Tessler, Givony, Zahavy, Mankowitz, and
  Mannor]{DBLP:journals/corr/TesslerGZMM16}
Chen Tessler, Shahar Givony, Tom Zahavy, Daniel~J. Mankowitz, and Shie Mannor.
\newblock A deep hierarchical approach to lifelong learning in minecraft.
\newblock \emph{CoRR}, abs/1604.07255, 2016.

\bibitem[Tewari and Bartlett(2007)]{Tewari2007}
Ambuj Tewari and Peter~L. Bartlett.
\newblock \emph{Bounded Parameter Markov Decision Processes with Average Reward
  Criterion}, pages 263--277.
\newblock Springer Berlin Heidelberg, Berlin, Heidelberg, 2007.
\newblock ISBN 978-3-540-72927-3.
\newblock \doi{10.1007/978-3-540-72927-3_20}.

\bibitem[Vovk et~al.(2005)Vovk, Gammerman, and Shafer]{Vovk:2005:ALR:1062391}
Vladimir Vovk, Alex Gammerman, and Glenn Shafer.
\newblock \emph{Algorithmic Learning in a Random World}.
\newblock Springer-Verlag New York, Inc., Secaucus, NJ, USA, 2005.
\newblock ISBN 0387001522.

\bibitem[Wainwright(2015)]{Wainwright15}
Martin Wainwright.
\newblock \emph{Course on Mathematical Statistics}, chapter 2: Basic tail and
  concentration bounds.
\newblock University of California at Berkeley, Department of Statistics, 2015.

\end{thebibliography}
\end{document}